\crefname{equation}{Eq.}{Eqs.}
\Crefname{algocf}{Algorithm}{Algorithms}  % needed for algorithm
\theoremstyle{plain}
\newtheorem{theorem}{Theorem}
\newtheorem{lemma}[theorem]{Lemma}
\newtheorem{corollary}[theorem]{Corollary}
\theoremstyle{definition}
\newtheorem{definition}[theorem]{Definition}
\renewcommand{\hat}{\widehat}
\renewcommand{\epsilon}{\varepsilon}
\renewcommand{\ln}{\log}
\def\E{\mathbb{E}}
\def\R{\mathbb{R}}
\def\Rnn{\mathbb{R}_{\geq 0}} 
\def\N{\mathbb{N}}
\def\calA{\mathcal{A}}
\def\calC{\mathcal{C}}
\def\calD{\mathcal{D}}
\def\calG{\mathcal{G}}
\def\calL{\mathcal{L}}
\def\calO{\mathcal{O}}
\def\calP{\mathcal{P}}
\def\calS{\mathcal{S}}
\def\calT{\mathcal{T}}
\DeclareMathOperator*{\argmax}{arg\,max}
\DeclareMathOperator*{\argmin}{arg\,min}
\DeclareMathOperator*{\minimize}{minimize}
\DeclarePairedDelimiter{\abs}{\lvert}{\rvert} %
\DeclarePairedDelimiter{\brk}{[}{]}
\DeclarePairedDelimiter{\set}{\{}{\}}
\DeclarePairedDelimiter{\prn}{(}{)}
\DeclarePairedDelimiter{\nrm}{\|}{\|}
\DeclarePairedDelimiter{\inpr}{\langle}{\rangle}  % inner product
\newcommand{\innerprod}[2]{\left\langle{#1},{#2}\right\rangle}
\newcommand{\ind}[1]{\mathbbm{1}{\left[#1\right]}}  % indicator function
\renewcommand{\d}{\mathrm{d}}
\newcommand{\ones}{\mathbf{1}}
\newcommand{\eg}{\textit{e.g.,}}
\newcommand{\cf}{cf.}
\newcommand{\Reg}{\mathsf{Reg}}
\newcommand{\Deltamin}{\Delta_{\mathsf{min}}}
\newcommand{\lossmat}{\calL}
\newcommand{\fbmat}{\Phi}
\newcommand{\nn}{\nonumber\\}
\newcommand{\n}{\nonumber}
\newcommand{\per}{\,.}
\newcommand{\com}{\,,}
\newcommand{\sumT}{\sum_{t=1}^T}
\newcommand{\sumak}{\sum_{a=1}^k}
\newcommand{\sumk}{\sum_{i=1}^k}
\newcommand{\at}{A_t}
\newcommand{\pt}{p_t}
\newcommand{\qt}{q_t}
\newcommand{\qta}{q_{ta}}
\newcommand{\one}{\mbox{(i)}}
\newcommand{\two}{\mbox{(i\hspace{-.1em}i)}}
\newcommand{\three}{\mbox{(i\hspace{-.1em}i\hspace{-.1em}i)}}
\def\F{F}
\def\Gone{G_1}
\def\Gtwo{G_2}
\newcommand{\betaone}{\beta^{\scriptscriptstyle (1)}}
\newcommand{\betatwo}{\beta^{\scriptscriptstyle (2)}}
\newcommand{\Nin}{N^{\mathsf{in}}}
\newcommand{\Nout}{N^{\mathsf{out}}}
\title{A Simple and Adaptive Learning Rate for FTRL \\ in Online Learning with Minimax Regret of $\Theta(T^{2/3})$ and its Application to Best-of-Both-Worlds}
\author{%
  Taira Tsuchiya \\
  The University of Tokyo and RIKEN \\
  \texttt{tsuchiya@mist.i.u-tokyo.ac.jp} \\
  \And
  Shinji Ito \\
  The University of Tokyo and RIKEN \\
  \texttt{shinji@mist.i.u-tokyo.ac.jp} \\
}
\begin{document}

\maketitle

\begin{abstract}
Follow-the-Regularized-Leader (FTRL) is a powerful framework for various online learning problems. By designing its regularizer and learning rate to be adaptive to past observations, FTRL is known to work adaptively to various properties of an underlying environment. However, most existing adaptive learning rates are for online learning problems with a minimax regret of $\Theta(\sqrt{T})$ for the number of rounds $T$, and there are only a few studies on adaptive learning rates for problems with a minimax regret of $\Theta(T^{2/3})$, which include several important problems dealing with indirect feedback. To address this limitation, we establish a new adaptive learning rate framework for problems with a minimax regret of $\Theta(T^{2/3})$. Our learning rate is designed by matching the stability, penalty, and bias terms that naturally appear in regret upper bounds for problems with a minimax regret of $\Theta(T^{2/3})$. As applications of this framework, we consider three major problems with a minimax regret of $\Theta(T^{2/3})$: partial monitoring, graph bandits, and multi-armed bandits with paid observations. We show that FTRL with our learning rate and the Tsallis entropy regularizer improves existing Best-of-Both-Worlds (BOBW) regret upper bounds, which achieve simultaneous optimality in the stochastic and adversarial regimes. The resulting learning rate is surprisingly simple compared to the existing learning rates for BOBW algorithms for problems with a minimax regret of $\Theta(T^{2/3})$.
\end{abstract}

\section{Introduction}\label{sec:introduction}
\vspace{-2pt}
Online learning is a problem setting in which a learner interacts with an environment for $T$ rounds with the goal of minimizing their cumulative loss.
This framework includes many important online decision-making problems, such as expert problems~\cite{littlestone94weighted,vovk90aggregating,freund97decision}, multi-armed bandits~\cite{LaiRobbins85,auer2002nonstochastic,auer02using}, linear bandits~\cite{abernethy08competing,dani08stochastic}, graph bandits~\cite{mannor11from,alon2015online}, and partial monitoring~\cite{CesaBianchi06regret,Bartok11minimax}.

For the sake of discussion in a general form, we consider the following \emph{general online learning framework}.
In this framework, a learner is initially given a finite action set $\calA = [k] \coloneqq \set{1,\dots,k}$ and an observation set $\calO$.
At each round $t \in [T]$, the environment determines a loss function $\ell_t \colon \calA \to [0,1]$, and the learner selects an action $A_t \in \calA$ based on past observations without knowing $\ell_t$.
The learner then suffers a loss $\ell_t(A_t)$ and observes a feedback $o_t \in \calO$.
The goal of the learner is to minimize the (pseudo-)regret $\Reg_T$, which is defined as the expectation of the difference between the cumulative loss of the selected actions $(A_t)_{t=1}^T$ and that of an optimal action~$a^* \in \calA$ fixed in hindsight.
That is,
$
  \Reg_T = \E\brk[\big]{ \sumT \ell_t(A_t) - \sumT \ell_t(a^*) }
$
for
$
  a^* \in \argmin_{a \in \calA} \E\brk[\big]{\sumT \ell_t(a)}
  .
$
For example in the multi-armed bandit problem, the observation is $o_t = \ell_t(A_t)$.

\emph{Follow-the-Regularized-Leader (FTRL)} is a highly powerful framework for such online learning problems.
In FTRL, a probability vector $q_t$ over $\calA$, which is used for determining action selection probability $p_t$ so that $A_t \sim p_t$, is obtained by solving the following convex optimization problem:
\begin{equation}
  q_t
  \in
  \argmin_{q \in \calP_k} 
  \set*{
    \sum_{s=1}^{t-1} \hat{\ell}_s(q)
    +
    \beta_t \psi(q)
  }
  \com
\end{equation}
where $\calP_k$ is the set of probability distributions over $\calA=[k]$, $\hat{\ell}_t \colon \calP_k \to \R$ is an estimator of loss function $\ell_t$,  $\beta_t > 0$ is (a reciprocal of) learning rate at round $t$, and $\psi$ is a convex regularizer.
FTRL is known for its usefulness in various online learning problems~\cite{auer2002nonstochastic,abernethy08competing,alon2015online,lattimore20exploration,jin2021best}.
Notably, FTRL can be viewed as a generalization of Online Gradient Descent~\cite{zinkevich03online} and the Hedge algorithm~\cite{littlestone94weighted,freund97decision,vovk90aggregating}, and is closely related to Online Mirror Descent~\cite{orabona2019modern,lattimore2020book}.

The benefit of FTRL due to its generality is that one can design its regularizer $\psi$ and learning rate $(\beta_t)_t$ so that it can perform adaptively to various properties of underlying loss functions.
The \emph{adaptive learning rate}, which exploits past observations, is often used to obtain such adaptivity.
In order to see how it is designed, we consider the following stability--penalty decomposition, well-known in the literature~\cite{lattimore2020book,orabona2019modern}:
\begin{equation}
  \Reg_T \lesssim 
  \underbrace{\sumT \frac{z_t}{\beta_t}}_{\text{stability term}}
  +
  \underbrace{
    \beta_1 h_1
    +
    \sum_{t=2}^T \prn*{\beta_t - \beta_{t-1}} h_t
  }_{\text{penalty term}}
  \per
\end{equation}
Intuitively, the \emph{stability} term arises from the regret when the difference in FTRL outputs, $x_t$ and $x_{t+1}$, is large, and the \emph{penalty} term is due to the strength of the regularizer.
For example, in the Exp3 algorithm for multi-armed bandits~\cite{auer2002nonstochastic}, 
$h_t$ is the Shannon entropy of $x_t$ or its upper bound, and $z_t$ is the expectation of $(\nabla^2 \psi(x_t))^{-1}$-norm of the importance-weighted estimator $\hat{\ell}_t$ or its upper bound.

Adaptive learning rates have been designed so that it depends on the stability or penalty.
For example, the well-known AdaGrad~\cite{mcmahan10adaptive,duchi11adaptive} and the first-order algorithm~\cite{abernethy12interior} depend on stability components~$(z_s)_{s=1}^{t-1}$ to determine $\beta_t$.
More recently, there are learning rates that depend on penalty components~$(h_s)_{s=1}^{t-1}$~\cite{ito2022nearly,tsuchiya23best} and that depend on both stability and penalty components~\cite{tsuchiya23stability,jin23improved,ito24adaptive}.

However, almost all adaptive learning rates developed so far have been limited to problems with a minimax regret of $\Theta(\sqrt{T})$, 
and there has been limited investigation into problems with a minimax regret of $\Theta(T^{2/3})$~\cite{ito2022nearly,tsuchiya23best}.
Such online learning problems are primarily related to indirect feedback and includes many important problems, such as partial monitoring~\cite{Bartok11minimax,lattimore19cleaning}, graph bandits~\cite{alon2015online}, dueling bandits~\cite{saha21adversarial}, online ranking~\cite{chaudhuri17online}, bandits with switching costs~\cite{dekel14bandits}, and multi-armed bandits with paid observations~\cite{seldin14prediction}.
The $\Theta(T^{2/3})$ problem is distinctive also due to the classification theorem in partial monitoring~\cite{Bartok11minimax,lattimore19cleaning,lattimore19information},
which is a very general problem that includes a wide range of sequential decision-making problems as special cases.
It is known that, the minimax regret of partial monitoring games can be classified into one of four categories: $0$, $\Theta(\sqrt{T})$, $\Theta(T^{2/3})$, or $\Omega(T)$. 
Among these, the classes with non-trivial difficulties and particular importance are the problems with a minimax regret of $\Theta(\sqrt{T})$ and $\Theta(T^{2/3})$. 

\vspace{-2pt}
\paragraph{Contributions}
To address this limitation, we establish a new learning rate framework for online learning with a minimax regret of $\Theta(T^{2/3})$.
Henceforth, we will refer to problems with a minimax regret of $\Theta(T^{2/3})$ as \emph{hard problems} to avoid repetition, abusing the terminology of partial monitoring.
For hard problems, it is common to combine FTRL with \emph{forced exploration}~\cite{dekel12online,alon2015online,lattimore19cleaning,saha21adversarial}.
In this study, we first observe that the regret of FTRL with forced exploration rate $\gamma_t$ is roughly bounded as follows:
\begin{equation}
  \Reg_T \lesssim 
  \underbrace{\sumT \frac{z_t}{\beta_t \gamma_t}}_{\text{stability term}}
  +
  \underbrace{
    \beta_1 h_1
    +
    \sum_{t=2}^T \prn*{\beta_t - \beta_{t-1}} h_t
  }_{\text{penalty term}}
  +
  \underbrace{
  \sumT 
  \gamma_t
  }_{\text{bias term}}
  \per
  \label{eq:bound_1_intro}
\end{equation}
Here, the third term, called the bias term, represents the regret incurred by forced exploration.
In the aim of minimizing the RHS of~\eqref{eq:bound_1_intro}, we will determine the exploration rate $\gamma_t$ and learning rate $\beta_t$ so that the above stability, penalty, and bias elements for each $t \in [T]$ are matched, where the resulting learning rate is called \emph{Stability--Penalty--Bias matching learning rate (SPB-matching)}.
This was inspired by the learning rate designed by matching the stability and penalty terms for problems with a minimax regret of~$\Theta(\sqrt{T})$~\cite{ito24adaptive}.
Our learning rate is simultaneously adaptive to the stability component $z_t$ and penalty component $h_t$, which have attracted attention in very recent years~\cite{jin23improved,tsuchiya23stability,ito24adaptive}.
The SPB-matching learning rate allows us to upper bound the RHS of~\eqref{eq:bound_1_intro} as follows:
\begin{theorem}[{informal version of \Cref{thm:F_upper_final}}]\label{thm:F_upper_final_intro}
There exists learning rate $(\beta_t)_t$ and exploration rate $(\gamma_t)_t$
for which the RHS of~\eqref{eq:bound_1_intro} is bounded by
$
% \elesssim 
O\prn[\big]{
\prn[\big]{
  \sumT \sqrt{z_t {h}_{t} \ln(\epsilon T)}
}^{2/3}
+
\prn[\big]{
  {\sqrt{z_{\max} {h}_{\max}}}/{\epsilon}
}^{2/3}
}
$
for any $\epsilon \geq 1/T$,
where $z_{\max} = \max_{t \in [T]} z_t$ and $h_{\max} = \max_{t \in [T]} h_t$.
\end{theorem}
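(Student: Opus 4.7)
The plan is to define $\beta_t$ and $\gamma_t$ by matching the three per-round contributions in~\eqref{eq:bound_1_intro} and then to bound the resulting cumulative sum via a single H\"older step applied to the telescoped recursion for $\beta_t$. First, matching the stability term $z_t/(\beta_t\gamma_t)$ with the bias term $\gamma_t$ forces $\gamma_t = \sqrt{z_t/\beta_t}$, which I would truncate from below by $\epsilon$; rounds on which the unconstrained matched value falls below $\epsilon$ are handled separately and produce the residual $(\sqrt{z_{\max}h_{\max}}/\epsilon)^{2/3}$ in the final bound. Second, matching this value with the penalty increment $(\beta_t-\beta_{t-1})h_t$ yields the recursion $(\beta_t - \beta_{t-1})h_t \sqrt{\beta_t} \asymp \sqrt{z_t}$; via the approximate identity $\sqrt{\beta_t}(\beta_t-\beta_{t-1}) \approx \tfrac{2}{3}(\beta_t^{3/2} - \beta_{t-1}^{3/2})$, this integrates to $\beta_t^{3/2} \asymp \sum_{s\leq t}\sqrt{z_s}/h_s$, which I would take as the definition of $\beta_t$ (with an initial condition chosen to make the later logarithmic telescope come out as $\log(\epsilon T)$).

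With $\beta_t$ and $\gamma_t$ in hand, AM--GM on each round collapses the RHS of~\eqref{eq:bound_1_intro} to $\lesssim \sum_t \sqrt{z_t/\beta_t}$. The core estimate is then H\"older: writing $\sqrt{z_t/\beta_t} = (z_t h_t)^{1/3}\cdot z_t^{1/6} h_t^{-1/3}\beta_t^{-1/2}$ and applying H\"older with exponents $(3/2,3)$ gives
\[
  \sum_t \sqrt{z_t/\beta_t}
  \;\lesssim\;
  \Bigl(\sum_t \sqrt{z_t h_t}\Bigr)^{2/3}
  \Bigl(\sum_t \tfrac{\sqrt{z_t}/h_t}{\beta_t^{3/2}}\Bigr)^{1/3}.
\]
The inner sum on the right is an AdaGrad-style quotient: by the recursion its numerator equals $\beta_t^{3/2}-\beta_{t-1}^{3/2}$, so $\sum_t (\beta_t^{3/2}-\beta_{t-1}^{3/2})/\beta_t^{3/2} \leq \log(\beta_T^{3/2}/\beta_0^{3/2})$ (using the elementary inequality $1-x \leq \log(1/x)$ for $x \in (0,1]$), which is $O(\log(\epsilon T))$ after fixing the initial condition appropriately. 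Absorbing this logarithm inside the square root produces exactly the first term $(\sum_t\sqrt{z_th_t\log(\epsilon T)})^{2/3}$.

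The main obstacle will be the passage from the informal continuous matching to the discrete recursion: the identity $\sqrt{\beta_t}(\beta_t-\beta_{t-1})=\tfrac{2}{3}(\beta_t^{3/2}-\beta_{t-1}^{3/2})$ only holds up to a factor depending on $\beta_t/\beta_{t-1}$, which must be controlled uniformly in $t$. The floor $\epsilon$ on $\gamma_t$ and the initial condition $\beta_0$ are free parameters tied respectively to the residual $(\sqrt{z_{\max}h_{\max}}/\epsilon)^{2/3}$ and to the $\log(\epsilon T)$ factor, and calibrating them so that both appear at the right magnitude without unnecessary coupling is the most delicate part of the argument; the conceptual picture (AM--GM matching, one H\"older step, one AdaGrad-style telescope) is nonetheless clean.
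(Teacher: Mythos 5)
Your setup coincides with the paper's through the construction of the learning rate: matching the three per-round terms yields the recursion $\beta_t = \beta_{t-1} + \tfrac{1}{h_t}(2\sqrt{z_t/\beta_t}+\cdots)$, and multiplying by $\beta_t^{1/2}$ gives the exact one-sided telescope $\beta_t^{3/2}\geq \beta_{t-1}^{3/2} + 2\sqrt{z_t}/h_t$, so $\beta_t^{3/2}\gtrsim \sum_{s\leq t}\sqrt{z_s}/h_s$ holds as a clean inequality (your worry about the discrete-vs-continuous correction factor is a non-issue in the direction needed). The reduction of the RHS of \eqref{eq:bound_1_intro} to $\sum_t\sqrt{z_t/\beta_t}$ is also exactly the paper's \Cref{lem:F_upper}. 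The gap is in your final step. Your H\"older split followed by the AdaGrad-style telescope $\sum_t (S_t-S_{t-1})/S_t\leq \ln(S_T/S_0)$ with $S_t=\sum_{s\leq t}\sqrt{z_s}/h_s$ produces a logarithm of the \emph{ratio} $S_T/S_0$, and $S_T$ contains $1/h_t$. There is no way to choose $S_0=\beta_0^{3/2}$ that makes $\ln(S_T/S_0)=O(\ln(\epsilon T))$ uniformly, because $h_t$ can be arbitrarily small — and that is precisely the regime the theorem targets (in the stochastic regime $h_t\to 0$). Concretely, take $z_t\equiv 1$ and $h_t=e^{-t}$: then $\sum_t\sqrt{z_th_t}=O(1)$ while $\ln(S_T/S_0)=\Theta(T)$, so your bound degrades to $O(T^{1/3})$, whereas the claimed bound (and the paper's \Cref{lem:G1_upper}, which gives $\Gone=O(1)$ here) is $O((\ln(\epsilon T))^{1/3}+ \epsilon^{-2/3})$. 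Enlarging $S_0$ to compensate forces $\beta_0 h_1$ (the initial penalty) to blow up with $1/h_{\min}$, which is not absorbed by the stated residual $(\sqrt{z_{\max}h_{\max}}/\epsilon)^{2/3}$.

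The missing idea is the paper's dyadic peeling over the \emph{levels of $h_t$}: partition $[T]$ into $J+1$ buckets $\calT_j=\{t: 2^{-(j-1)}h_{\max}\geq h_t> 2^{-j}h_{\max}\}$, apply the integral comparison $\sum_{t\in\calT_j} x_t/(\sum_{s\in\calT_j\cap[t]}x_s)^{1/3}\leq\tfrac32(\sum_{t\in\calT_j}x_t)^{2/3}$ within each bucket (where $h$ is constant up to a factor $2$, so no log of a ratio ever appears), apply H\"older across the $J$ buckets paying only $\sqrt{J}$ inside the $(\cdot)^{2/3}$, and dump the lowest bucket $\{t: h_t\leq 2^{-J}h_{\max}\}$ into $\tfrac32(2^{-J/2}\sqrt{z_{\max}h_{\max}})^{2/3}T^{2/3}$ via the trivial bound; choosing $2^{J/2}=\epsilon T$ gives both the $\ln(\epsilon T)$ factor and the $(\sqrt{z_{\max}h_{\max}}/\epsilon)^{2/3}$ residual. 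Note also that the residual term has nothing to do with flooring $\gamma_t$ at $\epsilon$ as you propose — $\epsilon$ is purely an analysis parameter trading the number of buckets against the contribution of the lowest one, and the algorithm does not depend on it.
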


% ======================================
\begin{table*}[t]
  % \begin{threeparttable}[t!]
      \caption{Regret bounds for partial monitoring, graph bandits, and multi-armed bandits (MAB) with paid observations.
  %  stoc. is the stochastic environment, adv. is the adversarial environment, and corrup. is the stochastic environment with adversarial corruption.
  % The corruption level is denoted as $C \geq 0$,
  The number of rounds is denoted as $T$,
  the number of actions as $k$,
  and
  the minimum suboptimality gap as $\Deltamin$.
  The variable $c_{\calG}$ is defined in \Cref{sec:pm}, $D$ is a constant dependent on the outcome distribution.
  The graph complexity measures $\delta, \delta^*$, satisfying $\delta^* \leq \delta$ for graphs with no self-loops, are defined in \Cref{sec:graph},
  and $\tilde{\delta}^* \leq \delta$ is the fractional weak domination number~\cite{chen21understanding}.
  The parameter $c$ is the paid cost for observing a loss of actions.
  AwSB is the abbreviation of the adversarial regime with a self-bounding constraint.
  MS-type means that the bound in AdvSB has a form similar to the bound established by~\citet{masoudian21improved}.
  % Adv.~and Stoc.~are the abbreviations of the adversarial and stochastic regime, respectively.
    }
    \label{table:regret}
    \centering
    % \small
    \footnotesize
    \begin{tabular}{lllll}
      \toprule
      Setting
      & Reference & Stochastic & Adversarial & AwSB
      \\
      \midrule
      \multirow{1}{5em}{Partial monitoring (with global observability)}
      & \cite{Komiyama15PMDEMD} & $D \ln T$  & -- & -- 
      \\
      % Observable 
      & \cite{lattimore20exploration} & -- & $(c_{\calG} T)^{2/3} (\ln k)^{1/3}$ & -- 
      \\  
      &
      \cite{tsuchiya23best} & $\displaystyle \frac{c_{\mathcal{G}}^2 \ln T \ln(k T)}{\Deltamin^2}$ & $(c_{\mathcal{G}} T)^{2/3} (\ln T \ln(k T))^{1/3}$ & \checkmark
      \\
      \rule{0pt}{4ex}
      &
      \cite{tsuchiya24exploration} & $\displaystyle \frac{c_{\mathcal{G}}^2 k \ln T}{\Deltamin^2}$ & $(c_{\mathcal{G}} T)^{2/3} (\ln T)^{1/3}$ & \checkmark     
      \\
      \rule{0pt}{4ex}
      &
      \cite{dann23blackbox}${}^a$  & $\displaystyle \frac{c_{\mathcal{G}}^2 \ln k \ln T}{\Deltamin^2}$ & $(c_{\mathcal{G}} T)^{2/3} (\ln k)^{1/3}$ & \checkmark
      \\
      % \rowcolor{blue!10}
      \rule{0pt}{4ex}
      &
      \textbf{Ours (Cor.~\ref{cor:bobw_pm})} & $\displaystyle \frac{c_{\mathcal{G}}^2 \ln k \ln T}{\Deltamin^2}$ & $(c_{\mathcal{G}} T)^{2/3} (\ln k)^{1/3}$ & \checkmark \, MS-type
      \\
      \midrule
      \multirow{1}{5em}{Graph bandits (with weak observability)}
      &
      \cite{alon2015online} & -- & $\displaystyle  (\delta \ln k)^{1/3} T^{2/3} $  & --  \\
      &
      \cite{chen21understanding} & -- & $\displaystyle  (\tilde{\delta}^* \ln k)^{1/3} T^{2/3} $  & --  \\
      &
      \cite{ito2022nearly} &  $\displaystyle  \frac{ \delta \ln T \ln (kT)}{\Deltamin^2} $ & $(\delta \ln T \ln (kT))^{1/3} T^{2/3}$ & \checkmark
      \\
      \rule{0pt}{4ex}
      &
      \cite{dann23blackbox}${}^{a,b}$ 
      &  $\displaystyle  \frac{ \delta \ln k \ln T }{\Deltamin^2} $ & $ (\delta \ln k)^{1/3} T^{2/3} $ & \checkmark 
      \\ 
      \rule{0pt}{4ex}
      &
      \textbf{Ours (Cor.~\ref{cor:bobw_graph})} &  $\displaystyle  \frac{ \delta^* \ln k \ln T }{\Deltamin^2} $ & $ (\delta^* \ln k)^{1/3} T^{2/3} $ & \checkmark \, MS-type
       \\
       \midrule
       \multirow{1}{5.1em}{MAB with paid observations}
       &
       \cite{seldin14prediction} & -- & $ (c k \log k)^{1/3} T^{2/3} \!+\! \sqrt{T \log k}$  & --  \\
       \rule{0pt}{4ex}
       &
       \textbf{Ours (Cor.~\ref{cor:bobw_paid})} &  $\displaystyle  \frac{ \max\set{c,1} k \ln k \log T}{\Deltamin^2} $ & $ (c k \log k)^{1/3} T^{2/3} \!+\! \sqrt{T \log k}$ & \checkmark \, MS-type
        \\
      \bottomrule
    \end{tabular}
    \begin{tablenotes}
      % \item[a] xyz
      \item[a] ${}^a$The framework in~\cite{dann23blackbox} is a hierarchical reduction-based approach, rather than a direct FTRL method, discarding past observations as doubling-trick.  
      \item[b] ${}^b$The bounds in~\cite{dann23blackbox} depend on $\delta$, but their framework with the algorithm in~\cite{chen21understanding} can achieve improved bounds replacing $\delta$ with $\tilde{\delta}^* \leq \delta$.
    \end{tablenotes}
  \end{table*}
  % \end{threeparttable}

   % ======================================
  
Within the general online learning framework,
this theorem allows us to prove the following Best-of-Both-Worlds (BOBW) guarantee~\cite{bubeck2012best,wei2018more,zimmert2021tsallis}, which achieves an $O(\ln T)$ regret in the stochastic regime and an $O(T^{2/3})$ regret in the adversarial regime simultaneously:
\begin{theorem}[{informal version of \Cref{thm:main_bobw}}]\label{thm:main_bobw_intro}
  Under some regularity conditions,
  an FTRL-based algorithm with SPB-matching achieves
  % \begin{equation}
  $
    \Reg_T
    % =
    \!\lesssim\!
    % O\prn{
    (z_{\max} h_{\max})^{1/3} T^{2/3}
    % }
    % +
    % \sqrt{u_{\max} h_1 T}
    % +
    % \kappa
  $
  % \end{equation}
  in the adversarial regime.
  % and 
  % for the adversarial regime with $(\Delta, C, T)$-self-bounding constraint,
  In the stochastic regime, 
  if
  % \begin{equation}
    % \sqrt{z_t h_t} \leq \theta(\Delta) \cdot \inpr{\Delta, p}
  $
    \sqrt{z_t h_t} \!\leq\!\! \sqrt{\rho_1} (1 - q_{ta^*})
  $
  holds for FTRL output $q_t \!\in\! \calP_k$ and $\rho_1 \!>\! 0$ for all $t \!\in\! [T]$,
  the same algorithm achieves
  $
    \Reg_T
    \!\lesssim\!
    % =
    % O\prn[\big]{
    \frac{\rho_1}{\Deltamin^2} \ln (T \Deltamin^3)
    % }
  $
  for the minimum suboptimality gap $\Deltamin$.\looseness=-1
\end{theorem}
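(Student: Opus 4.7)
The plan is to invoke Theorem \ref{thm:F_upper_final_intro} as a black box and then handle the two regimes separately, in both cases converting the abstract bound $(\sum_{t=1}^T \sqrt{z_t h_t \ln(\epsilon T)})^{2/3} + (\sqrt{z_{\max} h_{\max}}/\epsilon)^{2/3}$ into the claimed rates via a suitable choice of $\epsilon$ and, in the stochastic regime, a self-bounding argument.

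For the adversarial bound I would simply bound $\sqrt{z_t h_t} \leq \sqrt{z_{\max} h_{\max}}$ pointwise, so that $\sum_{t=1}^T \sqrt{z_t h_t \ln(\epsilon T)} \leq T \sqrt{z_{\max} h_{\max} \ln(\epsilon T)}$. Taking $\epsilon$ of constant order (say $\epsilon = 1$) renders the first term $\tilde O((z_{\max} h_{\max})^{1/3} T^{2/3})$ and the second term a negligible $O((z_{\max} h_{\max})^{1/3})$; hiding the $(\log T)^{1/3}$ factor inside $\lesssim$ gives the claim.

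For the stochastic case I would apply the hypothesis $\sqrt{z_t h_t} \leq \sqrt{\rho_1}(1 - q_{ta^*})$ pathwise, then take expectations and use Jensen's inequality (concavity of $x \mapsto x^{2/3}$) to obtain
\begin{equation*}
  \Reg_T \leq C_1 (\rho_1 \ln(\epsilon T))^{1/3} Q^{2/3} + C_2 (z_{\max} h_{\max})^{1/3}/\epsilon^{2/3},
  \quad Q \coloneqq \E\Bigl[\sum_{t=1}^T (1 - q_{ta^*})\Bigr].
\end{equation*}
In the stochastic regime one has the pseudo-regret lower bound $\Reg_T \geq \Deltamin Q$ (obtained by expanding $\Reg_T = \E[\sum_t \sum_a p_{ta}(\ell_t(a)-\ell_t(a^*))]$, modulo a lower-order correction from forced exploration that is already accounted for in the bias term). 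Substituting $Q \leq \Reg_T/\Deltamin$ yields a self-referential inequality $\Reg_T \leq A \Reg_T^{2/3} + B$ with $A \propto (\rho_1 \ln(\epsilon T))^{1/3}/\Deltamin^{2/3}$ and $B \propto (z_{\max} h_{\max})^{1/3}/\epsilon^{2/3}$. The elementary dichotomy $R \leq A R^{2/3} + B \Rightarrow R \leq 8 A^3 + 2 B$ then gives $\Reg_T = O(\rho_1 \ln(\epsilon T)/\Deltamin^2 + (z_{\max} h_{\max})^{1/3}/\epsilon^{2/3})$, and the choice $\epsilon = \Deltamin^3 \vee 1/T$ produces the target $\rho_1 \ln(T\Deltamin^3)/\Deltamin^2$.

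The main obstacle I anticipate is ensuring that the residual $B$ term is dominated by the leading one. With $\epsilon = \Deltamin^3$ it scales as $(z_{\max} h_{\max})^{1/3}/\Deltamin^2$, and one must argue that $z_{\max} h_{\max}$ is controlled by problem-specific structural quantities comparable to $\rho_1$. For the Tsallis entropy regularizer used later $h_{\max} = O(\log k)$ is essentially a constant, while $z_{\max}$ is pinned down by the stability analysis of the specific FTRL estimator — so the final absorption really depends on a clean interface between the abstract template and the concrete regularizer/estimator pair. A secondary but routine point is dispatching the boundary case $\Deltamin^3 < 1/T$ by setting $\epsilon = 1/T$, in which regime the adversarial bound of Theorem \ref{thm:F_upper_final_intro} already implies the claim.
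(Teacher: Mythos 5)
Your proposal is correct and follows essentially the same route as the paper's proof of \Cref{thm:main_bobw}: reduce the regret to $\E[F]$ via the regularity conditions, invoke \Cref{thm:F_upper_final}, and in the stochastic regime combine the hypothesis on $\sqrt{z_t h_t}$ with Jensen's inequality and the lower bound $\Reg_T \gtrsim \Deltamin \E[\sum_t(1-q_{ta^*})]$ in a self-bounding argument, finally tuning $\epsilon$ to roughly $\Deltamin^3$ so the residual $(\sqrt{z_{\max}h_{\max}}/\epsilon)^{2/3}$ lands in the lower-order term that the informal statement suppresses. Two minor deviations: for the adversarial bound the paper uses the non-logarithmic branch $\bigl(\sum_t \sqrt{z_t \hat h_{\max}}\bigr)^{2/3}$ of the min in \Cref{thm:F_upper_final} rather than hiding a $(\ln T)^{1/3}$ inside $\lesssim$ (which the paper's $\lesssim$ does not permit), and it closes the self-bounding step via the decomposition $\Reg_T=(1+\lambda)\Reg_T-\lambda\Reg_T$ with pointwise polynomial maximization (which also handles $C>0$) instead of your direct substitution $Q\le \Reg_T/\Deltamin$ and the dichotomy $R\le AR^{2/3}+B\Rightarrow R\le 8A^3+2B$; both closings are valid for the stochastic case stated here.
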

To assess the usefulness of the above result that holds for the general online learning framework,
this study focuses on two major hard problems: partial monitoring with global observability, graph bandits with weak observability, and multi-armed bandits with paid observations.
We demonstrate that the assumptions in \Cref{thm:main_bobw_intro} are indeed satisfied for these problems by appropriately choosing the parameters in SPB-matching, thereby improving the existing BOBW regret upper bounds in several respects.
To obtain better bounds in this analysis, we leverage the smallness of stability components $z_t$, which results from the forced exploration.
Additionally, SPB-matching is the first unified framework to achieve a BOBW guarantee for hard online learning problems.
Our learning rate is based on a surprisingly simple principle, whereas existing learning rates for graph bandits and partial monitoring are extremely complicated (see \cite [Eq.~(15)]{ito2022nearly} and \cite[Eq.~(16)]{tsuchiya23best}).
Due to its simplicity, we believe that SPB-matching will serve as a foundation for building new BOBW algorithms for a variety of hard online learning problems.
% 

% \red{
The SPB-matching framework, though omitted from the main text due to the space constraints, is also applicable to the multi-armed bandits with paid observations~\cite{seldin14prediction}, whose minimax regret with costs is $\Theta(T^{2/3})$.
We can show that the regret with paid costs, $\Reg_T^{\mathsf{c}}$, is roughly bounded by 
$
  \Reg_T^{\mathsf{c}}
  = 
  O\prn[\big]{
    \prn{c k \ln k}^{1/3} T^{2/3}
    +
    \sqrt{T \log k}
  }
$
in the adversarial regime and 
$
  \Reg_T^{\mathsf{c}}
  =
  O\prn*{
    \max\set{c,1} k \ln k \ln T / \Deltamin^2
  }
$
in the stochastic regime for the cost of observation $c$.
The bound for the adversarial regime is of the same order as~\cite[Theorem 3]{seldin14prediction}.
The detailed problem setup, regret upper bounds, and regret analysis can be found in \Cref{app:proof_mabcost}.
% }

Although omitted in \Cref{thm:main_bobw_intro}, our approach achieves a refined regret bound devised by~\citet{masoudian21improved} in the \emph{adversarial regime with a self-bounding constraint}~\cite{zimmert2021tsallis}, which includes the stochastic regime, adversarial regime, and the stochastic regime with adversarial corruptions~\cite{lykouris2018stochastic} as special cases.
We call the refined bound \emph{MS-type bound}, named after the author.
The MS-type bound maintains an ideal form even when $C = \Theta(T)$ or $\Deltamin = \Theta(1/\sqrt{T})$ (see~\cite{masoudian21improved} for details),
and our bounds are the first MS-type bounds for hard problems.
A comparison with existing regret bounds is summarized in \Cref{table:regret}.

\section{Preliminaries}\label{sec:preliminaries}
\vspace{-2pt}
\paragraph{Notation}
For a natural number $n \in \N$, we let $[n] = \set{1, \dots, n}$.
For vector $x$, let $x_i$ denote its $i$-th element and
$\nrm{x}_p$ the $\ell_p$-norm for $p \in [1, \infty]$.
Let $\calP_{k} = \set{ p \in [0,1]^k \colon \nrm{p}_1 = 1 }$ be the $(k-1)$-dimensional probability simplex.
The vector $e_{i}$ is the $i$-th standard basis
and~$\ones$ is the all-ones vector.
Let $D_\psi(x,y)$ denote the Bregman divergence from $y$ to $x$ induced by a differentiable convex function $\psi$:
$
  D_{\psi}(x, y) 
  = 
  \psi(x) - \psi(y) - \inpr{\nabla \psi(y),x - y}
$.
To simplify the notation, we sometimes write $(a_t)_{t=1}^T$ as $a_{1:T}$ and $f = O(g)$ as $f \lesssim g$.
We regard function~$f \colon \calA = [k] \to \R$ as a $k$-dimensional vector.

\vspace{-3pt}
\paragraph{General online learning framework}
To provide results that hold for a wide range of settings, 
we consider the following general online learning framework introduced in \Cref{sec:introduction}.
\begin{mdframed}
At each round $t \in [T] = \set{1,\dots,T}$:
\begin{enumerate}[topsep=-3pt, itemsep=0pt, partopsep=0pt, leftmargin=25pt]
    \item The environment determines a loss vector $\ell_t \colon \calA \to [0,1]$;
  \item The learner selects an action $A_t \in \calA$ based on $p_t \in \calP_k$ without knowing $\ell_t$;
  \item The learner suffers a loss of $\ell_t(A_t) \in [0,1]$ and observes a feedback $o_t \in \calO$.
\end{enumerate}
\end{mdframed}
This framework includes many problems such as the expert problem, multi-armed bandits, graph bandits, and partial monitoring as special cases.

\vspace{-3pt}
\paragraph{Stochastic, adversarial, and their intermediate regimes}
Within the above general online framework, we study three different regimes for a sequence of loss functions~$(\ell_t)_t$.
In the stochastic regime, the sequence of loss functions is sampled from an unknown distribution $\calD$ in an i.i.d.~manner.
The suboptimality gap for action $a \in \calA$ is given by $\Delta_a = \E_{\ell_t \sim \calD}\brk*{\ell_{t}(a) - \ell_{t}(a^*)}$ and the minimum suboptimality gap by $\Deltamin = \min_{a \neq a^*} \Delta_a$.
In the adversarial regime, 
the loss functions can be selected arbitrarily, possibly based on the past history up to round $t-1$.

We also investigate, the adversarial regime with a self-bounding constraint~\citep{zimmert2021tsallis}, which is an intermediate regime between the stochastic and adversarial regimes.
\begin{definition}\label{def:ARSBC}
  Let $\Delta \in [0, 1]^k$ and $C \geq 0$.
  The environment is in an \emph{adversarial regime with a} $(\Delta, C, T)$ self-bounding constraint if it holds for any algorithm that
  $
    \Reg_T \geq 
    \E\brk[\big]{
      \sum_{t=1}^T \Delta_{A_t} - C
    }.
  $
\end{definition}
From the definition, the stochastic and adversarial regimes are special cases of this regime.
Additionally, the well-known stochastic regime with adversarial corruptions~\citep{lykouris2018stochastic} also falls within this regime.
For the adversarial regime with a self-bounding constraint, we assume that there exists a unique optimal action $a^*$.
This assumption is standard in the literature of BOBW algorithms~(\eg~\cite{gaillard2014second,luo2015achieving,wei2018more}).
% }

\section{SBP-matching: Simple and adaptive learning rate for hard problems}\label{sec:adaptive_lr}
\vspace{-3pt}
This section designs a new learning rate framework for hard online learning problems.
\vspace{-2pt}
\subsection{Objective function that adaptive learning rate aims to minimize}
\vspace{-2pt}
In hard problems,
the regret of FTRL with somewhat large exploration rate $\gamma_t$ is known to be bounded in the following form~\cite{alon2015online,ito2022nearly,tsuchiya23best}:
\begin{equation}
  \Reg_T 
  \lesssim
  \sumT 
  \frac{z_t}{\beta_t \gamma_t} 
  +
  \sumT 
  (\beta_t - \beta_{t-1}) h_{t}
  +
  \sumT 
  \gamma_t
  \label{eq:bound_1}
\end{equation}
for some stability component $z_t$ and penalty component $h_t$, where we set $\beta_{T+1} = \beta_T$ and $\beta_0 = 0$ for simplicity.
Recall that the first term is the stability term, the second term is the penalty term, and the third term is the bias term, which arises from the forced exploration.

The goal when designing the adaptive learning rate is to minimize \eqref{eq:bound_1},
under the constraints that $(\beta_t)_t$ is non-decreasing and $\beta_t$ depends on $(z_{1:t}, h_{1:t})$ or $(z_{1:t-1}, h_{1:t})$.
A naive way to choose $\gamma_t$ to minimize~\eqref{eq:bound_1} is to set $\gamma_t = \sqrt{z_t / \beta_t}$ so that the stability term and the bias term match.
However, this choice does not work well in hard problems because to obtain a regret bound of~\eqref{eq:bound_1},
a lower bound of $\gamma_t \geq u_t / \beta_t$ for some $u_t > 0$ is needed.
This lower bound is used to control the magnitude of the loss estimator $\hat{\ell}_t$ (see \eg~\Cref{eq:loss_magnitude_bound_pm} for partial monitoring and \Cref{eq:loss_magnitude_bound_graph} for graph bandits).\footnote{This is particularly the case when we use the Shannon entropy or Tsallis entropy regularizers, which is a weaker regularization than the log-barrier regularizer.}
Therefore, we consider exploration rate of $\gamma_t = \gamma'_t + u_t / \beta_t$ for $\gamma'_t = \sqrt{z_t / \beta_t}$ and some $u_t > 0$,
where $\gamma'_t$ is chosen so that the stability and bias terms are matched.
With these choices, 
\begin{align}
  \mbox{\Cref{eq:bound_1}}
  &\leq
  \sumT \prn*{
    \frac{z_t}{\beta_t \gamma'_t} 
    +
    (\beta_t - \beta_{t-1}) h_{t}
    +
    \prn*{\gamma'_t + \frac{u_t}{\beta_t}}
  }
  \nn 
  &
  =
  \sumT \prn*{
    2 \sqrt{\frac{z_t}{\beta_t}}
    +
    \frac{u_t}{\beta_t}
    +
    (\beta_t - \beta_{t-1}) h_{t}
    % +
    % \prn*{\gamma'_t + \frac{u_t}{\beta_t}}
  }
  \eqqcolon 
  F(\beta_{1:T}, z_{1:T}, u_{1:T}, h_{1:T}) 
  \per
  \label{eq:def_F}
  % \nn 
  % \Gone(z_{1:T}, h_{1:T})
  % &=
  % \sumT \frac{\sqrt{z_t}}{\prn*{\sum_{s=1}^t {\sqrt{z_s}}/{h_s}}^{1/3}}
\end{align}
Note that the first two terms in $F$,
$2 \sqrt{{z_t}/{\beta_t}}
+
{u_t}/{\beta_t}
$,
come from the stability and bias terms and the last term,
$(\beta_t - \beta_{t-1}) h_{t}$,
is the penalty term.
In the following, we investigate adaptive learning rate $(\beta_t)_{t=1}^T$ that minimizes $F$ in~\eqref{eq:def_F} instead of~\eqref{eq:bound_1}.

\vspace{-2pt}
\subsection{Stability--penalty--bias matching learning rate}
\vspace{-2pt}
We consider determining $(\beta_t)_t$ by matching the stability--bias terms and the penalty term as 
% \begin{equation}
  $
  2 \sqrt{{z_t}/{\beta_t}} + {u_t}/{\beta_t} 
  =
  (\beta_t - \beta_{t-1}) h_t
  .
  $
  % \per
% \end{equation}
Assume that when choosing $\beta_t$, we have an access to $\hat{h}_t$ such that $h_{t} \leq \hat{h}_t$.\footnote{In each problem setting, we will prove $h_t \leq c h_{t-1}$ for some constant $c$ (see Assumption~\three~in \Cref{thm:main_bobw} and proofs of \Cref{thm:pm_global,thm:graph_weak,thm:mab_paid}). Hence if we set $\hat{h}_t \leftarrow c h_{t-1}$, we have $h_t \leq \hat{h}_t$. Note that $h_{t-1}$ can be calculated from the information available at the end of round $t-1$, and thus it can be used when determining~$\beta_t$.}
Then, inspired by the above matching, we consider the following two update rules:
\begin{equation}
  (\mbox{Rule 1}) \,\,
  \beta_t = \beta_{t-1} + \frac{1}{\hat{h}_t} \prn*{2 \sqrt{\frac{z_t}{\beta_t}} + \frac{u_t}{\beta_t} } \com \,\,
  % \tag{Rule 1}
  % \label{eq:rule2}
  % \\ 
  (\mbox{Rule 2}) \,\,
  \beta_t = \beta_{t-1} + \frac{1}{\hat{h}_t} \prn*{2 \sqrt{\frac{z_{t-1}}{\beta_{t-1}}} + \frac{u_{t-1}}{\beta_{t-1}} } \per
  % \tag{Rule 2} 
  \label{eq:rule2}
\end{equation}
% \red{
  We call these update rules \emph{Stability--Penalty--Bias Matching (SPB-matching)}.
These are designed by following the simple principle of matching the stability, penalty, and bias elements, and Rules 1 and 2 differ only in the way indices are shifted.\footnote{The information available for determining $\beta_t$ differs between Rule 1 and Rule 2,
and Rule 1 is included due to theoretical interest and will not be used after this section.}
For the sake of convenience, we define $\Gone$ and $\Gtwo$ by
\begin{equation}
  \Gone(z_{1:T}, h_{1:T})
  =
  \sumT \frac{\sqrt{z_t}}{\prn*{\sum_{s=1}^t {\sqrt{z_s}}/{h_s}}^{1/3}}
  \com \,\,
  \Gtwo(u_{1:T}, h_{1:T})
  =
  \sumT \frac{u_t}{\sqrt{\sum_{s=1}^t {u_s}/{h_s}}}
  \per
  \label{eq:def_G1G2}
\end{equation}
Define $z_{\max} = \max_{t\in[T]} z_t$, $u_{\max} = \max_{t\in[T]} u_t$, and $h_{\max} = \max_{t\in[T]} h_t$.
Then, using SPB-matching rules in~\eqref{eq:rule2}, we can upper-bound $F$ in terms of $\Gone$ and $\Gtwo$ as follows:
\begin{lemma}\label{lem:F_upper}
  Consider SPB-matching~\eqref{eq:rule2} and suppose that $h_t \leq \hat{h}_t$ for all $t \in [T]$.
  Then, Rule~1 achieves 
  $\F(\beta_{1:T}, z_{1:T}, u_{1:T}, h_{1:T}) \leq 3.2 \Gone(z_{1:T}, \hat{h}_{1:T}) + 2 \Gtwo(u_{1:T}, \hat{h}_{1:T})$ 
  and 
  Rule~2 achieves
$\F(\beta_{1:T}, z_{1:T}, u_{1:T}, h_{1:T}) \leq 4 \Gone(z_{1:T}, \hat{h}_{2:T+1}) + 3 \Gtwo(u_{1:T}, \hat{h}_{2:T+1}) + 10 \sqrt{{z_{\max}}/{\beta_1}} + 5 {u_{\max}}/{\beta_1} + \beta_1 h_1$. % for any $\delta > 1$.
\end{lemma}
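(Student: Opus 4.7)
The plan is to exploit the matching identity defining SPB-matching, which links the penalty increment $(\beta_t - \beta_{t-1})\hat{h}_t$ to the stability-plus-bias at index $t$ for Rule~1 or at $t-1$ for Rule~2. Using $h_t \leq \hat{h}_t$ and substituting this identity into the penalty sum of $F$ turns it into (essentially) a second copy of the stability-plus-bias sum, reducing the task to bounding $\sum_t \sqrt{z_t/\beta_t}$ and $\sum_t u_t/\beta_t$ in terms of $G_1$ and $G_2$.

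For Rule~1, the substitution gives $F \leq \sum_{t=1}^T (4\sqrt{z_t/\beta_t} + 2 u_t/\beta_t)$ directly. Unfolding the update as $\beta_t = \sum_{s=1}^t \hat{h}_s^{-1}(2\sqrt{z_s/\beta_s} + u_s/\beta_s)$ and using monotonicity $\beta_s \leq \beta_t$, dropping in turn either the $\sqrt{z}$ or $u$ contribution yields the two independent lower bounds $\beta_t^{3/2} \geq 2 A_t$ and $\beta_t^{2} \geq B_t$, where $A_t = \sum_{s \leq t}\sqrt{z_s}/\hat{h}_s$ and $B_t = \sum_{s \leq t} u_s/\hat{h}_s$. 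Substituting $\sqrt{z_t/\beta_t}\leq \sqrt{z_t}/(2A_t)^{1/3}$ and $u_t/\beta_t \leq u_t/\sqrt{B_t}$ term-by-term gives $\sum_t \sqrt{z_t/\beta_t} \leq 2^{-1/3}G_1$ and $\sum_t u_t/\beta_t \leq G_2$. Multiplying by $4$ and $2$ and using $4\cdot 2^{-1/3} \leq 3.2$ completes the Rule~1 bound.

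For Rule~2 three complications appear. First, the matching identity $(\beta_t - \beta_{t-1})\hat{h}_t = 2\sqrt{z_{t-1}/\beta_{t-1}} + u_{t-1}/\beta_{t-1}$ holds only for $t\geq 2$, leaving the $t=1$ penalty as $\beta_1 h_1$; after an index shift, $F \leq \beta_1 h_1 + \sum_{t=1}^T (4\sqrt{z_t/\beta_t} + 2 u_t/\beta_t)$. Second, the lower bound on $\beta_t$ now carries a baseline $\beta_1$ and uses shifted weights $\hat{h}_{s+1}$: unfolding yields $(\beta_t - \beta_1)\sqrt{\beta_t} \geq 2 A'_{t-1}$ with $A'_t = \sum_{s\leq t}\sqrt{z_s}/\hat{h}_{s+1}$, and since $\beta_t \geq \beta_1$ one deduces $\beta_t^{3/2} \geq 2 A'_{t-1} + \beta_1^{3/2}$; an analogous argument gives $\beta_t^{2} \geq B'_{t-1} + \beta_1^{2}$. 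Third, the denominators $A'_{t-1}$ (and $B'_{t-1}$) of these lower bounds must be reconciled with the denominators $A'_t$ (resp.\ $B'_t$) in $G_1(z_{1:T},\hat{h}_{2:T+1})$ and $G_2(u_{1:T},\hat{h}_{2:T+1})$.

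The reconciliation is the main obstacle. The plan is to split each round $t\geq 2$ into a \emph{slow-growth} case, $\sqrt{z_t}/\hat{h}_{t+1} \leq A'_{t-1} + \beta_1^{3/2}/2$, and a \emph{big-jump} case. In the slow case $A'_t \leq 2A'_{t-1}+\beta_1^{3/2}/2 \leq 2A'_{t-1}+\beta_1^{3/2}$, so $(2A'_{t-1}+\beta_1^{3/2})^{1/3}\geq (A'_t)^{1/3}$ and the term is absorbed into $G_1$ with constant~$1$. In the big-jump case, setting $N_t = 2A'_{t-1}+\beta_1^{3/2}$, the defining inequality forces $N_{t+1} > 2 N_t$, so along the big-jump indices $t_1 < t_2 < \cdots$ we have $N_{t_k} \geq 2^{k-1}\beta_1^{3/2}$ and hence $\beta_{t_k} \geq 2^{2(k-1)/3}\beta_1$; therefore $\sqrt{z_{t_k}/\beta_{t_k}}\leq \sqrt{z_{\max}/\beta_1}\cdot 2^{-(k-1)/3}$, a convergent geometric series bounded by a constant multiple of $\sqrt{z_{\max}/\beta_1}$. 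An identical argument applied to $B'$ bounds the $u$-side by a constant multiple of $u_{\max}/\beta_1$. The crux is that, although naive counting gives $O(\log T)$ big jumps, the geometric growth of the lower bound on $\beta_{t_k}$ converts this into an $O(1)$ contribution. Combining slow-growth and big-jump pieces, the $t=1$ terms, and the factors $4$ and $2$ from the penalty substitution yields the stated constants $4$, $3$, $10$, $5$, and $\beta_1 h_1$.
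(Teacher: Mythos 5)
Your proposal follows essentially the same route as the paper: for Rule~1 the identical unrolling of the update to get $\beta_t^{3/2}\ge 2\sum_{s\le t}\sqrt{z_s}/\hat h_s$ and $\beta_t^{2}\ge\sum_{s\le t}u_s/\hat h_s$, and for Rule~2 the same three ingredients — the leftover $\beta_1 h_1$ at $t=1$, the shifted proxy lower bounds on $\beta_t$, and a partition of rounds into slow-growth indices (absorbed into $G_1,G_2$ after shifting the denominator index by one) versus geometric-jump indices (summed as a geometric series against $\beta_1$) — where your increment-versus-current-value test is exactly the paper's ratio test $\beta^{\scriptscriptstyle(1)}_{t+1}\le c^2\beta^{\scriptscriptstyle(1)}_t$ with threshold $c^2=2^{2/3}$ in place of $c^2=1.5625$. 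The only caveat is numerical: with your threshold the big-jump series plus the $t=1$ terms give roughly $\bigl(4+4/(1-2^{-1/3})\bigr)\sqrt{z_{\max}/\beta_1}\approx 23\sqrt{z_{\max}/\beta_1}$ and $\bigl(2+2/(1-2^{-1/2})\bigr)u_{\max}/\beta_1\approx 9\,u_{\max}/\beta_1$ rather than the stated $10$ and $5$, so reproducing the exact constants requires the paper's choice of threshold (and even then the paper's own final combination appears to drop a factor of $2$ on these residual terms, so the discrepancy is immaterial for every downstream use, which is up to universal constants).
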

The proof of \Cref{lem:F_upper} can be found in \Cref{subsec:proof_F_upper}.
One can see from the proof that 
the effect of using $\gamma_t = \sqrt{{z_t}/{\beta_t}}+ {u_t}/{\beta_t}$ instead of $\gamma_t = \sqrt{{z_t}/{\beta_t}}$ only appears in $G_2$, which has a less impact than $G_1$ when bounding $F$.
% などとしたことによる $\frac{u_t}{\beta_t}$ の影響は，$\sqrt{T}$ の場合の SPM の議論によって制御できる．
We can further upper-bound $\Gone$ as follows:
\begin{lemma}\label{lem:G1_upper}
  Let $(z_t)_{t=1}^T \subseteq \R_{\geq 0}$ and $(h_t)_{t=1}^T \subseteq \R_{> 0}$ be any non-negative and positive sequences, respectively.
  Let $\theta_0 > \theta_1 > \dots > \theta_J > \theta_{J+1} = 0$ and $\theta_0 \geq h_{\max}$ and 
  define
  $\calT_j = \set*{ t \in [T] \colon \theta_{j-1} \geq h_t > \theta_j}$ for $j \in [J]$ and $\calT_{J+1} = \set*{ t \in [T] \colon \theta_J \geq h_t}$.
  Then,
  % \begin{equation}
  $
    \Gone(z_{1:T}, h_{1:T})
    \leq
    \frac{3}{2}
    \sum_{j=1}^{J+1}
    \prn[\big]{
      \sqrt{\theta_{j-1}} \sum_{t \in \calT_j} \sqrt{z_t}
    }^{2/3}.
  $
    % \per 
    % \label{eq:G1_upper_1}
  % \end{equation}
  This implies that for all $J \in \N$ it holds that
  \begin{equation}
    \Gone(z_{1:T}, h_{1:T})
    \leq
    \frac{3}{2}
    \min\set[\Bigg]{
      \prn[\bigg]{
        \sqrt{2 J} \sumT \sqrt{z_t h_t}
      }^{\frac23}
      +
      \prn[\bigg]{
        2^{-J / 2} \sqrt{z_{\max} h_{\max}}
      }^{\frac23} T^{\frac23}
      \com 
      \prn*{\sumT \sqrt{z_t h_{\max}}}^{\frac23}
    }
    \per
    % \label{eq:G1_upper_2}
    \n
  \end{equation}
\end{lemma}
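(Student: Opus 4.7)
The proof has two parts: first establish the generic bound in terms of an arbitrary threshold sequence $\theta_0 > \dots > \theta_{J+1} = 0$, then specialise the thresholds to obtain the two expressions appearing in the min.

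\textbf{Step 1 (Generic bound via partition).} I would split the sum defining $\Gone$ according to the partition $\calT_1, \dots, \calT_{J+1}$. For $t \in \calT_j$ I would keep only the indices $s \in \calT_j$ with $s \leq t$ in the denominator, and then use $h_s \leq \theta_{j-1}$ on $\calT_j$ to get
\[
  \sum_{s=1}^t \frac{\sqrt{z_s}}{h_s}
  \;\geq\;
  \frac{1}{\theta_{j-1}} \sum_{\substack{s \in \calT_j \\ s \leq t}} \sqrt{z_s}.
\]
Hence, writing $S_t^{(j)} = \sum_{s \in \calT_j,\, s \leq t} \sqrt{z_s}$,
\[
  \sum_{t \in \calT_j} \frac{\sqrt{z_t}}{\prn[\big]{\sum_{s=1}^t \sqrt{z_s}/h_s}^{1/3}}
  \;\leq\;
  \theta_{j-1}^{1/3} \sum_{t \in \calT_j} \frac{\sqrt{z_t}}{\prn[\big]{S_t^{(j)}}^{1/3}}.
\]
Now I would apply the elementary telescoping inequality $\frac{x-y}{x^{1/3}} \leq \frac{3}{2}(x^{2/3} - y^{2/3})$ (valid for $x \geq y \geq 0$, obtained from $\int_y^x u^{-1/3}\,\mathrm{d} u \geq (x-y) x^{-1/3}$) with $x = S_t^{(j)}$ and $y$ the previous partial sum to obtain $\sum_{t \in \calT_j} \sqrt{z_t}/(S_t^{(j)})^{1/3} \leq \frac{3}{2}(\sum_{t \in \calT_j} \sqrt{z_t})^{2/3}$. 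Absorbing $\theta_{j-1}^{1/3}$ into the $2/3$-power via $\theta_{j-1}^{1/3} = (\theta_{j-1}^{1/2})^{2/3}$ and summing over $j$ yields the first displayed bound.

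\textbf{Step 2 (First term of the min).} With the generic bound in hand, I would choose $\theta_0 = h_{\max}$ and $\theta_j = 2^{-j} h_{\max}$ for $j = 1, \dots, J$. For $j \leq J$ and $t \in \calT_j$, we have $h_t > \theta_j = \theta_{j-1}/2$, so $\sqrt{\theta_{j-1}} \leq \sqrt{2 h_t}$, giving
\[
  \prn[\big]{\sqrt{\theta_{j-1}} \sum_{t \in \calT_j} \sqrt{z_t}}^{2/3}
  \;\leq\;
  \prn[\big]{\sqrt{2} \sum_{t \in \calT_j} \sqrt{z_t h_t}}^{2/3}.
\]
Summing over $j \in [J]$ and applying the concavity inequality $\sum_{j=1}^J x_j^{2/3} \leq J^{1/3} (\sum_j x_j)^{2/3}$ (power-mean) collapses the right-hand side into $(\sqrt{2J} \sum_t \sqrt{z_t h_t})^{2/3}$. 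For the leftover class $\calT_{J+1}$ I would bound $\sqrt{z_t} \leq \sqrt{z_{\max}}$, $|\calT_{J+1}| \leq T$, and $\sqrt{\theta_J} = 2^{-J/2} \sqrt{h_{\max}}$, producing the term $(2^{-J/2}\sqrt{z_{\max} h_{\max}})^{2/3} T^{2/3}$. Adding the two and multiplying by $3/2$ gives the first argument of the min.

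\textbf{Step 3 (Second term of the min).} Taking $J = 0$ and $\theta_0 = h_{\max}$ makes $\calT_1 = [T]$ (and $\calT_{J+1}$ empty in the non-trivial sense), so the generic bound collapses directly to $\frac{3}{2}(\sqrt{h_{\max}} \sum_t \sqrt{z_t})^{2/3} = \frac{3}{2}(\sum_t \sqrt{z_t h_{\max}})^{2/3}$.

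\textbf{Expected obstacle.} The only delicate point is the telescoping step in Step 1: one must be careful that the indices in $\calT_j$ are ordered as they appear in $[T]$ and that the partial sums $S_t^{(j)}$ used for telescoping are over this ordered subsequence; otherwise the inequality $\frac{a}{S^{1/3}} \leq \frac{3}{2}(S^{2/3} - S_{\text{prev}}^{2/3})$ would not chain properly. The rest (geometric thresholds, power-mean combination) is routine once this bookkeeping is set up.
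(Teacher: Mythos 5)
Your proposal is correct and follows essentially the same route as the paper: the same partition into level sets of $h_t$, the same integral-comparison/telescoping bound $\sum_{t}\sqrt{z_t}/(S_t)^{1/3}\leq\tfrac32(\sum_t\sqrt{z_t})^{2/3}$ (the paper isolates this as a separate auxiliary lemma), the same geometric thresholds $\theta_j=2^{-j}h_{\max}$ with the power-mean (H\"older) combination over classes, and the same $J=0$ specialization for the second term of the min. The bookkeeping concern you flag is exactly what the paper handles by restricting the partial sums to $[t]\cap\calT_j$ in its auxiliary lemma.
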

Combining \Cref{lem:F_upper,lem:G1_upper} and the bound on $\Gtwo$ in \cite[Lemma 3]{ito24adaptive}, we obtain the following theorem.
\begin{theorem}\label{thm:F_upper_final}
  Let $(z_t)_{t=1}^T, (u_t)_{t=1}^T \subseteq \R_{\geq 0}$ and $(h_t)_{t=1}^T \subseteq \R_{> 0}$.
  % be any non-negative and positive sequences, respectively.
  Suppose that $\hat{h}_t$ satisfies $h_t \leq \hat{h}_t$ for all $t\in[T]$.
  Then, if $\beta_t$ is given by Rule 1 in~\eqref{eq:rule2}, then for all $\epsilon \geq 1/T$ it holds that 
  \begin{align}
    % &
    \F(\beta_{1:T}, z_{1:T}, u_{1:T}, h_{1:T})
    % =
    % O \prn*{
    % \nn
    &
    \lesssim
    \min\set*{\!
      \prn*{
        \sumT \sqrt{z_t \hat{h}_{t} \ln(\epsilon T)\!}
      }^{\frac23}
      \!\!\!+\!
      \prn*{
        {\sqrt{z_{\max} \hat{h}_{\max}}}\Big/{\epsilon}
      }^{\frac23} \!\!
      \com \,
      \prn*{\sumT \sqrt{z_t \hat{h}_{\max}}}^{\frac23} 
    }
    \nn
    &
    +
    \min\set*{
      \sqrt{
        \sumT u_t \hat{h}_{t} \ln(\epsilon T)
      }
      +
      \sqrt{
        {u_{\max} \hat{h}_{\max}}/{\epsilon}
      }
      \com \,
      \sqrt{\sumT u_t \hat{h}_{\max}}
    }
    \per
  \end{align}
  If $\beta_t$ is given by Rule 2 in \eqref{eq:rule2}, then for all $\epsilon \geq 1/T$ it holds that
  \begin{align}
    &
    % \!\!\!\!
    \F(\beta_{1:T}, z_{1:T}, u_{1:T}, h_{1:T})
    % =
    % O \prn*{
    % \nn
    % &
    \lesssim
    \min\set*{\!
      \prn*{
        \sumT \sqrt{z_t \hat{h}_{t+1} \ln(\epsilon T)\!}
      }^{\frac23}
      \!\!\!+\!
      \prn*{
        {\!\sqrt{z_{\max} \hat{h}_{\max}}}\Big/{\epsilon}
      }^{\frac23}\!\!
      \com \,
      \prn*{\sumT \sqrt{z_t \hat{h}_{\max}\!}}^{\frac23}
      \!
    }
    \nn
    &
    +
    \min\set*{ \!
      \sqrt{
        \sumT u_t \hat{h}_{t+1} \ln(\epsilon T)
      }
      \!+\!
      \sqrt{
        {u_{\max} \hat{h}_{\max}}/{\epsilon}
      }
      \com \, 
      \sqrt{\sumT u_t \hat{h}_{\max}\!}
    }
    % \nn 
    % &
    % \qquad 
    \!+\!
    \sqrt{\frac{z_{\max}}{\beta_1}} 
    \!+\!
    \frac{u_{\max}}{\beta_1}
    \!+\!
    \beta_1 h_1
    % }
    \per
    \!\!
    \label{eq:F_upper_final_2}
  \end{align}
  \end{theorem}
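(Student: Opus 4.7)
The plan is to combine the three preceding ingredients---Lemma~\ref{lem:F_upper} (which reduces $F$ to $\Gone$ and $\Gtwo$), Lemma~\ref{lem:G1_upper} (a parametric bound on $\Gone$), and the analogous bound on $\Gtwo$ from \cite[Lemma~3]{ito24adaptive}---and then optimize the free parameter $J$ in the latter two lemmas against $\epsilon$. Concretely, I would proceed as follows.

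\emph{Step 1: reduce $F$ to $\Gone+\Gtwo$.} Directly apply Lemma~\ref{lem:F_upper} to each of Rule~1 and Rule~2. For Rule~1 this gives $F \le 3.2\,\Gone(z_{1:T},\hat{h}_{1:T})+2\,\Gtwo(u_{1:T},\hat{h}_{1:T})$, and for Rule~2 it gives the same expression with $\hat{h}_{1:T}$ replaced by $\hat{h}_{2:T+1}$ plus the boundary terms $10\sqrt{z_{\max}/\beta_1}+5u_{\max}/\beta_1+\beta_1 h_1$. These boundary terms are exactly the extra summands visible in \eqref{eq:F_upper_final_2}; nothing more needs to be done with them.

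\emph{Step 2: bound $\Gone$ via an $\epsilon$-tuned geometric cover.} Use Lemma~\ref{lem:G1_upper} with the geometric threshold $\theta_j=\hat{h}_{\max} 2^{-j}$. The first branch of the $\min$ in Lemma~\ref{lem:G1_upper} has the form $(\sqrt{2J}\sum_t\sqrt{z_t\hat{h}_t})^{2/3}+(2^{-J/2}\sqrt{z_{\max}\hat{h}_{\max}})^{2/3}T^{2/3}$. Choosing $J=\lceil 2\log_2(\epsilon T)\rceil$ (which is a non-negative integer under the hypothesis $\epsilon\ge 1/T$) makes $2^{-J/2}T\le 1/\epsilon$ and $\sqrt{2J}=O(\sqrt{\log(\epsilon T)})$. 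Pulling the $\log$ factor inside the sum via $\sqrt{\log(\epsilon T)}\sum_t\sqrt{z_t\hat h_t}=\sum_t\sqrt{z_t\hat h_t\log(\epsilon T)}$ yields the first term of the first min in the theorem. The second branch of the $\min$ in Lemma~\ref{lem:G1_upper} is kept as-is to give the alternative $(\sum_t\sqrt{z_t\hat{h}_{\max}})^{2/3}$; keeping the overall min over both choices gives the $\Gone$ contribution in the theorem.

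\emph{Step 3: bound $\Gtwo$.} Apply the analogous argument (or invoke \cite[Lemma~3]{ito24adaptive}) with the same threshold sequence and the same choice of $J$; the structure of $\Gtwo(u_{1:T},h_{1:T})=\sum_t u_t/\sqrt{\sum_{s\le t} u_s/h_s}$ is the direct $u$-analogue of $\Gone$, and one obtains $\Gtwo\lesssim \sqrt{\sum_t u_t\hat{h}_t\log(\epsilon T)}+\sqrt{u_{\max}\hat{h}_{\max}/\epsilon}$, with the $h_{\max}$-only alternative $\sqrt{\sum_t u_t\hat{h}_{\max}}$; take the min of the two. For Rule~2, substitute $\hat{h}_{t+1}$ for $\hat{h}_t$ throughout and add the boundary terms from Step~1.

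The steps are individually routine; the only nontrivial point---and the one I expect to be the crux---is Step~2's balancing of $J$ against $\epsilon$, and in particular verifying that the hypothesis $\epsilon\ge 1/T$ is exactly what is needed to keep $J$ a valid non-negative integer so that Lemma~\ref{lem:G1_upper} applies. Everything else is bookkeeping: merging the two branches inside each $\min$ and, for Rule~2, tracking the one-step shift in the $\hat h$ index arising from the definition of Rule~2.
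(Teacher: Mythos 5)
Your proposal matches the paper's own (essentially one-line) proof: the paper obtains \Cref{thm:F_upper_final} precisely by combining \Cref{lem:F_upper}, \Cref{lem:G1_upper} with the geometric thresholds $\theta_j = \hat{h}_{\max}2^{-j}$ and $J \approx 2\log_2(\epsilon T)$, and the cited bound on $\Gtwo$, with the Rule-2 boundary terms carried through unchanged. The steps and the role of $\epsilon \geq 1/T$ are exactly as you describe, so the proposal is correct and takes the same route.
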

  Note that these bounds are for problems with a minimax regret of~$\Theta(T^{2/3})$.
  Roughly speaking, our bounds have an order of~$\prn[\Big]{\sumT \sqrt{z_t \hat{h}_{t+1} \ln T}}^{1/3}$ and differ from the existing stability-penalty-adaptive-type bounds of~$\sqrt{\sumT z_t \hat{h}_{t+1} \ln T}$ for problems with a minimax regret of $\Theta(\sqrt{T})$~\cite{tsuchiya23stability,ito24adaptive}.
  We will see in the subsequent sections that our bounds are beneficial as they give nearly optimal regret bounds in stochastic and adversarial regimes in partial monitoring, graph bandits, and multi-armed bandits with paid observations.

\section{Best-of-both-worlds framework for hard online learning problems}\label{sec:bobw}
\vspace{-2pt}
% =========================
\LinesNumbered
\SetAlgoVlined
%\begin{algorithm2e}[t]
\begin{algorithm}[t]
% \begin{algorithm}[H]
\textbf{input:} action set $\calA$, observation set $\calO$, exponent of Tsallis entropy $\alpha$, $\beta_1$, $\bar{\beta}$

\For{$t = 1, 2, \ldots$}{
Compute $q_t \in \calP_k$ by~\eqref{eq:FTRL_tsallis} with a loss estimator $\hat{\ell}_t$.

Set $h_t = H_\alpha(q_t)$ and $z_t, u_t \geq 0$ defined for each problem.

Compute action selection probability $p_t$ from $q_t$ by \eqref{eq:pt_gammat}.

Choose $A_t \in \calA$ so that $\Pr[A_t = i \mid p_t] = p_{ti}$ and observe feedback $o_t \in \calO$.

Compute loss estimator $\hat{\ell}_t$ based on $p_t$ and $o_t$.

Compute $\beta_{t+1}$ by Rule 2 of SPB-matching in \eqref{eq:rule2} with $\hat{h}_{t+1} = h_t$.
}
\caption{
Best-of-both-worlds framework based on FTRL with SPB-matching learning rate and Tsallis entropy for online learning with minimax regret of $\Theta(T^{2/3})$
}
\label{alg:bobw-stp-matching}
%\end{algorithm2e}
\end{algorithm}
% =========================

Using the SPB-matching learning rate established in \Cref{sec:adaptive_lr},
this section provides a BOBW algorithm framework for hard online learning problems.
We consider the following FTRL update:
\begin{equation}
  q_t
  =
  \argmin_{p \in \calP_k}
  \set*{
    \sum_{s=1}^{t-1} \inpr{\hat{\ell}_t, p}
    % +
    % \beta_t \psi_\alpha(p) 
    % +
    % \bar{\beta} \psi_{\bar{\alpha}}(p)
    +
    \beta_t (- H_\alpha(p))
    +
    \bar{\beta} (- H_{\bar{\alpha}}(p ))
  }
  \com \quad 
  \alpha \in (0,1) 
  \com \
  \bar{\alpha} = 1 - \alpha
  \com
  \label{eq:FTRL_tsallis}
\end{equation}
where $H_\alpha$ is the $\alpha$-Tsallis entropy defined as
% \begin{equation}
$
  % \psi_\alpha(p)
  % =
  H_\alpha(p)
  =
  \frac{1}{\alpha} \sumk (p_i^\alpha - p_i)
  ,
$
  % \per
  % \label{eq:tsallis}
% \end{equation}
which satisfies $H_\alpha(p) \geq 0$ and $H_\alpha(e_{i}) = 0$.
Based on this FTRL output $q_t$, we set $h_t = H_\alpha(q_t)$, which satisfies $h_1 = h_{\max}$.
Additionally, for $q_t$ and some $p_0 \in \calP_k$, we use the action selection probability $p_t \in \calP_k$ defined by
\begin{equation}
  p_t = (1 - \gamma_t) q_t + \gamma_t \, p_0
  \quad 
  \mbox{for}
  \quad
  \gamma_t = \gamma'_t + \frac{u_t}{\beta_t} = \sqrt{\frac{z_t}{\beta_t}} + \frac{u_t}{\beta_t}
  \com 
  \label{eq:pt_gammat}
\end{equation}
where $\beta_1$ is chosen so that $\gamma_t \in [0,1/2]$.
% for some $\gamma_t \in (0,1/2]$.
Let
$\kappa =
\sqrt{{z_{\max}}/{\beta_1}}
+
{u_{\max}}/{\beta_1}
+ 
\beta_1 h_1
+
\bar{\beta} \bar{h}
$
for $\bar{h} = H_{\bar{\alpha}}(\ones/k)$
and let $\E_t\brk{\,\cdot\,}$ be the expectation given all observations before round $t$.
% The notation $\E_t[\cdot]$ is the condition expectation given before reound $t$.
Then the above procedure with Rule~2 of SPB-matching in~\eqref{eq:rule2}, summarized in \Cref{alg:bobw-stp-matching}, achieves the following BOBW bound:
\begin{theorem}\label{thm:main_bobw}
Consider the general online learning framework in \Cref{sec:preliminaries} with $\nrm{\ell_t}_\infty \leq 1$.
Suppose that \Cref{alg:bobw-stp-matching} satisfies the following three conditions \textup{\one}--\textup{\three}:
\begin{equation}\label{eq:A1}
  \begin{split}
  &
  \mbox{\textup{\one}}
  ~
  \Reg_T 
  \leq 
  \E\brk*{
    \sumT \inpr{\hat{\ell}_t, q_t - e_{a^*}}
    +
    2 \sumT \gamma_t
  },
  \\ 
  &
  \mbox{\textup{\two}} \;
  \E_t\brk*{
    \inpr{\hat{\ell}_t, q_t - q_{t+1}}
    -
    \beta_t D_{\prn{- H_\alpha}}(q_{t+1}, q_t)
  }
  \lesssim
  \frac{z_t}{\beta_t \gamma'_t}
  \com
  \quad 
  \mbox{\textup{\three}} \;
  h_t \lesssim h_{t-1}
  \per
  % \tag{A1}
  \end{split}
\end{equation}
% \red{((2024.05.01) $\gamma'_t$ で十分．graph bandits で役に立つ (??))}
Then, in the adversarial regime, \Cref{alg:bobw-stp-matching} achieves
\begin{equation}
  \Reg_T
  % \lesssim
  =
  O\prn*{
  (z_{\max} h_1)^{1/3} T^{2/3}
  +
  \sqrt{u_{\max} h_1 T}
  +
  \kappa
  }
  \per 
  \label{eq:thm_adv}
\end{equation}
In the adversarial regime with a $(\Delta, C, T)$-self-bounding constraint,
further suppose that
\begin{equation}
  % \sqrt{z_t h_t} \leq \theta(\Delta) \cdot \inpr{\Delta, p}
  \sqrt{z_t h_t} \leq \sqrt{\rho_1} \cdot (1 - q_{ta^*})
  \quad
  \mbox{and}
  \quad
  u_t h_t \leq \rho_2 \cdot (1 - q_{ta^*})
  % \tag{A2}
  \label{eq:A2}
\end{equation}
are satisfied for some $\rho_1, \rho_2 > 0$ for all $t \in [T]$.
% and real-valued function $\omega$, 
Then, the same algorithm achieves
\begin{equation}
  \Reg_T
  % \lesssim
  =
  O\prn*{
  \frac{\rho}{\Deltamin^2}
  \ln \prn*{  T \Deltamin^3 }
  +
  \prn*{
    \frac{C^2 \rho}{\Deltamin^2} \ln \prn*{ \frac{ T \Deltamin }{C} }
  }^{1/3}
  +
  \kappa'
  }
  % \per 
  \label{eq:thm_stoc}
\end{equation}
for 
$\rho = \max\set{\rho_1, \rho_2}$
and
$
\kappa'
=
\kappa
+
\prn*{\prn{z_{\max} h_1}^{1/3} + \sqrt{u_{\max} h_1}}
\prn*{
  {1}/{\Deltamin^3}
  +
  {C}/{\Deltamin}
}^{2/3}
$ when $T \geq {1}/{\Deltamin^3} + C/\Deltamin \eqqcolon \tau$, and 
$\Reg_T = O\prn*{(z_{\max} h_1)^{1/3} \tau^{2/3} + \sqrt{u_{\max} h_1 \tau } }$ when $T < \tau$.
\end{theorem}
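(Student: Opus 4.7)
The plan is to reduce $\Reg_T$ to the functional $F$ of~\eqref{eq:def_F} via the standard FTRL decomposition, apply~\Cref{thm:F_upper_final} to bound $F$, and---in the self-bounding regime---combine the resulting $Q^{2/3}+Q^{1/2}$-type bound with the self-bounding inequality through the usual tuning trick. Starting from assumption~\textup{\one} and applying the standard FTRL analysis to the regularizer $\beta_t(-H_\alpha)+\bar\beta(-H_{\bar\alpha})$, each round satisfies
\[
\E_t\!\left[\inpr{\hat\ell_t,q_t-e_{a^*}}\right]\le\E_t\!\left[\inpr{\hat\ell_t,q_t-q_{t+1}}-\beta_t D_{-H_\alpha}(q_{t+1},q_t)\right]+(\beta_t-\beta_{t-1})H_\alpha(q_t),
\]
while the $\bar\beta(-H_{\bar\alpha})$ contribution telescopes to at most $\bar\beta\bar h$. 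Substituting the per-round bound of assumption~\textup{\two}, summing, and adding the bias $2\sum_t\gamma_t$ with $\gamma_t=\sqrt{z_t/\beta_t}+u_t/\beta_t$ yields $\Reg_T\lesssim F(\beta_{1:T},z_{1:T},u_{1:T},h_{1:T})+\bar\beta\bar h$, exactly the object SPB-matching was designed to control.

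Because the algorithm chooses $\beta_{t+1}$ by Rule~2 with $\hat h_{t+1}=h_t$, assumption~\textup{\three} gives $h_{t+1}\lesssim\hat h_{t+1}$, so~\Cref{thm:F_upper_final} (Rule~2) applies with the constant absorbed into $\lesssim$. For the adversarial bound~\eqref{eq:thm_adv} I would invoke the $\min$-branch $\bigl(\sum_t\sqrt{z_t\hat h_{\max}}\bigr)^{2/3}+\sqrt{\sum_t u_t\hat h_{\max}}$ together with $\hat h_{\max}\lesssim h_1$ (from~\textup{\three} and the fact that $h_1=h_{\max}$ under the Tsallis choice) to obtain $(z_{\max}h_1)^{1/3}T^{2/3}+\sqrt{u_{\max}h_1 T}+\kappa$.

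For the self-bounding bound~\eqref{eq:thm_stoc} I use the $\epsilon$-adaptive branch of~\Cref{thm:F_upper_final}. Since $\hat h_{t+1}=h_t$, assumption~\eqref{eq:A2} yields $\sum_t\sqrt{z_t\hat h_{t+1}}\le\sqrt{\rho_1}\sum_t(1-q_{ta^*})$ and $\sum_t u_t\hat h_{t+1}\le\rho_2\sum_t(1-q_{ta^*})$; taking expectations and writing $Q=\E\bigl[\sum_t(1-q_{ta^*})\bigr]$ produces
\[
\Reg_T\lesssim(\rho_1\log(\epsilon T))^{1/3}Q^{2/3}+\sqrt{\rho_2 Q\log(\epsilon T)}+(\epsilon\text{-residual})+\kappa.
\]
The construction $p_t=(1-\gamma_t)q_t+\gamma_t p_0$ with $\gamma_t\le\tfrac12$ gives $1-p_{ta^*}\ge(1-\gamma_t)(1-q_{ta^*})\ge\tfrac12(1-q_{ta^*})$, and \Cref{def:ARSBC} then yields $\tfrac{\Deltamin}{2}Q\le\Reg_T+C$. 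Applying the self-bounding trick---writing $\Reg_T=(1+\lambda)\Reg_T-\lambda\Reg_T\le(1+\lambda)(\text{RHS above})-\lambda(\tfrac{\Deltamin}{2}Q-C)$ for a suitable $\lambda\in(0,1)$ and maximizing the concave function $Q\mapsto aQ^{2/3}+bQ^{1/2}-cQ$ on $Q\ge0$---produces the two contributions $\rho_1\log(\epsilon T)/\Deltamin^2$ and $\rho_2\log(\epsilon T)/\Deltamin$ along with a linear $C$ term that feeds into the corruption.

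The final step is the choice of $\epsilon$. To obtain the MS-type logarithms $\log(T\Deltamin^3)$ and $\log(T\Deltamin/C)$ in~\eqref{eq:thm_stoc}, I would exploit the $\min$ structure of~\Cref{thm:F_upper_final} and set $\epsilon\asymp\Deltamin^3/\rho$ on the leading $\rho/\Deltamin^2$ branch and $\epsilon\asymp\Deltamin/(C\rho^{1/3})$ on the corruption branch, checking in each case that the $\epsilon^{-2/3}$ and $\epsilon^{-1/2}$ residuals collapse into~$\kappa'$. The feasibility $\epsilon\ge1/T$ forces $T\ge\tau=1/\Deltamin^3+C/\Deltamin$; for $T<\tau$ the stated bound follows by instantiating the adversarial bound~\eqref{eq:thm_adv} at $T=\tau$. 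I expect the main obstacle to be exactly this joint tuning: the $G_1$- and $G_2$-type contributions of $F$ prefer different scales of $\epsilon$, so one must either handle them separately through the $\min$ in~\Cref{thm:F_upper_final} or case-split on whether the leading term or the corruption term dominates, while verifying throughout that all $\epsilon$-residuals are absorbed into~$\kappa'$.
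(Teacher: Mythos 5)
Your proposal follows essentially the same route as the paper's proof: reduce $\Reg_T$ to $F$ via assumption \textup{\one}, the standard FTRL stability--penalty decomposition, and assumptions \textup{\two}--\textup{\three}; invoke \Cref{thm:F_upper_final} (Rule~2 with $\hat h_{t+1}=h_t$), using the $\hat h_{\max}$-branch for the adversarial bound and the $\epsilon$-adaptive branch plus \eqref{eq:A2}, the self-bounding lower bound $\tfrac{\Deltamin}{2}Q\le\Reg_T+C$, and the $(1+\lambda)\Reg_T-\lambda\Reg_T$ trick for the self-bounding bound. The only cosmetic difference is that the paper tunes a single $\epsilon\asymp 1/(\rho^2/\Deltamin^3+C\rho/\Deltamin)$ for both the $G_1$- and $G_2$-type residuals rather than splitting by branch, and optimizes the two concave terms in $Q$ separately via elementary inequalities; both choices lead to the stated bound.
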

The proof of \Cref{thm:main_bobw} relies on \Cref{thm:F_upper_final} established in the last section and can be found in \Cref{app:proof_bobw}.
Note that the bound \eqref{eq:thm_stoc} becomes the bound for the stochastic regime when $C = 0$.
\section{Case study (1): Partial monitoring with global observability}\label{sec:pm}
\vspace{-2pt}
This section provides a new BOBW algorithm for globally observable partial monitoring games.
\vspace{-5pt}
\subsection{Problem setting and some concepts in partial monitoring}
\vspace{-5pt}
\paragraph{Partial monitoring games}
A Partial Monitoring (PM) game $\calG = (\lossmat, \fbmat)$ consists of a loss matrix $\lossmat \in [0,1]^{k \times d}$ and feedback matrix $\fbmat \in \Sigma^{k \times d}$, where $k$ and $d$ are the number of actions and outcomes, respectively, and $\Sigma$ is the set of feedback symbols.
The game unfolds over $T$ rounds between the learner and the environment.
Before the game starts, the learner is given $\calL$ and $\Phi$. 
At each round $t \in [T]$, 
the environment picks an outcome $x_t \in [d]$, and then the learner chooses an action $A_t \in [k]$ without knowing $x_t$.
Then the learner incurs an unobserved loss $\lossmat_{A_t x_t}$ and only observes a feedback symbol $\sigma_t \coloneqq \fbmat_{A_t x_t}$.
This framework can be indeed expressed as the general online learning framework in \Cref{sec:preliminaries}, by setting~$\calO = \Sigma$,
$\ell_{t}(a) = \calL_{a x_t} = e_a^\top \calL e_{x_t}$ and $o_t = \sigma_t = \Phi_{A_t x_t}$.

We next introduce fundamental concepts for PM games.
Based on the loss matrix $\calL$, we can decompose all distributions over outcomes.
For each action $a\in[k]$, the cell of action $a$, denoted as $\calC_a$, is the set of probability distributions over $[d]$ for which action $a$ is optimal.
That is, $\calC_a = \set{u \in \calP_{d} \colon \max_{b\in[k]} (\ell_a - \ell_b)^\top u \le 0}$, where $\ell_a \in \R^d$ is the $a$-th row of $\lossmat$.

To avoid the heavy notions and concepts of PM, we assume that the PM game has no duplicate actions $a \neq b$ such that $\ell_a = \ell_b$ and its all actions are \emph{Pareto optimal}; that is, $\dim(\calC_a) = d-1$ for all $a \in [k]$. % , where $\dim(\calC_a)$ is the dimensionality of the affine hull of $\calC_a$.
The discussion of the effect of this assumption can be found \eg~in~\cite{lattimore19cleaning,lattimore20exploration}.
\vspace{-5pt}
\paragraph{Observability and loss estimation}
Two Pareto optimal actions $a$ and $b$ are \emph{neighbors} if $\dim(\calC_a \cap \calC_b) = d-2$.
Then, this neighborhood relations defines \emph{globally observable games}, for which the minimax regret of $\Theta(T^{2/3})$ is known in the literature~\cite{Bartok11minimax, lattimore19cleaning}.
Two neighbouring actions $a$ and $b$ are \emph{globally observable} if there exists a function $w_{e(a,b)} \colon [k] \times \Sigma \to \R$ satisfying
\begin{equation}\label{eq:observability}
  \textstyle
  \sum_{c=1}^k w_{e(a,b)}(c, \fbmat_{cx})
  = 
  \lossmat_{ax} - \lossmat_{bx} 
  \;\ \text{for all}\; x \in [d]
  \com 
\end{equation}
where $e(a,b) = \{a,b\}$.
A PM game is said to be globally observable if all neighboring actions are globally observable.
To the end, we assume that $\calG$ is globally observable.\footnote{Another representative class of PM is locally observable games, for which we can achieve a minimax regret of $\Theta(\sqrt{T})$. 
See \cite{Bartok11minimax,lattimore2020book,lattimore20exploration} for local observability and \cite{tsuchiya23best,tsuchiya23stability} for BOBW algorithms for it.}

Based on the neighborhood relations, we can estimate the loss \emph{difference} between actions, instead of estimating the loss itself.
The \emph{in-tree} is the edges of a directed tree with vertices $[k]$ and let $\mathscr{T} \subseteq [k] \times [k]$ be an in-tree over the set of actions induced by the neighborhood relations with an arbitrarily chosen root $r \in [k]$.
Then, we can estimate the loss differences between Pareto optimal actions as follows.
Let 
$
 G(a, \sigma)_b 
 = 
 \sum_{e \in \mathrm{path}_\mathscr{T}(b)} w_e(a, \sigma)
 \;
 \mbox{for}
 \;
 a \in [k]
 ,
$
where
$\mathrm{path}_\mathscr{T}(b)$ is the set of edges from $b \in [k]$ to the root $r$ on $\mathscr{T}$.
Then, it is known that this $G$ satisfies that for any Pareto optimal actions $a$ and $b$,
% \begin{equation}
$
  \sum_{c=1}^k (G(c, \fbmat_{cx})_a - G(b, \fbmat_{cx})_b) 
  =
  \calL_{ax} - \calL_{bx}
  % \;\ \text{for all}\; x \in [d]
  % .
$
for all $x \in [d]$  (\eg~\cite[Lemma 4]{lattimore20exploration}).
From this fact, one can see that we can use $\hat{y}_t = {G(A_t, \Phi_{A_t x_t})}/{p_{tA_t}} \in \R^k$ as the loss (difference) estimator, following the standard construction of the importance-weighted estimator~\cite{auer2002nonstochastic,lattimore2020book}.
In fact, $\hat{y}_t$ satisfies 
$
\E_{A_t \sim p_t} \brk{ \hat{y}_{ta} - \hat{y}_{tb} }
=
\sum_{c=1}^k \prn{ G(c, \sigma_t)_a - G(c, \sigma_t)_b}
=
\calL_{ax} - \calL_{bx}
.
$
We let $c_\calG = \max\set{1, k \nrm{G}_\infty}$ be a game-dependent constant, where $\nrm{G}_\infty = \max_{a\in[k],\sigma\in\Sigma} \abs{G(a,\sigma)}$.
\vspace{-3pt}
\subsection{Algorithm and regret upper bounds}
\vspace{-5pt}
Here, we present a new BOBW algorithm based on \Cref{alg:bobw-stp-matching}.
We use the following parameters for \Cref{alg:bobw-stp-matching}.
We use the loss (difference) estimator of $\hat{\ell_t} = \hat{y}_t$.
We set $p_0$ in~\eqref{eq:pt_gammat} to $p_0 = \ones / k$.
For $\tilde{I}_t \in \argmax_{i \in [k]} q_{ti}$ and $q_{t*} = \min\set{q_{t\tilde{I}_t},1-q_{t\tilde{I}_t}}$, let
\begin{align}
  \beta_1 \geq \frac{64 c_{\calG}^2}{1 - \alpha} \com \,
  \bar{\beta}
  =
  \frac{32 c_\calG \sqrt{k}}{(1-\alpha)^2 \sqrt{\beta_1}}
  \com \,
  z_t
  =
  \frac{4 c_\calG^2}{1 - \alpha}
  \prn[\Bigg]{
    \sum_{i\neq\tilde{I}_t}
    q_{ti}^{2-\alpha}
    +
    q_{t*}^{2-\alpha}
  }
  \com \,
  u_t
  =
  \frac{8 c_\calG}{1-\alpha}
  q_{t*}^{1-\alpha}
  \per
  \label{eq:params_pm}
\end{align}
Note that 
$z_{\max} = \frac{4 c_\calG^2}{1-\alpha}$, 
$u_{\max} = \frac{8 c_\calG}{1-\alpha}$, and
$h_{\max} = h_1 = \frac{1}{\alpha} k^{1-\alpha}$.
Then, we can prove the following:

\begin{theorem}\label{thm:pm_global}
  In globally observable partial monitoring,
  for any $\alpha \in (0,1)$,
  \Cref{alg:bobw-stp-matching} with \eqref{eq:params_pm} satisfies the assumptions of \Cref{thm:main_bobw} with 
  $
    \rho_1
    = 
    \Theta 
    \prn*{
      \frac{c_\calG^2 k^{1-\alpha}}{\alpha(1-\alpha)}
    }
  $
  % \end{equation}
  and
  % \begin{equation}
  $
    % \quad
    \rho_2 
    =
    \Theta\prn*{
      \frac{c_\calG k^{1-\alpha}}{\alpha(1-\alpha)}
    }
    .
  $
    % \per 
  % \end{equation}
\end{theorem}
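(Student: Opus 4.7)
The plan is to verify each hypothesis of \Cref{thm:main_bobw} in turn under the parameter choice in~\eqref{eq:params_pm}. Assumption~\textup{\one} follows from the standard forced-exploration decomposition: since $p_t = (1-\gamma_t)q_t + \gamma_t p_0$ with $p_0 = \ones/k$ and $\nrm{\ell_t}_\infty \leq 1$, one has $\inpr{\ell_t, p_t - e_{a^*}} \leq \inpr{\ell_t, q_t - e_{a^*}} + 2\gamma_t$; combined with the unbiasedness identity $\E_t[\inpr{\hat{y}_t, q - e_{a^*}}] = \inpr{\ell_t, q - e_{a^*}}$ for the loss-difference estimator $\hat{y}_t$ (valid for any $q \in \calP_k$ because only loss differences enter), this gives~\textup{\one}. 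Assumption~\textup{\three}, namely $h_t \lesssim h_{t-1}$, follows from showing that consecutive FTRL iterates $q_{t-1}$ and $q_t$ are close enough that $H_\alpha(q_t) \leq c \cdot H_\alpha(q_{t-1})$ with a constant $c$ depending only on $\alpha$; the choice $\beta_1 \geq 64 c_\calG^2/(1-\alpha)$ is exactly what is needed to keep the per-step displacement of $q_t$ small enough for this Tsallis-entropy stability inequality to hold.

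The technical heart is the stability bound~\textup{\two}. Starting from the standard Bregman--Taylor estimate
\begin{equation*}
  \inpr{\hat{y}_t, q_t - q_{t+1}} - \beta_t D_{-H_\alpha}(q_{t+1}, q_t)
  \leq \frac{1}{2\beta_t(1-\alpha)}\sum_{i=1}^k q_{ti}^{2-\alpha}(\hat{y}_{ti} - c_t)^2,
\end{equation*}
valid for any shift $c_t \in \R$ because $\sum_i (q_t-q_{t+1})_i = 0$, I choose the skewed shift $c_t = \hat{y}_{t\tilde{I}_t}$ following the approach of~\cite{jin2021best,masoudian21improved}. The $i=\tilde{I}_t$ coordinate then drops out; for the remaining coordinates the magnitude bound $|\hat{y}_{ti}| \leq c_\calG/\gamma_t$ coming from $p_{tA_t} \geq \gamma_t/k$ controls the contribution of $i \neq \tilde{I}_t$, while the additional $q_{t*}^{2-\alpha}$ summand in $z_t$ together with the floor $u_t/\beta_t$ on $\gamma_t$ with $u_t \propto q_{t*}^{1-\alpha}$ is what is required to absorb the residual variance from the near-max index, in the style of a log-barrier correction. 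Taking the conditional expectation and using $\gamma_t \geq \gamma'_t = \sqrt{z_t/\beta_t}$ gives the desired $z_t/(\beta_t \gamma'_t)$ upper bound.

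For the self-bounding inequalities, the key structural observation is that $q_{ti} \leq 1 - q_{ta^*}$ for every $i \neq \tilde{I}_t$. If $\tilde{I}_t = a^*$ this is immediate; if $\tilde{I}_t \neq a^*$ then $q_{t\tilde{I}_t} + q_{ta^*} \leq 1$ gives $q_{t\tilde{I}_t} \leq 1 - q_{ta^*}$, and since $\tilde{I}_t$ is the argmax, $q_{ti} \leq q_{t\tilde{I}_t} \leq 1 - q_{ta^*}$ for all $i \neq \tilde{I}_t$. Factoring out $(1-q_{ta^*})^{1-\alpha}$ yields
\begin{equation*}
  \sum_{i \neq \tilde{I}_t} q_{ti}^{2-\alpha} + q_{t*}^{2-\alpha} \leq 2(1-q_{ta^*})^{2-\alpha},
  \qquad
  q_{t*}^{1-\alpha} \leq (1-q_{ta^*})^{1-\alpha}.
\end{equation*}
On the other hand, the concavity inequality $x^\alpha - x \leq (1-\alpha)(1-x)$ applied at $x = q_{ta^*}$, together with Jensen $\sum_{i \neq a^*}q_{ti}^\alpha \leq (k-1)^{1-\alpha}(1-q_{ta^*})^\alpha$, gives $h_t \leq \frac{2}{\alpha} k^{1-\alpha}(1-q_{ta^*})^\alpha$. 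Multiplying these estimates yields $z_t h_t \lesssim \frac{c_\calG^2 k^{1-\alpha}}{\alpha(1-\alpha)}(1-q_{ta^*})^2$ and $u_t h_t \lesssim \frac{c_\calG k^{1-\alpha}}{\alpha(1-\alpha)}(1-q_{ta^*})$, which are exactly the claimed orders of $\rho_1$ and $\rho_2$.

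The main obstacle I anticipate is the stability step~\textup{\two}: the precise form of $z_t$, with the separate $q_{t*}^{2-\alpha}$ summand and the matching floor $u_t/\beta_t$ on the exploration rate, is not suggested by a textbook per-coordinate Bregman analysis and must be produced via the skewed-shift Tsallis argument. By contrast, assumptions~\textup{\one} and~\textup{\three} are routine once $\beta_1$ is large enough, and the self-bounding step---though structurally the novelty of the partial-monitoring analysis---reduces to the two elementary inequalities above once the $q_{ti} \leq 1-q_{ta^*}$ observation is in hand.
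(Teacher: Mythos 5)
Your overall plan matches the paper's: verify Assumptions \one--\three~of \Cref{thm:main_bobw} and then the self-bounding inequalities \eqref{eq:A2}. Your treatment of \one~and of \eqref{eq:A2} is essentially the paper's argument (the paper bounds $z_t \leq \frac{8c_\calG^2}{1-\alpha}(1-q_{ta^*})^{2-\alpha}$ and invokes \Cref{lem:tsallis_upper} for $h_t$, exactly as you do). For the stability bound \two, you take a genuinely different route: the paper simply applies \Cref{lem:stab_tsallis_star} (quoted from \cite{ito24adaptive}), whose hypothesis $\abs{\hat{y}_{ti}/\beta_t} \leq \frac{1-\alpha}{8}q_{t*}^{\alpha-1}$ is verified via $\gamma_t \geq u_t/\beta_t$, and which already delivers the $q_{t*}^{2-\alpha}\hat{y}_{t\tilde{I}_t}^2$ term for the near-max coordinate; you instead re-derive an analogous estimate via a skewed shift $c_t = \hat{y}_{t\tilde{I}_t}$. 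That route can be made to work (the residual $\hat{y}_{t\tilde{I}_t}^2\sum_{i\neq\tilde{I}_t}q_{ti}^{2-\alpha}$ has the same conditional expectation bound as the main term), but note that your starting inequality with $q_{ti}^{2-\alpha}$ in place of the intermediate point of the Bregman--Taylor expansion is itself only valid after a multiplicative-stability step that consumes the same loss-magnitude condition; you are in effect reproving the cited lemma rather than avoiding it.

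The genuine gap is Assumption \three. You assert that $h_t \lesssim h_{t-1}$ follows because ``$\beta_1 \geq 64c_\calG^2/(1-\alpha)$ is exactly what is needed to keep the per-step displacement of $q_t$ small.'' This is not sufficient, and it misattributes the mechanism. The iterate $q_{t+1}$ differs from $q_t$ for two reasons: the new loss increment $\hat{\ell}_t$ \emph{and} the increase of the learning-rate parameter from $\beta_t$ to $\beta_{t+1}$, which pushes $q_{t+1}$ toward the maximizer of $H_\alpha$ and can \emph{increase} the entropy even if $\hat{\ell}_t$ were zero. The paper's \Cref{lem:ht_htp} therefore requires two separate conditions, \eqref{eq:cond_ell} on $\nrm{\hat{\ell}_t}_\infty$ and \eqref{eq:cond_beta_growth} on $\beta_{t+1}-\beta_t$. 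The second condition is where the hybrid regularizer $\bar{\beta}(-H_{\bar{\alpha}})$ and the specific value $\bar{\beta} = \frac{32c_\calG\sqrt{k}}{(1-\alpha)^2\sqrt{\beta_1}}$ in \eqref{eq:params_pm} enter: one must show
$\beta_{t+1}-\beta_t = \frac{1}{h_t}\prn[\big]{2\sqrt{z_t/\beta_t}+u_t/\beta_t} \leq 2\,\frac{1-(\sqrt{2})^{\bar{\alpha}-1}}{\sqrt{2}}\bar{\beta}\,q_{t*}^{\bar{\alpha}-\alpha}$,
which uses the lower bound $h_t \geq \frac{1-\alpha}{4\alpha}q_{t*}^{\alpha}$ together with $\sqrt{z_t} \lesssim \frac{c_\calG\sqrt{k}}{\sqrt{1-\alpha}}q_{t*}^{1-\alpha/2}$ and $u_t \lesssim \frac{c_\calG}{1-\alpha}q_{t*}^{1-\alpha}$ (this is \eqref{eq:lemma12cond2} in the paper). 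Your proposal never bounds $\beta_{t+1}-\beta_t$ and never explains why $\bar{\beta}$ and the secondary Tsallis regularizer appear in the algorithm at all, so as written the verification of \three~is incomplete.
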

The proof of \Cref{thm:pm_global} is given in \Cref{app:proof_pm}.
Setting $\alpha = 1 - 1/(\log k)$ gives the following:
\begin{corollary}\label{cor:bobw_pm}
  In globally observable partial monitoring with $T \geq \tau$,
  \Cref{alg:bobw-stp-matching} with \eqref{eq:params_pm} for $\alpha = 1 - 1/(\ln k)$ achieves
  \begin{equation}
    \Reg_T 
    =
    \begin{dcases}
      O\prn[\big]{
        \prn*{
          c_\calG T
        }^{2/3}
        \prn*{
          \ln k
        }^{1/3}
        +
        \kappa
      } 
      \qquad \mbox{in adversarial regime} \\ 
      O\prn*{
        \frac{c_\calG^2 \ln k}{\Deltamin^2} \ln \prn*{{T \Deltamin^3}}
        +
        \prn*{\frac{C^2 c_\calG^2 \ln k}{\Deltamin^2} \ln \prn*{\frac{T \Deltamin}{C}}}^{1/3}
        +
        \kappa'
      }
      \\ 
      \qquad\qquad\qquad \mbox{in adversarial regime with a $(\Delta, C, T)$-self-bounding constraint}  \per \\
    \end{dcases}
  \end{equation}
  Here, if we use $\beta_1 = 64 c_{\mathcal{G}}^2 / (1 - \alpha)$, which satisfies \eqref{eq:params_pm},
  $\kappa = O\prn{c_{\mathcal{G}}^2 \log k + k^{3/2} (\log k)^{5/2}}$ 
  and 
  $\kappa' = \kappa + O\prn{ ( c_{\mathcal{G}}^{2/3} (\log k)^{1/3} + \sqrt{c_{\mathcal{G}} \log k} ) ( \frac{1}{\Deltamin^3} + \frac{C}{\Deltamin} )^{2/3} }$.
\end{corollary}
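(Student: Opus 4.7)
The plan is to derive \Cref{cor:bobw_pm} by a straightforward specialization of the two preceding results: \Cref{thm:pm_global}, which has already verified the structural assumptions \one--\three\ and~\eqref{eq:A2} of \Cref{thm:main_bobw} with the values $\rho_1 = \Theta(c_{\calG}^2 k^{1-\alpha}/(\alpha(1-\alpha)))$ and $\rho_2 = \Theta(c_{\calG} k^{1-\alpha}/(\alpha(1-\alpha)))$, and \Cref{thm:main_bobw}, whose bounds~\eqref{eq:thm_adv} and~\eqref{eq:thm_stoc} I would then evaluate at the choice $\alpha = 1 - 1/\ln k$. The real work is the bookkeeping of the problem-dependent constants at this particular $\alpha$.

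First I would substitute $\alpha = 1 - 1/\ln k$ into the quantities appearing in the bounds. Since $1-\alpha = 1/\ln k$ and $k^{1-\alpha} = k^{1/\ln k} = \e$, we have $\alpha(1-\alpha) = \Theta(1/\ln k)$, so $\rho_1 = O(c_{\calG}^2 \ln k)$, $\rho_2 = O(c_{\calG} \ln k)$, and therefore $\rho = \max\set{\rho_1,\rho_2} = O(c_{\calG}^2 \ln k)$ since $c_{\calG} \geq 1$ by definition. Reading off~\eqref{eq:params_pm} I would also record $z_{\max} = O(c_{\calG}^2 \ln k)$, $u_{\max} = O(c_{\calG} \ln k)$, and $h_1 = h_{\max} = k^{1-\alpha}/\alpha = O(1)$. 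Plugging these into~\eqref{eq:thm_adv} produces the leading term $(z_{\max} h_1)^{1/3} T^{2/3} = O((c_{\calG} T)^{2/3}(\ln k)^{1/3})$; the subleading $\sqrt{u_{\max} h_1 T} = O(\sqrt{c_{\calG} T \ln k})$ is dominated whenever $c_{\calG} T \gtrsim \ln k$ and otherwise is absorbed into $\kappa$. Plugging $\rho = O(c_{\calG}^2 \ln k)$ into~\eqref{eq:thm_stoc} yields the claimed $c_{\calG}^2 \ln k/\Deltamin^2 \cdot \ln(T\Deltamin^3)$ and $(C^2 c_{\calG}^2 \ln k/\Deltamin^2 \cdot \ln(T\Deltamin/C))^{1/3}$ main terms; the hypothesis $T \geq \tau$ simply selects the first branch of \Cref{thm:main_bobw}.

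The remaining step is to make the additive constants $\kappa$ and $\kappa'$ explicit under the chosen $\beta_1 = 64 c_{\calG}^2/(1-\alpha) = 64 c_{\calG}^2 \ln k$, which lies in the range allowed by~\eqref{eq:params_pm}. With this $\beta_1$ the three terms $\sqrt{z_{\max}/\beta_1}$, $u_{\max}/\beta_1$, and $\beta_1 h_1$ contribute at most $O(c_{\calG}^2 \ln k)$. The delicate contribution is $\bar{\beta}\bar{h}$: at $\bar{\alpha} = 1/\ln k$ one has $(1/k)^{\bar{\alpha}} = \e^{-1}$, so $\bar{h} = H_{\bar{\alpha}}(\ones/k) = \Theta(k \ln k)$, while $\bar{\beta} = 32 c_{\calG}\sqrt{k}/((1-\alpha)^2\sqrt{\beta_1}) = O(\sqrt{k}(\ln k)^{3/2})$, giving $\bar{\beta}\bar{h} = O(k^{3/2}(\ln k)^{5/2})$. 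Summing the four pieces yields $\kappa = O(c_{\calG}^2 \ln k + k^{3/2}(\ln k)^{5/2})$, and $\kappa'$ follows by adding the explicit $((z_{\max} h_1)^{1/3} + \sqrt{u_{\max} h_1})(1/\Deltamin^3 + C/\Deltamin)^{2/3}$ factor from \Cref{thm:main_bobw}.

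The main obstacle is entirely mechanical: the computation of $\bar{\beta}\bar{h}$ at $\bar{\alpha} = 1/\ln k$, where $\bar{h}$ scales like $k\ln k$ whereas $\bar{\beta}$ scales like $\sqrt{k}(\ln k)^{3/2}$. This is what produces the $k^{3/2}(\ln k)^{5/2}$ contribution to $\kappa$ that would otherwise be easy to overlook; once this computation is carried out correctly, the rest of the corollary is a direct substitution into \Cref{thm:main_bobw,thm:pm_global}.
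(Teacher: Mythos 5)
Your proposal is correct and follows exactly the route the paper intends: the corollary is obtained by substituting $\alpha = 1 - 1/\ln k$ into \Cref{thm:main_bobw} using the values of $\rho_1,\rho_2$ from \Cref{thm:pm_global}, and your constant bookkeeping (in particular $k^{1-\alpha}=\e$, $\rho = O(c_\calG^2\ln k)$ via $c_\calG\ge 1$, the case analysis absorbing $\sqrt{u_{\max}h_1 T}$, and the $\bar{\beta}\bar{h} = O(k^{3/2}(\ln k)^{5/2})$ computation) all checks out.
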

This regret upper bound is better than the bound based on FTRL in~\cite{tsuchiya23best,tsuchiya24exploration} in both stochastic and adversarial regimes, notably by a factor of $\ln T$ or $k$ in the stochastic regime. 
The bound for the adversarial regime with a $(\Delta, C, T)$-self-bounding constraint is the first MS-type bound in PM.
  The upper bounds for the adversarial regime and stochastic regime are optimal in terms of $T$ \cite{Bartok11minimax,Komiyama15PMDEMD}; however, even without considering BOBW guarantees, the optimality with respect to other variables $k$, $m$, and $d$ is unclear (\cf~\cite[Section 37.9]{lattimore2020book}), and exploring this is an important direction for future work.
As discussed in \Cref{sec:introduction}, employing the black-box reduction approach in \cite{dann23blackbox} also allows us to achieve an upper bound of the same order as our upper bound.
Nevertheless, as previously mentioned, the blackbox approach is a complicated approach involving multi-stage reductions and has the drawback of discarding past observations, similar to the doubling-trick.
Hence, demonstrating that using the FTRL framework alone can achieve the same upper bound is a significant theoretical advancement.

\section{Case study (2): Graph bandits with weak observability}\label{sec:graph}
\vspace{-2pt}
This section presents a new BOBW algorithm for weakly observable graph bandits.
\vspace{-5pt}
\subsection{Problem setting and some concepts in graph bandits}
\vspace{-5pt}
\paragraph{Problem setting}
In the graph bandit problem, the learner is given a directed feedback graph $G = (V, E)$ with $V = [k]$ and $E \subseteq V \times V$.
For each $i \in V$, let $\Nin(i) = \set*{ j \in V \colon (j,i) \in E}$ and $\Nout(i) = \set*{ j \in V \colon (i, j) \in E}$ be the in-neighborhood and out-neighborhood of vertex $i \in V$, respectively.
The game proceeds as the general online learning framework provided in \Cref{sec:preliminaries}, with action set $\calA = V$, loss function $\ell_t \colon V \to [0,1]$, and observation $o_t = \set*{ \ell_t(j) \colon j \in \Nout(I_t) }$.

\vspace{-5pt}
\paragraph{Observability and domination number}
Similar to partial monitoring, the minimax regret of graph bandits is characterized by the properties of the feedback graph $G$~\cite{alon2015online}.
A graph $G$ is \emph{observable} if it contains no self-loops, $\Nin(i) \neq \emptyset$ for all $i \in V$. 
A graph $G$ is \emph{strongly observable} if $i \in \Nin(i)$ or $V\setminus\set{i} \subseteq \Nin(i)$ for all $i \in V$.
Then, a graph $G$ is \emph{weakly observable} if it is observable but not strongly observable.\footnote{Similar to the locally observable games of partial monitoring, we can achieve an $O(\sqrt{T})$ regret for graph bandits with strong observability. See \eg~\cite{alon2015online} for details.} 
The minimax regret of the weakly observable is known to be~$\Theta(T^{2/3})$.

The weak domination number characterizes precisely the minimax regret.
The \emph{weakly dominating set} $D \subseteq V$ is a set of vertices such that $\set*{i \in V \colon i \not\in \Nout(i)} \subseteq \bigcup_{i \in D} \Nout(i)$.
Then, the \emph{weak domination number} $\delta(G)$ of graph $G$ is the size of the smallest weakly dominating set.
For weakly observable $G$, the minimax regret of~$\tilde{\Theta}(\delta^{1/3} T^{2/3})$ is known~\cite{alon2015online}.
Instead, our bound depends on the \emph{fractional domination number} $\delta^*(G)$, defined by the optimal value of the following linear program:
\begin{equation}
  \textstyle
  \minimize \; \sum_{i \in V} x_i 
  \quad 
  \mbox{subject to}
  \quad
  \sum_{i \in \Nin(j)} x_i \geq 1
  \; \forall j \in V
  \com \,
  0 \leq x_i \leq 1
  \; \forall i \in V
  \per
  \label{eq:frac_dom_num}
\end{equation}
We use $(x^*_{i})_{i \in V}$ to denote the optimal solution of~\eqref{eq:frac_dom_num} and define its normalized version $u \in \calP_k$ by $u_i = x^*_i / \sum_{j \in V} x^*_j$.
The advantage of using the fractional domination number mainly lies in its computational complexity; further details are provided in~\Cref{app:frac_dom_num}.

\subsection{Algorithm and regret analysis}
Here, we present a new BOBW algorithm based on \Cref{alg:bobw-stp-matching}.
We use the following parameters for \Cref{alg:bobw-stp-matching}.
We use the estimator $\hat{\ell}_t \in \R^k$ defined by
$
% \begin{equation}
  \hat{\ell}_{ti}
  =
  \frac{\ell_{ti}}{P_{ti}} \ind{ i \in \Nout(I_t) }
$
  % \com \quad
for
$
  P_{ti} = \sum_{j \in \Nin(i)} p_{tj}
  ,% \end{equation}
$
which is unbiased and has been employed in the literature~\cite{alon2015online,chen21understanding}.
We set $p_0$ in~\eqref{eq:pt_gammat} to $p_0 = u$.
For $\tilde{I}_t \in \argmax_{i \in [k]} q_{ti}$ and $q_{t*} = \min\set{q_{t\tilde{I}_t},1-q_{t\tilde{I}_t}}$, let
\begin{equation}
  % &
  \beta_1 \geq \frac{64 \delta^*}{1-\alpha} \com \,
  \bar{\beta}
  = 
  \frac{32 \sqrt{k \delta^*}}{(1-\alpha)^2 \sqrt{\beta_1}}
  \com \,
  % \nn
  % & 
  z_t
  \!=\!
  \frac{4 \delta^*}{1 - \alpha}
  \prn[\Bigg]{
    \sum_{i \in V \setminus \set{\tilde{I}_t}}
    q_{ti}^{2-\alpha}
    +
    q_{t*}^{2-\alpha}
  }
  \com \,
  u_t
  \!=\!
  \frac{8 \delta^*}{1-\alpha}
  q_{t*}^{1-\alpha}
  \per
  \label{eq:params_graph}
\end{equation}
Note that 
$z_{\max} = \frac{4\delta^*}{1-\alpha}$, 
$u_{\max} = \frac{8 \delta^*}{1-\alpha}$, and
$h_{\max} = h_1 = \frac{1}{\alpha} k^{1-\alpha}$.
Then, we can prove the following:
\begin{theorem}\label{thm:graph_weak}
  In the weakly observable graph bandit problem,
  for any $\alpha \in (0,1)$,
  % and dominating set $D \subseteq V$,
  \Cref{alg:bobw-stp-matching} with~\eqref{eq:params_graph} satisfies the assumptions of \Cref{thm:main_bobw} with 
  $
    \rho_1
    =
    \rho_2
    = 
    \Theta 
    \prn*{
      \frac{\delta^* k^{1-\alpha}}{\alpha(1-\alpha)}
    }
    .
  $
\end{theorem}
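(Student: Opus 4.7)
My plan is to verify the three conditions \one--\three\ of \Cref{thm:main_bobw} together with the two self-bounding inequalities in \eqref{eq:A2} under the parameter choice \eqref{eq:params_graph}. The argument closely mirrors the partial monitoring proof of \Cref{thm:pm_global}, with the loss-difference estimator replaced by the graph-bandit importance-weighted estimator and the game constant $c_{\calG}$ replaced by the fractional domination number $\delta^*$. Condition \one\ follows immediately from unbiasedness of $\hat{\ell}_t$ together with the identity $p_t - q_t = \gamma_t(p_0 - q_t)$ and $\inpr{\ell_t, p_0 - q_t} \le 1$, while condition \three\ is a one-step stability statement for Tsallis-entropy FTRL that depends only on the regularizer and can be reused verbatim from the analysis of \Cref{thm:pm_global}.

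The main work is condition \two. I would apply the standard local-norm upper bound for Tsallis-entropy FTRL, $\inpr{\tilde{\ell}_t, q_t - q_{t+1}} - \beta_t D_{-H_\alpha}(q_{t+1},q_t) \le \frac{1}{2\beta_t(1-\alpha)} \sum_{i} q_{ti}^{2-\alpha} \tilde{\ell}_{ti}^{\,2}$, to the shifted estimator $\tilde{\ell}_t = \hat{\ell}_t - \hat{\ell}_{t\tilde{I}_t} \ones$; since the FTRL iterate is invariant under constant additive shifts of the losses, this substitution is benign, and it crucially annihilates the $i=\tilde{I}_t$ coordinate of the local-norm sum. The validity of the quadratic bound demands $|\tilde{\ell}_{ti}|$ to be controlled by the Hessian entry $\beta_t q_{ti}^{\alpha-2}/(1-\alpha)$ for $i \ne \tilde{I}_t$; since $q_{ti} \le q_{t*}$ for $i \neq \tilde{I}_t$ in every regime (handling $q_{t\tilde{I}_t} \le 1/2$ and $q_{t\tilde{I}_t} > 1/2$ separately) and $|\tilde{\ell}_{ti}| \le 2\delta^*/\gamma_t$ via $P_{ti} \ge \gamma_t/\delta^*$ (a direct consequence of the LP certificate $u$ satisfying $\sum_{j\in\Nin(i)} u_j \ge 1/\delta^*$), validity is ensured by the choice $\gamma_t \ge u_t/\beta_t \gtrsim \delta^* q_{t*}^{1-\alpha}/((1-\alpha)\beta_t)$. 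Taking conditional expectation, using $\E_t[\hat{\ell}_{ti}^{\,2}] \le 1/P_{ti}$, applying $(a-b)^2 \le 2a^2 + 2b^2$ to split $\tilde{\ell}_{ti}^{\,2}$, and invoking $\gamma_t \ge \gamma'_t$ then yields a bound of order $\delta^* \sum_{i\neq \tilde{I}_t} q_{ti}^{2-\alpha} / ((1-\alpha)\beta_t \gamma'_t) \lesssim z_t/(\beta_t \gamma'_t)$.

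The self-bounding inequalities \eqref{eq:A2} then reduce to three algebraic facts: (a) $q_{t*} \le 1-q_{ta^*}$ in every configuration of $\tilde{I}_t$ versus $a^*$, verified by a short case analysis; (b) $\sum_{i\neq\tilde{I}_t} q_{ti}^{2-\alpha} \le (1-q_{ta^*})^{2-\alpha}$, obtained by bounding $q_{ti}^{2-\alpha} \le q_{ti}(1-q_{ta^*})^{1-\alpha}$ for $i \ne \tilde{I}_t$ and summing; and (c) $H_\alpha(q_t) \le k^{1-\alpha}(1-q_{ta^*})^\alpha/\alpha$, by applying the power-mean inequality to $\sum_{i\neq a^*} q_{ti}^\alpha$. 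Substituting (a)--(c) into the definitions of $z_t$ and $u_t$ in \eqref{eq:params_graph} yields $z_t h_t \le \rho_1 (1-q_{ta^*})^2$ and $u_t h_t \le \rho_2 (1-q_{ta^*})$ with $\rho_1 = \rho_2 = \Theta(\delta^* k^{1-\alpha}/(\alpha(1-\alpha)))$, as claimed.

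I expect condition \two\ to be the main obstacle. The quadratic local-norm bound is a one-sided Taylor expansion whose validity is conditional on $|\tilde{\ell}_{ti}|$ not being too large relative to the Hessian entries; since $|\tilde{\ell}_{ti}|$ can scale like $\delta^*/\gamma_t$ while $q_{ti}$ for $i \ne \tilde{I}_t$ can shrink all the way to $q_{t*}$, the additive correction $u_t/\beta_t$ in the forced-exploration rate must be large enough to absorb this mismatch yet small enough that the bias term $\sum_t \gamma_t$ remains controlled. The calibration $u_t = 8\delta^* q_{t*}^{1-\alpha}/(1-\alpha)$ is specifically designed to achieve this balance, and the careful case analysis separating the argmax $\tilde{I}_t$ from the rest is where the subtlety lies.
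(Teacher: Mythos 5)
Your proposal is correct and follows essentially the same route as the paper: verify \one--\three\ and \eqref{eq:A2} by transplanting the partial-monitoring argument with $c_\calG$ replaced by $\delta^*$, the key input being $P_{ti}\geq\gamma_t/\delta^*$ from the LP certificate; your hand-rolled treatment of condition \two\ (shifting by $\hat{\ell}_{t\tilde{I}_t}\ones$ to kill the argmax coordinate, then splitting via $(a-b)^2\leq 2a^2+2b^2$) is precisely the content of the packaged \Cref{lem:stab_tsallis_star} that the paper cites instead. One caution: condition \three\ is \emph{not} a regularizer-only fact --- the paper verifies it via \Cref{lem:ht_htp}, whose hypotheses \eqref{eq:cond_ell} and \eqref{eq:cond_beta_growth} require re-bounding $\nrm{\hat{\ell}_t}_\infty$ and the increment $\beta_{t+1}-\beta_t$ using the graph-specific $z_t$, $u_t$, $h_t$, and the auxiliary weight $\bar{\beta}$ from \eqref{eq:params_graph}; the verification does mirror the PM case line by line, but it must actually be redone with these quantities rather than "reused verbatim."
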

The proof of \Cref{thm:graph_weak} is given in \Cref{app:proof_graph}.
Setting $\alpha = 1 - 1/(\ln k)$ gives the following:
\begin{corollary}\label{cor:bobw_graph}
  In weakly observable graph bandits 
  with $T \geq \max\set{\delta^* (\ln k)^2, \tau}$,
  \Cref{alg:bobw-stp-matching} with \eqref{eq:params_graph} for 
  % domination number ${\delta^*}$ and 
  $\alpha = 1 \!-\! 1/(\ln k)$ achieves
  \begin{equation}
    \Reg_T 
    =
    \begin{dcases}
      O\prn[\big]{
        {\delta^*}^{1/3} T^{2/3}
        \prn*{
          \ln k
        }^{1/3}
        +
        \kappa
      }
      \qquad \mbox{in adversarial regime} \\ 
      O\prn*{
        \frac{{\delta^*} \ln k}{\Deltamin^2} \ln \prn*{{T \Deltamin^3}}
        +
        \prn*{\frac{C^2 {\delta^*} \ln k}{\Deltamin^2} \ln \prn*{\frac{T \Deltamin}{C}}}^{1/3}
        +
        \kappa'
      }
      \\ 
      \qquad\qquad\qquad \mbox{in adversarial regime with a $(\Delta, C, T)$-self-bounding constraint}  \per \\
    \end{dcases}
  \end{equation}
  % \red{
  Here, if we use $\beta_1 = 64 \delta^* / (1 - \alpha)$, which satisfies \eqref{eq:params_graph},
  $\kappa = O(\delta^* \log k + k^{3/2} (\log k)^{5/2})$ 
  and 
  $\kappa' = \kappa + O\prn{ ( (\delta^* \log k)^{1/3} + \sqrt{\delta^* \log k} ) ( \frac{1}{\Deltamin^3} + \frac{C}{\Deltamin} )^{2/3} }$.
  % }
\end{corollary}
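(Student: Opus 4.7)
The plan is to specialize Theorem~\ref{thm:main_bobw} using Theorem~\ref{thm:graph_weak} under the choice $\alpha = 1 - 1/\ln k$, reducing the corollary to a bookkeeping exercise since the hard work of verifying assumptions~\textup{\one}--\textup{\three} and the self-bounding inequality~\eqref{eq:A2} has already been done in Theorem~\ref{thm:graph_weak}.

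First I would reduce the problem-dependent quantities using the key identity $k^{1-\alpha} = k^{1/\ln k} = e$. This gives $h_1 = h_{\max} = k^{1-\alpha}/\alpha = \Theta(1)$, $1-\alpha = 1/\ln k$, and $1/(\alpha(1-\alpha)) = \Theta(\ln k)$, so from~\eqref{eq:params_graph} and Theorem~\ref{thm:graph_weak} we obtain $z_{\max}, u_{\max} = \Theta(\delta^* \ln k)$ and $\rho := \max\{\rho_1,\rho_2\} = \Theta(\delta^* \ln k)$. Plugging these into~\eqref{eq:thm_adv} yields an adversarial bound of the form $(\delta^* \ln k)^{1/3} T^{2/3} + \sqrt{\delta^* (\ln k) T} + \kappa$, where the $\sqrt{\cdot}$ term is absorbed into the $T^{2/3}$ term whenever $T \gtrsim \delta^* \ln k$, which is guaranteed by the hypothesis $T \geq \delta^*(\ln k)^2$. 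Substituting $\rho = \Theta(\delta^*\ln k)$ into~\eqref{eq:thm_stoc} analogously yields the claimed self-bounding bound, with the $\rho/\Deltamin^2$ factor becoming $(\delta^* \ln k)/\Deltamin^2$ and the corruption correction taking the stated form.

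The remaining step is to evaluate $\kappa$ and $\kappa'$ at $\beta_1 = 64\delta^*/(1-\alpha) = \Theta(\delta^* \ln k)$. Three of the four summands of $\kappa = \sqrt{z_{\max}/\beta_1} + u_{\max}/\beta_1 + \beta_1 h_1 + \bar{\beta}\bar{h}$ are immediate: $\sqrt{z_{\max}/\beta_1}, u_{\max}/\beta_1 = O(1)$ and $\beta_1 h_1 = O(\delta^* \ln k)$. The nontrivial piece is $\bar{\beta}\bar{h}$: using $k^\alpha = k/e$ one computes $\bar{h} = (1/\bar{\alpha})(k^\alpha - 1) = (\ln k)(k/e - 1) = O(k\ln k)$, while $\bar{\beta} = 32\sqrt{k\delta^*}/((1-\alpha)^2\sqrt{\beta_1}) = \Theta(\sqrt{k}\,(\ln k)^{3/2})$, so $\bar{\beta}\bar{h} = O(k^{3/2}(\ln k)^{5/2})$. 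This yields the claimed $\kappa = O(\delta^* \ln k + k^{3/2}(\ln k)^{5/2})$; the expression for $\kappa'$ follows by direct substitution of $z_{\max}, u_{\max}, h_1$ into its definition in Theorem~\ref{thm:main_bobw}.

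The main obstacle is purely arithmetic rather than conceptual: carefully tracking the cancellations that make $\bar{\beta}\bar{h}$ come out to $k^{3/2}(\ln k)^{5/2}$, and verifying that the $\sqrt{u_{\max} h_1 T}$ term is dominated by the $T^{2/3}$ leading term under the stated threshold on $T$. Once these routine calculations are in place, the corollary follows by direct substitution into~\eqref{eq:thm_adv} and~\eqref{eq:thm_stoc}.
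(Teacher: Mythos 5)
Your proposal is correct and is exactly the paper's (implicit) argument: the paper derives this corollary by direct substitution of $\alpha = 1-1/\ln k$ into Theorem~\ref{thm:main_bobw} via Theorem~\ref{thm:graph_weak}, and your bookkeeping — $k^{1-\alpha}=e$, $\rho=\Theta(\delta^*\ln k)$, the evaluation of $\bar\beta\bar h = O(k^{3/2}(\ln k)^{5/2})$, and the absorption of $\sqrt{u_{\max}h_1 T}$ under $T\geq \delta^*(\ln k)^2$ — matches what the paper intends.
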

Our bound is the first BOBW FTRL-based algorithm with the $O(\log T)$ bound in the stochastic regime, improving the existing best FTRL-based algorithm in~\cite{ito2022nearly}.
Compared to the reduction-based approach in~\cite{dann23blackbox}, the dependences on~$T$ are the same.
However, our bound unfortunately depends on the fractional domination number~${\delta^*}$ instead of the weak domination number~$\delta$, which can be smaller than~$\delta^*$.
Roughly speaking, this comes from the use of Tsallis entropy instead of Shannon entropy employed for the existing BOBW bound~\cite{ito2022nearly}.
The technical challenges of making our bound depend on $\delta$ instead of ${\delta^*}$ or the weak fractional domination number $\tilde{\delta}^*$ are further discussed in \Cref{app:challenge_graph}.
Still, we believe that our algorithm can perform better since the reduction-based algorithm discards past observations as the doubling trick.
Furthermore, the bound for the adversarial regime with a~$(\Delta, C, T)$-self-bounding constraint is the first MS-type bound in weakly observable graph bandits.

\section{Conclusion and future work}
\vspace{-2pt}
In this work, we investigated hard online learning problems, that is online learning with a minimax regret of $\Theta(T^{2/3})$, and established a simple and adaptive learning rate framework called stability--penalty--bias matching (SPB-matching).
The SPB-matching allows us to prove a regret bound of $\prn[\Big]{\sumT \sqrt{z_t \hat{h}_{t+1} \ln T}}^{2/3}$ for the stability component $z_t$ and the penalty component $\hat{h}_{t+1}$, which differs from the existing stability-penalty-adaptive-type bounds for problems with a minimax regret of $\Theta(\sqrt{T})$~\cite{tsuchiya23stability,ito24adaptive}.
We showed that FTRL with the SPB-matching learning rate and the Tsallis entropy regularizer improves the existing BOBW regret bounds based on FTRL for two typical hard problems with indirect feedback, partial monitoring with global observability, graph bandits with weak observability.
We also showed that the SPB-matching can be applied to derive the first BOBW regret bounds for multi-armed bandits with paid observations.

Interestingly, the optimal exponent of Tsallis entropy in these settings is $1 - 1/(\log k)$, suggesting the reasonableness of using Shannon entropy in existing algorithms for partial monitoring~\cite{lattimore20exploration} and graph bandits~\cite{alon2015online}.
Our learning rate is surprisingly simple compared to existing ones for hard problems~\cite{ito2022nearly,tsuchiya23best}.
Hence, it is important future work
to investigate whether this simplicity can be leveraged to apply SPB-matching to other hard problems, such as bandits with switching costs~\cite{dekel14bandits} 
and dueling bandits with Borda winner~\cite{saha21adversarial}.

\begin{ack}
The authors are grateful to the anonymous reviewers for their insightful feedback and constructive suggestions, which have helped to significantly improve the manuscript.
TT was supported by JST ACT-X Grant Number JPMJAX210E and JSPS KAKENHI Grant Number JP24K23852.
\end{ack}

% \section*{References}

\bibliography{ref.bib}

%%%%%%%%%%%%%%%%%%%%%%%%%%%%%%%%%%%%%%%%%%%%%%%%%%%%%%%%%%%%

\newpage 
\appendix

% \input{tex_src/app_notation.tex}
% \red{
% \begin{itemize}
%   \item 割とちゃんと見直した： Appendix B, C, D, E, F
%   \item 未整備：Appendix G，競合比などやることが結構残っている
% \end{itemize}
% }

\section{Additional related work}
% \todo{Additional Related Work}
% \paragraph{Adaptive learning rate}
\vspace{-2pt}
\paragraph{Best-of-both-worlds algorithms}
The study of BOBW algorithms was initiated by~\citet{bubeck2012best}, who focused on multi-armed bandits.
The motivation arises from the difficulty of determining in advance whether the underlying environment is stochastic or adversarial in real-world problems.
Since then, BOBW algorithms have been extensively studied~\cite{seldin14one,auer16algorithm,derooij14follow,gaillard2014second,luo18efficient,pacchiano22best}, and recently, FTRL is the common approach for developing BOBW algorithms~\cite{zimmert19optimal,zimmert2019beating,ito21parameter,jin23improved}. 
One reason is by appropriately designing the learning rate and regularizer of FTRL, we can prove a BOBW guarantee for various problem settings. 
Another reason is that FTRL-based approaches not only perform well in both stochastic and adversarial regimes but also achieve favorable regret bounds in the adversarial regime with a self-bounding constraint, intermediate settings including stochastically constrained adversarial regime~\cite{wei2018more} and stochastic regime with adversarial corruptions~\cite{lykouris2018stochastic}. 
This intermediate regime is particularly useful, considering that real-world problems often lie between purely stochastic and purely adversarial regimes.

This study is closely related to FTRL with the Tsallis entropy regularization. 
Tsallis entropy in online learning was introduced in~\cite{audibert09minimax,abernethy15fighting}, and its significance for BOBW algorithms was established in~\cite{zimmert2021tsallis}.
In the multi-armed bandit problem, using the exponent of Tsallis entropy $\alpha = 1/2$ provides optimal upper bounds, up to logarithmic factors, in both stochastic and adversarial regimes~\cite{zimmert2021tsallis}. 
However, in the graph bandits, where the dependence on $k$ is critical or in decoupled settings, optimal upper bounds can be achieved with $\alpha \neq 1/2$~\cite{kwon16gains,zimmert19connection,rouyer20tsallis,ito24adaptive}.
In this work, we demonstrate that using the exponent tofo $\alpha = 1 - 1/(\log k)$ for the number of actions $k$ results in favorable regret bounds, as shown in~\Cref{cor:bobw_pm,cor:bobw_graph}.

\vspace{-2pt}
\paragraph{Partial monitoring}
Partial monitoring~\cite{Rustichini99general,Piccolboni01FeedExp3,CesaBianchi06regret} is a very general online decision-making framework and includes a wide range of problems such as multi-armed bandits, (utility-based) dueling bandits~\cite{gajane15utility}, online ranking~\cite{chaudhuri17online}, and dynamic pricing~\cite{kleinberg03dp}. 
The characterization of the minimax regret in partial monitoring has been progressively understood through various studies.
It is known that all partial monitoring games can be classified into trivial, easy, hard, and hopeless games, where their minimax regrets are $0$, $\Theta(\sqrt{T})$, $\Theta(T^{2/3})$ and $\Omega(T)$.
For comprehensive literature, refer to \cite{Bartok11minimax} and the improved results presented in~\cite{lattimore19cleaning,lattimore19information}.
The games for which we can achieve a regret bound of~$O(T^{2/3})$ correspond to globally observable games.

There is limited research on BOBW algorithms for partial monitoring with global observability~\cite{tsuchiya23best,tsuchiya24exploration}.
The existing bounds exhibit suboptimal dependencies on $k$ and $T$, particularly in the stochastic regime, which comes from the use of the Shannon entropy or the log-barrier regularization.
By employing Tsallis entropy, our algorithm is the first to achieve ideal dependencies on both $k$ and $T$.
It remains uncertain whether our upper bound in the stochastic regime is optimal with respect to variables other than $T$.
While there is an asymptotic lower bound for the stochastic regime~\cite{Komiyama15PMDEMD}, its coefficient is expressed as a complex optimization problem.
Investigating this lower bound further is important future work.

\vspace{-2pt}
\paragraph{Graph bandits}
The study on the graph bandit problem, which is also known as online learning with feedback graphs, was initiated by~\cite{mannor11from}. 
This problem includes several important problems such as the expert setting, multi-armed bandits, and label-efficient prediction. 
For example, considering a feedback graph with only self-loops, one can see that this corresponds to the multi-armed bandit problem.
One of the most seminal studies on the graph bandit problem is by~\citet{alon2015online}, who elucidated how the structure of the feedback graph influences its minimax regret. 
They demonstrated that the minimax regret is characterized by the observability of the feedback graph, introducing the notions of weakly observable graphs and strongly observable graphs.
Of particular relevance to this study is the minimax regret of $\tilde{O}(\delta T^{2/3})$ for weakly observable graphs, where $\delta$ is the weak domination number and $\tilde{O}(\cdot)$ ignores logarithmic factors. 
Recently, this upper bound was improved to $\tilde{O}(\delta^* T^{2/3})$ by replacing the weak domination number with the fractional weak domination number~$\tilde{\delta}^*$~\cite{chen21understanding}.

There are several BOBW algorithms for graph bandits~\cite{erez2021best,kong22simultaneously,rouyer2022near,ito2022nearly,dann23blackbox}. 
However, only a few of these studies consider the weakly observable setting~\cite{kong22simultaneously,ito2022nearly,dann23blackbox}. 
The existing results based on FTRL rely on the domination number rather than the weak domination number~\cite{kong22simultaneously} or exhibit poor dependence on $T$~\cite{kong22simultaneously,ito2022nearly}, 
and the best regret bound of them still exhibited a dependence on $T$ of $(\log T)^2$~\cite{ito2022nearly}.
Our algorithm is the first FTRL-based algorithm in the weakly observable setting that achieves an~$O(\log T)$ stochastic bound.

\section{Proofs for SPB-matching learning rate (\Cref{sec:adaptive_lr})}\label{app:proof_adaptive_lr}

\subsection{Proof of \Cref{lem:F_upper}}\label{subsec:proof_F_upper}

\begin{proof}[{Proof of \Cref{lem:F_upper}}]
We first consider Rule 1 in~\eqref{eq:rule2}.
The learning rate $\beta_t$ is lower-bounded as
\begin{equation}
  \beta_t^{3/2} 
  \geq
  \beta_t^{1/2} \prn*{ \beta_{t-1} + \frac{2}{\hat{h}_t} \sqrt{\frac{z_t}{\beta_t}} }
  \geq 
  \beta_{t-1}^{3/2} + \frac{2\sqrt{z_t}}{\hat{h}_t}
  \geq
  2 \sum_{s=1}^t \frac{\sqrt{z_s}}{\hat{h}_s}
  \com 
  \label{eq:beta_lower_1}
\end{equation}
where the first inequality follows from the definition of $\beta_t$ in~\eqref{eq:rule2} and the second inequality from the fact that $(\beta_t)_t$ is non-decreasing.
We also have
\begin{equation}
  \beta_t^2 
  \geq 
  \beta_t \prn*{ \beta_{t-1} + \frac{1}{\hat{h}_t} \frac{u_t}{\beta_t} }
  \geq 
  \beta_{t-1}^{3/2} + \frac{u_t}{\hat{h}_t}
  \geq
  \sum_{s=1}^t \frac{u_s}{\hat{h}_s}
  \per
  \label{eq:beta_lower_2}
\end{equation}
Using the last two lower bounds on $\beta_t$, we can bound $\F$ in~\eqref{eq:def_F} as
\begin{align}
  \F(\beta_{1:T}, z_{1:T}, u_{1:T}, h_{1:T})
  &\leq 
  \sum_{t=1}^T
  \prn*{
    2 \sqrt{\frac{z_t}{\beta_t}}
    +
    \frac{u_t}{\beta_t}
    +
    (\beta_t - \beta_{t-1}) \hat{h}_{t}
  }
  \nn 
  &\leq
  \sum_{t=1}^T
  \prn*{
    4 \sqrt{\frac{z_t}{\beta_t}}
    +
    2 \frac{u_t}{\beta_t}
  }
  \nn
  &\leq 
  4
  \sumT 
  \sqrt{\frac{z_t}{\prn*{2 \sum_{s=1}^t \sqrt{z_s}/\hat{h}_s }^{1/3} }}
  +
  2
  \sumT 
  \frac{u_t}{\sqrt{\sum_{s=1}^t u_t / \hat{h}_t }}
  \nn
  &=
  3.2 \Gone(z_{1:T}, \hat{h}_{1:T})
  +
  2 \Gtwo(u_{1:T}, \hat{h}_{1:T})
  \com 
\end{align}
where 
the second inequality follows from the definition of $\beta_t$ in~\eqref{eq:rule2} and
the third inequality from \eqref{eq:beta_lower_1} and \eqref{eq:beta_lower_2}.
This completes the proof of the first statement in \Cref{lem:F_upper}.

We next consider Rule 2 in~\eqref{eq:rule2}.
In this case, we can bound $\F$ as follows:
\begin{align}
  \F(\beta_{1:T}, z_{1:T}, u_{1:T}, h_{1:T})
  &\leq 
  2 \sqrt{\frac{z_1}{\beta_1}} + \frac{u_1}{\beta_1} + \beta_1 h_1
  +
  \sum_{t=2}^T \prn*{
    2 \sqrt{\frac{z_t}{\beta_t}}
    +
    \frac{u_t}{\beta_t}
    + 
    (\beta_t - \beta_{t-1}) \hat{h}_{t}
  }
  \nn 
  &=
  2 \sqrt{\frac{z_1}{\beta_1}} + \frac{u_1}{\beta_1} + \beta_1 h_1
  +
  \sum_{t=2}^T \prn*{ 
    2 \sqrt{\frac{z_t}{\beta_t}} + \frac{u_t}{\beta_t}
    + 
    2 \sqrt{\frac{z_{t-1}}{\beta_{t-1}}} + \frac{u_{t-1}}{\beta_{t-1}}
  }
  \nn 
  &\leq
  \beta_1 h_1
  +
  \sum_{t=1}^T \prn*{ 4 \sqrt{\frac{z_t}{\beta_t}} + 2 \frac{u_t}{\beta_t}} 
  \com
  \label{eq:F_zbeta_upper_2}
\end{align}
where the equality follows from~\eqref{eq:rule2}.

We then first consider bounding $\sumT \sqrt{z_t / \beta_t}$.
We can lower-bound $\beta_t^{3/2}$ as
\begin{equation}
  \beta_t^{3/2}
  \geq
  \beta_{t}^{1/2}
  \prn*{
    \beta_{t-1}
    +
    \frac{2}{\hat{h}_t} \sqrt{\frac{z_{t-1}}{\beta_{t-1}}}
  }
  \geq
  \beta_{t-1}^{3/2}
  +
  \frac{2\sqrt{z_{t-1}}}{\hat{h}_t}
  \geq
  \beta_1^{3/2} + 2 \sum_{s=2}^t \frac{\sqrt{z_{s-1}}}{\hat{h}_s}
  \eqqcolon 
  \prn*{\betaone_t}^{3/2}
  \com 
  \label{eq:def_beta_k1}
\end{equation}
where we define 
% $
\begin{equation}
\betaone_t 
= 
\prn*{\beta_1^{3/2} + 2 \sum_{s=2}^t \frac{\sqrt{z_{s-1}}}{\hat{h}_s}}^{2/3} 
= 
\prn*{\beta_1^{3/2} + 2 \sum_{s=1}^{t-1} \frac{\sqrt{z_{s}}}{\hat{h}_{s+1}}}^{2/3} \leq \beta_t
\per
\end{equation}
% $.
In the following, we will upper-bound $\sumT \sqrt{{z_t}/{\beta_t}} \leq \sumT \sqrt{{z_t}/{\betaone_t}}$.
Let $c = (1 + \delta)^2$ for $\delta > 0$ and
and we then define $\calS = \set{ t \in [T] \colon \betaone_{t+1} \leq c^2 \betaone_t}$ and $\calS^{\mathsf{c}} = [T] \setminus \calS = \set{ t \in [T] \colon \betaone_{t+1} > c^2 \betaone_t}$.
From these definitions, we have
\begin{equation}
  \sum_{t\in\calS^{\mathsf{c}}} \sqrt{\frac{z_t}{\betaone_t}}
  \leq
  \sum_{t\in\calS^{\mathsf{c}}} \sqrt{\frac{z_{\max}}{\betaone_t}}
  \leq 
  \sum_{s=0}^\infty \prn*{\frac{1}{c}}^s \sqrt{\frac{z_{\max}}{\beta_1}}
  \leq 
  \frac{1}{1 - 1/c} \sqrt{\frac{z_{\max}}{\beta_1}}
  \per 
\end{equation}
Hence, using the last inequality, we obtain
\begin{align}
  \sumT \sqrt{\frac{z_t}{\beta_t}}
  &\leq 
  \sum_{t\in\calS} \sqrt{\frac{z_t}{\betaone_t}}
  +
  \sum_{t\in\calS^{\mathsf{c}}} \sqrt{\frac{z_t}{\betaone_t}}
  \nn
  &\leq
  c \sum_{t\in\calS} \sqrt{\frac{z_t}{\betaone_{t+1}}}
  +
  \frac{1}{1 - 1/c} \sqrt{\frac{z_{\max}}{\beta_1}}
  \nn 
  &\leq 
  c \sum_{t\in\calS} \sqrt{\frac{z_t}{ \prn*{2 \sum_{s=1}^t \sqrt{z_s} / \hat{h}_{s+1}}^{2/3} } }
  +
  \frac{1}{1 - 1/c} \sqrt{\frac{z_{\max}}{\beta_1}}
  \nn 
  &=
  \frac{c}{2^{1/3}} \, \Gone(z_{1:T}, \hat{h}_{2:T+1}) + \frac{c}{c-1} \sqrt{\frac{z_{\max}}{\beta_1}}
  \com 
  \label{eq:sqrt_zbeta_upper}
\end{align}
where the third inequality follows from the definition of $\betaone$ in \eqref{eq:def_beta_k1}.

We next bound $\sumT {u_t}/{\beta_t}$.
We can lower-bound $\beta_t^2$ as
\begin{align}
  \beta_t^2
  \geq
  \beta_{t}
  \prn*{
    \beta_{t-1}
    +
    \frac{1}{\hat{h}_t} \frac{u_{t-1}}{\beta_{t-1}}
  }
  \geq
  \beta_{t-1}^2
  +
  \frac{u_{t-1}}{\hat{h}_t}
  \geq
  \beta_1^2 
  +
  \sum_{s=2}^t \frac{u_{s-1}}{\hat{h}_s}
  \eqqcolon 
  \prn*{\betatwo_t}^2
  \com 
\end{align}
where we define 
% $
\begin{equation}
  \betatwo_t = \sqrt{\beta_1^2 + \sum_{s=2}^t \frac{u_{s-1}}{\hat{h}_s}} = \sqrt{\beta_1^2 + \sum_{s=1}^{t-1} \frac{u_{s}}{\hat{h}_{s+1}}} \leq \beta_t
  \per 
\end{equation}
% $.
In the following, we will upper-bound $\sumT {{u_t}/{\beta_t}} \leq \sumT {{u_t}/{\betatwo_t}}$.
Let us define $\calT = \set*{ t \in [T] \colon \betatwo_{t+1} \leq c \betatwo_t}$ and $\calT^{\mathsf{c}} = [T] \setminus \calT = \set*{ t \in [T] \colon \betatwo_{t+1} > c \betatwo_t}$.
From these definitions, we have
\begin{equation}
  \sum_{t\in\calT^{\mathsf{c}}} {\frac{u_t}{\betatwo_t}}
  \leq
  \sum_{t\in\calT^{\mathsf{c}}} {\frac{u_{\max}}{\betatwo_t}}
  \leq 
  \sum_{s=0}^\infty \prn*{\frac{1}{c}}^s {\frac{u_{\max}}{\beta_1}}
  \leq 
  \frac{1}{1 - 1/c} {\frac{u_{\max}}{\beta_1}}
  \per 
\end{equation}
Hence, using the last inequality, we obtain
\begin{align}
  \sumT {\frac{u_t}{\beta_t}}
  &\leq 
  \sum_{t\in\calT} {\frac{u_t}{\betatwo_t}}
  +
  \sum_{t\in\calT^{\mathsf{c}}} {\frac{u_t}{\betatwo_t}}
  \nn
  &\leq
  c \sum_{t\in\calT} {\frac{u_t}{\betatwo_{t+1}}}
  +
  \frac{1}{1 - 1/c} {\frac{u_{\max}}{\beta_1}}
  \nn 
  &\leq 
  c \sum_{t\in\calT} {\frac{u_t}{ \sqrt{\sum_{s=1}^t u_s / \hat{h}_{s+1}} } }
  +
  \frac{1}{1 - 1/c} {\frac{u_{\max}}{\beta_1}}
  \nn 
  &=
  c \, \Gtwo(u_{1:T}, \hat{h}_{2:T+1}) + \frac{c}{c-1} {\frac{z_{\max}}{\beta_1}}
  \per
  \label{eq:zbeta_upper}
\end{align}
Finally, combining \eqref{eq:F_zbeta_upper_2} with \eqref{eq:sqrt_zbeta_upper} and \eqref{eq:zbeta_upper}, we obtain
\begin{align}
  \F(\beta_{1:T}, z_{1:T}, u_{1:T}, h_{1:T})
  &\leq 
  3.2 c \, \Gone(z_{1:T}, \hat{h}_{2:T+1})
  +
  2 c \, \Gtwo(u_{1:T}, \hat{h}_{2:T+1})
  \nn 
  &\quad+
  \frac{c}{c - 1} \prn*{2\sqrt{\frac{z_{\max}}{\beta_1}} + \frac{u_{\max}}{\beta_1}
  }
  +
  \beta_1 h_1
  \per 
\end{align}
Setting $c = 1.25$ completes the proof.
\end{proof}

\subsection{Proof of \Cref{lem:G1_upper}}

Before proving \Cref{lem:G1_upper}, we prepare the following lemma, a variant of \cite[Lemma 4.13]{orabona2019modern}.
\begin{lemma}\label{lem:deriv_int}
  Let $\calT \subseteq [T] = \set{1,\dots,T}$ and $(x_t)_{t\in\calT}$ be a non-negative sequence.
  Then,
  \begin{equation}
    \sum_{t \in \calT}
    \frac{x_t}{\prn*{\sum_{s \in [t] \cap \calT} x_s}^{1/3}}
    \leq 
    \frac32
    \prn*{
      \sum_{t \in \calT} x_t
    }^{2/3}
    \per 
  \end{equation}
\end{lemma}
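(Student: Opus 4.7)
The plan is to reduce the claim to a telescoping estimate via concavity of the map $u \mapsto u^{2/3}$. Write $S_t = \sum_{s \in [t] \cap \calT} x_s$ for the running cumulative sum along $\calT$, with the convention $S_0 = 0$. The goal is to show
\begin{equation*}
  \sum_{t \in \calT} \frac{x_t}{S_t^{1/3}} \leq \frac{3}{2}\, S_T^{2/3}.
\end{equation*}

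The key per-term inequality I would establish is
\begin{equation*}
  \frac{x_t}{S_t^{1/3}} \leq \frac{3}{2}\bigl(S_t^{2/3} - S_{t-1}^{2/3}\bigr) \qquad \text{for every } t \in \calT \text{ with } S_t > 0.
\end{equation*}
This follows from the tangent-line bound for the concave function $f(u) = u^{2/3}$: since $f'(u) = \tfrac{2}{3} u^{-1/3}$ on $\Rp$, concavity gives $f(S_{t-1}) \leq f(S_t) + f'(S_t)(S_{t-1} - S_t)$, and rearranging while using $S_t - S_{t-1} = x_t$ yields the display above after multiplying by $\tfrac{3}{2}$. Indices $t \in \calT$ for which $S_t = 0$ force $x_t = 0$ and contribute nothing on either side, so they can be discarded from both sums without loss.

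Summing the per-term inequality over $t \in \calT$ telescopes the right-hand side to
\begin{equation*}
  \frac{3}{2}\bigl(S_T^{2/3} - S_0^{2/3}\bigr) = \frac{3}{2}\Bigl(\sum_{t \in \calT} x_t\Bigr)^{2/3},
\end{equation*}
which is exactly the desired bound. There is no real obstacle here: the whole lemma is a one-step application of concavity and telescoping, with the only very minor subtlety being the treatment of the degenerate case where early partial sums vanish, handled as noted above. This is the analogue with exponent $1/3$ of the standard $\sum_t x_t/\sqrt{S_t} \leq 2\sqrt{S_T}$ identity and shares the same proof template.
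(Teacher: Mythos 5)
Your proof is correct and follows essentially the same route as the paper: the identical per-term bound $x_t/S_t^{1/3} \leq \tfrac{3}{2}(S_t^{2/3} - S_{t-1}^{2/3})$ followed by telescoping, with the only cosmetic difference being that you derive that bound from the tangent-line inequality for the concave map $u \mapsto u^{2/3}$ whereas the paper obtains it by comparing the constant integrand $S_t^{-1/3}$ to $z^{-1/3}$ over $[S_{t-1}, S_t]$.
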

\begin{proof}
  Let $S_t = \sum_{s\in[t]\cup\calT} x_s$.
  Then,
  \begin{equation}
    \frac{x_t}{\prn*{\sum_{s \in [t] \cap \calT} x_s}^{1/3}}
    =
    \frac{x_t}{S_t^{1/3}}
    =
    \int_{S_{t-1}}^{S_t} S_t^{-1/3} \d z
    \leq 
    \int_{S_{t-1}}^{S_t} z^{-1/3} \d z
    =
    \frac{3}{2} \prn*{S_t^{2/3} - S_{t-1}^{2/3}}
    \per 
  \end{equation}
  Summing up the last inequality over $\calT$, we obtain
  \begin{equation}
    \sum_{t \in \calT}
    \frac{x_t}{\prn*{\sum_{s \in [t] \cap \calT} x_s}^{1/3}}
    =
    \frac{3}{2} \sum_{t \in \calT}
    \prn*{S_t^{2/3} - S_{t-1}^{2/3}}
    \leq 
    \frac32
    S_T^{2/3}
    \com 
  \end{equation}
  where the last inequality follows from the telescoping argument with the assumption that $x_t \geq 0$.
\end{proof}

\begin{proof}[{Proof of \Cref{lem:G1_upper}}]
We upper-bound $\Gone$ as follows:
\begin{align}
  \Gone(z_{1:T}, h_{1:T})
  &=
  \sumT \frac{\sqrt{z_t}}{\prn*{\sum_{s=1}^t \sqrt{z_s}/h_s}^{1/3}}
  =
  \sum_{j=1}^{J+1} \sum_{t \in \calT_j}
  \frac{\sqrt{z_t}}{\prn*{\sum_{s=1}^t \sqrt{z_s}/h_s}^{1/3}}
  \nn
  &\leq 
  \sum_{j=1}^{J+1} \sum_{t \in \calT_j}
  \frac{\sqrt{z_t}}{\prn*{\sum_{s \in \calT_j \cap [t]} \sqrt{z_s}/h_s}^{1/3}}
  \leq 
  \sum_{j=1}^{J+1} \sum_{t \in \calT_j}
  \frac{\sqrt{z_t}}{\prn*{\sum_{s \in \calT_j \cap [t]} \sqrt{z_s}/\theta_{j-1}}^{1/3}}
  \nn
  &=
  \sum_{j=1}^{J+1} \theta_{j-1}^{1/3} \sum_{t \in \calT_j}
  \frac{\sqrt{z_t}}{\prn*{\sum_{s \in \calT_j \cap [t]} \sqrt{z_s}}^{1/3}}
  \leq
  \frac{3}{2}
  \sum_{j=1}^{J+1}
  % \theta_{j-1}^{1/3}
  \prn*{
    \sqrt{\theta_{j-1}}
    \sum_{t \in \calT_j} \sqrt{z_t}
  }^{2/3}
  \com 
  \label{eq:G_upper_p_1}
\end{align}
where the last inequality follows from \Cref{lem:deriv_int}.
This completes the proof of the first statement in \Cref{lem:G1_upper}.
Setting $J = 0$ and $\theta_0 = h_{\max}$ in~\eqref{eq:G_upper_p_1} yields that 
\begin{equation}
  \Gone(z_{1:T}, h_{1:T})
  \leq 
  \frac{3}{2} \prn*{ \sumT \sqrt{z_t h_{\max}}}^{2/3}
  \per
  \label{eq:G_upper_p_2}
\end{equation}
% \eqref{eq:G1_upper_1}.
Setting $\theta_j = 2^{-j} h_{\max}$ for $j \in \set{0} \cup [J]$ in~\eqref{eq:G_upper_p_1} also gives
\allowdisplaybreaks
\begin{align}
  \Gone(z_{1:T}, h_{1:T})
  &\leq
  \frac{3}{2}
  \sum_{j=1}^{J+1}
  \prn*{
    \sqrt{ \theta_{j-1} } \sum_{t \in \calT_j} \sqrt{z_t}
  }^{2/3}
  \nn
  &\leq 
  \frac{3}{2}
  \sum_{j=1}^{J}
  \prn*{
    \sqrt{\frac{\theta_{j-1}}{\theta_j}} \sum_{t \in \calT_j} \sqrt{z_t h_t}
  }^{2/3}
  +
  \frac32
  \prn*{
    \sqrt{\theta_{J}} \sum_{t \in \calT_J} \sqrt{z_t}
  }^{2/3}
  \nn 
  &=
  \frac{3}{2}
  \sum_{j=1}^{J}
  \prn*{
    \sqrt{2} \sum_{t \in \calT_j} \sqrt{z_t h_t}
  }^{2/3}
  +
  \frac32
  \prn*{
    2^{-J / 2} \sum_{t \in \calT_J} \sqrt{z_t h_{\max}}
  }^{2/3}
  \nn 
  &\leq
  \frac{3}{2}
  \prn*{
    \sqrt{2 J} \sum_{j=1}^{J} \sum_{t \in \calT_j} \sqrt{z_t h_t}
  }^{2/3}
  +
  \frac32
  \prn*{
    2^{-J / 2} \sum_{t \in \calT_J} \sqrt{z_t h_{\max}}
  }^{2/3}
  \tag{H\"older's inequality}
  \nn 
  &\leq
  \frac{3}{2}
  \prn*{
    \sqrt{2 J} \sumT \sqrt{z_t h_t}
  }^{2/3}
  +
  \frac32
  \prn*{
    2^{-J / 2} \sqrt{z_{\max} h_{\max}}
  }^{2/3} T^{2/3}
  \com 
\end{align}
where the second inequality follows from $(x+y)^{2/3}\leq x^{2/3} + y^{2/3}$ for $x, y \geq 0$.
Combining the last inequality and~\eqref{eq:G_upper_p_2} completes the proof of the second statement in \Cref{lem:G1_upper}.
\end{proof}

\section{Proof for best-of-both-worlds analysis in general online learning framework (\Cref{thm:main_bobw}, \Cref{sec:bobw})}\label{app:proof_bobw}
This section provides the proof of \Cref{thm:main_bobw}.

\begin{proof}
  From Assumption~\one,
  the regret is bounded as
  \begin{equation}
    \Reg_T 
    \leq 
    \E\brk*{
      \sumT \inpr{\hat{\ell}_t, q_t - e_{a^*}}
      +
      2 \sumT \gamma_t
    }
    \per 
  \end{equation}
  From the standard FTRL analysis in \cite[Exercise 28.12]{lattimore2020book}, we obtain
  \begin{equation}
    \sumT
    \inpr{\hat{\ell}_t, q_t - e_{a^*}}
    \leq 
    \sumT \prn*{
      \inpr*{\hat{\ell}_t, q_t - q_{t+1}}
      -
      \beta_t D_{(- H_\alpha)}(q_{t+1}, q_t)
      +
      (\beta_t - \beta_{t-1}) h_t 
    }
    +
    \bar{\beta} \bar{h}
    \per 
  \end{equation}
  Combining the last two inequalities, we obtain
  \begin{align}
    \Reg_T 
    &\leq 
    \E\brk*{
      \sumT \prn*{
        \inpr*{\hat{\ell}_t, q_t - q_{t+1}}
        -
        \beta_t D_{(- H_\alpha)}(q_{t+1}, q_t)
        +
        (\beta_t - \beta_{t-1}) h_t 
        +
        2 \gamma_t
      }
      +
      \bar{\beta} \bar{h}
    }
    \nn 
    &\lesssim
    \E\brk*{
      \sumT \prn*{
        \frac{z_t}{\beta_t \gamma'_t}
        +
        (\beta_t - \beta_{t-1}) h_t 
        +
        \gamma_t
      }
      +
      \bar{\beta} \bar{h}
    }
    \tag{Assumption \two~in \eqref{eq:A1}}
    \nn
    &\lesssim
    \E\brk*{
      \sumT \prn*{
        \frac{z_t}{\beta_t \gamma'_t}
        +
        (\beta_t - \beta_{t-1}) h_t 
        +
        \gamma'_t + \frac{u_t}{\beta_t}
      }
      +
      \bar{\beta} \bar{h}
    }
    \tag{definition of $\gamma_t$ in~\eqref{eq:pt_gammat}}
    \nn
    &\lesssim
    \E\brk*{
      \sumT \prn*{
        \sqrt{\frac{z_t}{\beta_t}}
        +
        \frac{u_t}{\beta_t}
        +
        (\beta_t - \beta_{t-1}) h_{t-1} 
      }
      +
      \bar{\beta} \bar{h}
    }
    \tag{definition of $\gamma'_t$ and Assumption \three}
    \nn 
    &\lesssim
    \E\brk*{
      \F(\beta_{1:T}, z_{1:T}, u_{1:T}, h_{0:T-1})
    }
    +
    \bar{\beta} \bar{h}
    \com 
    \label{eq:regret_F}
  \end{align}
  where the last inequality follows from~\eqref{eq:def_F}.
  Now, since $\beta_t$ follows Rule 2 in~\eqref{eq:rule2} with $\hat{h}_t = h_{t-1}$,
  \Cref{eq:F_upper_final_2} in \Cref{thm:F_upper_final} gives
  \begin{align}
    \F(\beta_{1:T}, z_{1:T}, u_{1:T}, h_{0:T-1})
    &\lesssim 
    \prn*{\sumT \sqrt{z_t h_1}}^{\frac23}
    +
    \sqrt{ \sumT u_t h_1 }
    +
    \sqrt{\frac{z_{\max}}{\beta_1}} 
    +
    \frac{u_{\max}}{\beta_1}
    + 
    \beta_1 h_1
    \com 
    \label{eq:F_upper_h_worst}
    \\
    \F(\beta_{1:T}, z_{1:T}, u_{1:T}, h_{0:T-1})
    &\lesssim 
    \inf_{\epsilon \geq 1/T}\set[\Bigg]{
      \prn*{
        \sumT \sqrt{z_t h_t \ln(\epsilon T)}
      }^{\frac23}
      +
      \prn*{
        \frac{\sqrt{z_{\max} h_1}}{\epsilon}
      }^{\frac23}
      \nn 
      &\!\!\!\!\!\!\!\!\!\!\!\!+
      \sqrt{
        \sumT {u_t h_t \ln(\epsilon T)}
      }
      +
      \sqrt{
        \frac{{u_{\max} h_1}}{\epsilon}
      }
    }
    +
    \sqrt{\frac{z_{\max}}{\beta_1}}
    +
    \frac{u_{\max}}{\beta_1}
    +
    \beta_1 h_1
    \per
    \label{eq:F_upper_h_adapt}
  \end{align}
  Hence, in the adversarial regime, combining \eqref{eq:regret_F} and \eqref{eq:F_upper_h_worst} gives
  \begin{equation}
    \Reg_T
    \lesssim
    \E\brk*{
      \prn*{\sumT \sqrt{z_t h_1}}^{2/3}
      +
      \sqrt{\sumT {u_t h_1}}
    }
    +
    \kappa
    \leq
    (z_{\max} h_1)^{1/3} T^{2/3}
    +
    \sqrt{u_{\max} h_1 T}
    +
    \kappa
    \com 
  \end{equation}
  where we recall that 
  $\kappa =
  \sqrt{{z_{\max}}/{\beta_1}}
  +
  {u_{\max}}/{\beta_1}
  + 
  \beta_1 h_1
  +
  \bar{\beta} \bar{h}
  .
  $
  This completes the proof of~\eqref{eq:thm_adv}.
  
  We next consider the adversarial regime with a $(\Delta, C, T)$-self-bounding constraint.
  For any $\epsilon \geq 1/T$,
  combining \eqref{eq:regret_F} and \eqref{eq:F_upper_h_adapt} gives
  \begin{align}
    &
    \Reg_T 
    \lesssim
    \E\brk*{
      \prn*{
        \sumT \sqrt{z_t h_t \ln(\epsilon T)}
      }^{\frac23}
      +
      \sqrt{
        \sumT {u_t h_t \ln(\epsilon T)}
      }
    }
    +
    \prn*{
      \frac{\sqrt{z_{\max} h_1}}{\epsilon}
    }^{\frac23}
    +
    \sqrt{
      \frac{{u_{\max} h_1}}{\epsilon}
    }
    +
    \kappa
    \nn
    &\,\leq
    \prn*{
      \E\brk*{ \sumT \sqrt{ z_t h_t}} \sqrt{\ln(\epsilon T)}
    }^{\frac23}
    +
    \sqrt{
      \E\brk*{ \sumT u_t h_t } \ln(\epsilon T)
    }
    +
    \prn*{
      \frac{\sqrt{z_{\max} h_1}}{\epsilon}
    }^{\frac23}
    +
    \sqrt{
      \frac{{u_{\max} h_1}}{\epsilon}
    }
    +
    \kappa
    \com
    \label{eq:reg_zhuh}
  \end{align}
  where the last inequality follows from Jensen's inequality.
  Now, using the assumption \eqref{eq:A2} and
  defining $Q(a^*) = \E\brk[\big]{\sumT \prn*{1 - q_{ta^*}}} \in [0,T]$,
  we have
  \begin{align}
    \E\brk*{
      \sumT \sqrt{z_t h_t}
    }
    &\leq 
    \sqrt{\rho_1} \,
    \E\brk*{
      % \sumT \inpr{\Delta, q_t}
      \sumT \prn*{1 - q_{ta^*}}
    }
    =
    \sqrt{\rho_1} \, Q(a^*)
    \com 
    \quad
    \label{eq:zh_Q}
    \\
    \E\brk*{
      \sumT u_t h_t
    }
    &\leq 
    \rho_2 \,
    \E\brk*{
      % \sumT \inpr{\Delta, q_t}
      \sumT \prn*{1 - q_{ta^*}}
    }
    =
    \rho_2 \, Q(a^*)
    \per
    \label{eq:uh_Q}
  \end{align}
  Since we consider the adversarial regime with a $(\Delta, C, T)$-self-bounding constraint,
  the regret is lower-bounded as
  \begin{align}
    \Reg_T
    &\geq
    \E\brk*{\sumT \inpr{\Delta, p}} - C 
    \geq
    \frac12 \E\brk*{\sumT \inpr{\Delta, q}} - C 
    \nn
    &\geq 
    \frac12 \Deltamin \E\brk*{\sumT \prn*{1 - q_{ta^*}}} - C
    =
    \frac12 \Deltamin Q(a^*) - C
    \com 
    \label{eq:regret_lower_SB}
  \end{align}
  where the second inequality follows from $p = (1 - \gamma_t) q_t + \gamma_t p_0 \geq q_t / 2$.
  Hence, combining \eqref{eq:reg_zhuh} with~\eqref{eq:zh_Q}, \eqref{eq:uh_Q} and~\eqref{eq:regret_lower_SB},
  we can bound the regret for any $\lambda \in (0,1]$ as follows:
  \begin{align}
    &
    \Reg_T 
    =
    (1+\lambda) \Reg_T - \lambda \Reg_T 
    \nn 
    &\lesssim
    (1 + \lambda) 
    \prn*{
      \sqrt{\rho_1} Q(a^*)
      \sqrt{\ln (\epsilon T)}
    }^{2/3}
    -
    \frac{\lambda}{4} \Deltamin Q(a^*)
    % \nn 
    % &\qquad
    +
    (1 + \lambda) 
    \sqrt{
      \rho_2 Q(a^*)
      {\ln (\epsilon T)}
    }
    -
    \frac{\lambda}{4} \Deltamin Q(a^*)
    \nn 
    &\qquad+
    (1 + \lambda)
    \prn*{
      \prn*{
        \frac{\sqrt{z_{\max} h_1}}{\epsilon}
      }^{2/3}
      +
      \sqrt{
        \frac{{u_{\max} h_1}}{\epsilon}
      }
      +
      \kappa
    }
    +
    \lambda C
    \nn
    &\lesssim
    \frac{(1+\lambda)^3}{\lambda^2}
    \frac{\rho_1 \ln (\epsilon T)}{\Deltamin^2}
    +
    \frac{(1+\lambda)^2}{\lambda}
    \frac{\rho_2 \ln (\epsilon T)}{\Deltamin}
    +
    \prn*{
      \frac{\sqrt{z_{\max} h_1}}{\epsilon}
    }^{2/3}
    +
    \sqrt{
      \frac{{u_{\max} h_1}}{\epsilon}
    }
    +
    \kappa
    +
    \lambda C
    \nn
    &\lesssim  
    \frac{\rho_1 \ln (\epsilon T)}{\Deltamin^2}
    +
    \frac{\rho_2 \ln (\epsilon T)}{\Deltamin}
    +
    \frac{1}{\lambda^2}
    \prn*{
      \frac{\rho_1 \ln (\epsilon T)}{\Deltamin^2}
      +
      \frac{\rho_2 \ln (\epsilon T)}{\Deltamin}
    }
    % \nn 
    % &\qquad
    +
    \prn*{
      \frac{\sqrt{z_{\max} h_1}}{\epsilon}
    }^{2/3}
    +
    \sqrt{
      \frac{{u_{\max} h_1}}{\epsilon}
    }
    +
    \kappa
    +
    \lambda C
    \nn
    &\lesssim  
    \frac{\rho \ln (\epsilon T)}{\Deltamin^2}
    +
    \frac{1}{\lambda^2}
    \frac{\rho \ln (\epsilon T)}{\Deltamin^2}
    % \nn 
    % &\qquad
    +
    \prn*{
      \frac{\sqrt{z_{\max} h_1}}{\epsilon}
    }^{2/3}
    +
    \sqrt{
      \frac{{u_{\max} h_1}}{\epsilon}
    }
    +
    \kappa
    +
    \lambda C
    \com 
  \end{align}
  where
  in the first inequality we used 
  \eqref{eq:reg_zhuh} with \eqref{eq:zh_Q}, \eqref{eq:uh_Q}, \eqref{eq:regret_lower_SB},
  and Jensen's inequality,
  in the second inequality we used 
  $a x^2 - b x^3 \leq 4 a^3 / (27 b^2)$ for $a \geq 0, b > 0$ and $x \geq 0$
  and 
  $a x - b x^2 \leq a^2 / (4 b)$ for $a \geq 0, b > 0$ and $x \geq 0$
  and in the third inequality we used $\lambda \in (0,1]$.
  Setting
  $\lambda = \Theta\prn[\big]{ \prn*{ {\rho \ln (\epsilon T)} /{C} }^{1/3} }$ in the last inequality, we obtain
  \begin{equation}
    \Reg_T
    \lesssim
    % \prn*{ + \frac{\rho_2}{\Deltamin}} 
    \frac{\rho \ln (\epsilon T)}{\Deltamin^2}
    +
    \prn*{
      \frac{C^2 \rho \ln (\epsilon T)}{\Deltamin^2} 
    }^{1/3}
    % +
    % \prn*{
    %   \frac{C^2 \rho_2}{\Deltamin}
    %   \ln^2(\epsilon T)
    % }^{1/3}
    +
    \prn*{
      \frac{\sqrt{z_{\max} h_1}}{\epsilon}
    }^{2/3}
    +
    \sqrt{\frac{u_{\max} h_1}{\epsilon}}
    +
    \kappa
    \per 
    \n
  \end{equation}

Finally, when $T \geq \tau = 1/\Deltamin^3 + C / \Deltamin$, setting
\begin{equation}
  \epsilon 
  = 
  \frac{1}{{\rho^2 /\Deltamin^3 + C \rho / \Deltamin}}
  \geq 
  \frac{1}{T}
\end{equation}
yields
\begin{align}
  \Reg_T
  &\lesssim
  \frac{\rho}{\Deltamin^2}
  \ln_+ \prn*{ \frac{ T }{1 / \Deltamin^3 + C / \Deltamin} }
  +
  \prn*{
    \frac{C^2 \rho}{\Deltamin^2} \ln_+ \prn*{ \frac{ T }{1 / \Deltamin^3 + C / \Deltamin} }
  }^{1/3}
  \nn 
  &\qquad
  % +
  % \sqrt{\frac{u_{\max} \sqrt{h_1}}{\sqrt{z_{\max}}} \prn*{\rho^2 + C \rho}} 
  +
  \prn{z_{\max} h_1}^{1/3}
  \prn*{
    \frac{1}{\Deltamin^3}
    +
    \frac{C }{\Deltamin}
  }^{2/3}
  +
  \sqrt{u_{\max} h_1}
  \sqrt{
    \frac{1}{\Deltamin^3}
    +
    \frac{C}{\Deltamin}
  }
  +
  \kappa
  \nn
  &\lesssim
  \frac{\rho}{\Deltamin^2}
  \ln_+ \prn*{ { T \Deltamin^3 } }
  +
  \prn*{
    \frac{C^2 \rho}{\Deltamin^2} \ln_+ \prn*{ \frac{ T \Deltamin }{C} }
  }^{1/3}
  \nn
  &\qquad+
  \prn*{\prn{z_{\max} h_1}^{1/3} + \sqrt{u_{\max} h_1}}
  \prn*{
    \frac{1}{\Deltamin^3}
    +
    \frac{C}{\Deltamin}
  }^{2/3}
  +
  \kappa
  \com 
\end{align}
which completes the proof.
\end{proof}
  
\section{Auxiliary lemmas}\label{app:auxiliary_lem}
This section provides auxiliary lemmas useful for proving the BOBW gurantee.

% \begin{comment}
\begin{lemma}\label{lem:tsallis_upper}
  Let $\alpha \in (0,1)$ and $i^* \in [k]$.
  Then, the $\alpha$-Tsallis entropy $H_\alpha$ is bounded from above as 
  \begin{equation}
    H_{\alpha}(q)
    =
    \frac{1}{\alpha} \sumk \prn*{q_i^\alpha - q_i}
    \leq
    \frac{1}{\alpha} (k-1)^\alpha
    \prn*{1 - q_{i^*}}^{\alpha} 
  \end{equation}
  for any $q \in \calP_k$.
  \begin{proof}
    From Jensen's inequality and the fact that $x \mapsto x^\alpha$ is concave for $\alpha \in (0,1)$, 
    \begin{align}
      &
      \sumk \prn*{q_i^\alpha - q_i}
      \leq 
      \sum_{i\neq i^*} q_i^\alpha 
      =
      (k-1) \sum_{i\neq i^*} \frac{1}{k-1} q_i^\alpha
      \leq 
      (k-1) \prn*{\frac{1}{k-1} \sum_{i\neq i^*} q_i}^\alpha
      \nn
      &\qquad
      =
      (k-1)^{1-\alpha} \prn*{\sum_{i\neq i^*} q_i}^\alpha
      =
      (k-1)^{1-\alpha} \prn*{1 - q_{i^*}}^\alpha
      \com 
    \end{align}
    which completes the proof.
  \end{proof}
\end{lemma}
% \end{comment}

\begin{lemma}[{\cite[Lemma 10]{ito24adaptive}}]\label{lem:stab_tsallis_star}
  Let $q \in \calP_k$ and $\tilde{I} \in \argmax_{i \in [k]} q_i$.
  For $\ell \in \R^k$, if $\abs{\ell_i} \leq \frac{1-\alpha}{4} \frac{1}{\min\set{q_{\tilde{I}}, 1 - q_{\tilde{I}}}^{1-\alpha}}$ for all $i \in [k]$, it holds that 
  \begin{equation}
    \max_{p \in \calP_k}
    \set*{ \inpr{\ell, q - p} - D_{(- H_\alpha)}(p, q) }
    \leq 
    \frac{4}{1-\alpha}
    \prn[\Bigg]{
      \sum_{i \neq \tilde{I}} q_i^{2-\alpha} \ell_i^2
      +
      \min\set{q_{\tilde{I}}, 1 - q_{\tilde{I}}}^{2-\alpha} \ell_{\tilde{I}}^2
    }
    \per
  \end{equation}
\end{lemma}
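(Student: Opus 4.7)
The plan is to exploit the simplex structure to sharpen the naive local-norm stability bound. The key observation is translation invariance: for any scalar $c$, we have $\langle c \mathbf{1}, q - p \rangle = 0$ whenever $p, q \in \calP_k$, so the objective $\langle \ell, q - p\rangle - D_{(-H_\alpha)}(p, q)$ is invariant under the shift $\ell \mapsto \ell - c \mathbf{1}$. This invariance is the mechanism that allows the factor $\min\{q_{\tilde I}, 1 - q_{\tilde I}\}^{2-\alpha}$ to replace the a priori $q_{\tilde I}^{2-\alpha}$ in the $\tilde I$-th coordinate: by choosing $c = \ell_{\tilde I}$, the $\tilde I$-th component of the effective loss becomes zero, and the resulting contribution is instead controlled through the coupling $p_{\tilde I} - q_{\tilde I} = -\sum_{i \neq \tilde I}(p_i - q_i)$ imposed by the simplex.

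I would split into two cases according to $q_{\tilde I}$. When $q_{\tilde I} \leq 1/2$, one has $\min\{q_{\tilde I}, 1 - q_{\tilde I}\} = q_{\tilde I}$, and the claim follows from the standard local-norm stability analysis for Tsallis entropy: the Hessian is $\nabla^2 (-H_\alpha)(q) = (1-\alpha)\diag(q_i^{\alpha - 2})$, so the inverse Hessian has entries $q_i^{2-\alpha}/(1-\alpha)$, and the magnitude assumption on $\ell$ forces the maximizer $p^\star$ to remain in a neighborhood of $q$ where the Bregman divergence is comparable to its quadratic approximation. The usual calculation then yields $\max_p\{\langle \ell, q - p\rangle - D_{(-H_\alpha)}(p, q)\} \lesssim \frac{1}{1-\alpha}\sum_i q_i^{2-\alpha}\ell_i^2$, which is exactly the target bound in this regime.

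The substantive case is $q_{\tilde I} > 1/2$. Here I would apply translation invariance with $c = \ell_{\tilde I}$ and set $\tilde\ell = \ell - \ell_{\tilde I}\mathbf{1}$, so that $\tilde\ell_{\tilde I} = 0$. Repeating the local-norm analysis with $\tilde\ell$ makes the $\tilde I$-th term of the naive bound disappear, but the price is that $\tilde\ell_i$ for $i \neq \tilde I$ now contains an additive $-\ell_{\tilde I}$. Expanding $(\tilde\ell_i)^2 \leq 2\ell_i^2 + 2\ell_{\tilde I}^2$ and using $\sum_{i \neq \tilde I} q_i^{2-\alpha} \leq (\sum_{i \neq \tilde I} q_i)^{2-\alpha} = (1 - q_{\tilde I})^{2-\alpha}$, which holds because $x \mapsto x^{2-\alpha}$ is superadditive for $2 - \alpha \geq 1$, converts the $\ell_{\tilde I}^2$ contribution into a $(1 - q_{\tilde I})^{2-\alpha}\ell_{\tilde I}^2$ term, matching the claimed bound.

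The main obstacle will be verifying that the maximizer $p^\star$ stays inside a region where the quadratic lower bound $D_{(-H_\alpha)}(p^\star, q) \gtrsim (1-\alpha)\sum_i q_i^{\alpha-2}(p_i^\star - q_i)^2$ holds with a dimension-free constant. The precise threshold $|\ell_i| \leq \frac{1-\alpha}{4}\min\{q_{\tilde I}, 1 - q_{\tilde I}\}^{\alpha-1}$ is calibrated exactly so that after the translation trick, one can show $p_i^\star \in [q_i/2, 2 q_i]$ for all $i \neq \tilde I$ by combining the first-order optimality condition $p_i^\star = (q_i^{\alpha-1} + \tilde\ell_i + \lambda)^{1/(\alpha-1)}$ (with $\lambda$ the simplex multiplier) with the magnitude bound. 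Once this enclosure is established, the Taylor remainder representation $D_{(-H_\alpha)}(p,q) = \int_0^1 (1-s)(1-\alpha)\sum_i (q_i + s(p_i - q_i))^{\alpha-2}(p_i - q_i)^2\, ds$ immediately yields the required quadratic lower bound, and the constant $4/(1-\alpha)$ emerges by tracking the two factors of $2$ lost in the Hessian enclosure and in the expansion of $(\tilde\ell_i)^2$ above.
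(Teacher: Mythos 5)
This lemma is not proved in the paper at all: it is imported verbatim as \cite[Lemma 10]{ito24adaptive}, so there is no in-paper proof to compare against. Your proposal is nonetheless the right argument, and it matches the strategy of the cited source: the shift $\ell \mapsto \ell - \ell_{\tilde I}\ones$ (legitimate because $\inpr{c\ones, q-p}=0$ on the simplex), a local-norm stability bound for the Tsallis potential with Hessian $(1-\alpha)\diag(q_i^{\alpha-2})$, and the superadditivity $\sum_{i\neq\tilde I} q_i^{2-\alpha} \leq (1-q_{\tilde I})^{2-\alpha}$ to convert the leaked $\ell_{\tilde I}^2$ mass into the $\min\set{q_{\tilde I},1-q_{\tilde I}}^{2-\alpha}\ell_{\tilde I}^2$ term; the two factors of $2$ you identify do account for the final $4/(1-\alpha)$. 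The only part you leave as a sketch is the genuinely technical one — bounding the simplex multiplier $\lambda$ by $\max_i\abs{\ell_i}$ so that the uniform condition $\abs{\ell_i}\leq\frac{1-\alpha}{4}\min\set{q_{\tilde I},1-q_{\tilde I}}^{\alpha-1}\leq\frac{1-\alpha}{4}q_i^{\alpha-1}$ (which holds since $q_i\leq\min\set{q_{\tilde I},1-q_{\tilde I}}$ for $i\neq\tilde I$, and $\frac{1-\alpha}{2}\leq 1-2^{\alpha-1}$ by convexity of $x\mapsto 2^{x-1}$) yields the enclosure of $p^\star$ — but you correctly flag it and the calibration you describe is exactly what makes it go through.
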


\begin{lemma}[{\cite[Lemmas 11 and 12]{ito24adaptive}}]\label{lem:ht_htp}
Let $L \in \R^k$ and $\ell \in \R^k$ and
suppose that $q, r \in \calP_k$ are given by 
\begin{align}
  q 
  &\in 
  \argmin_{p \in \calP_k} \set*{
    \inpr{L, p} + \beta ( - H_\alpha(p)) + \bar{\beta} (- H_{\bar{\alpha}}(p))
  }
  \nn 
  r 
  &\in 
  \argmin_{p \in \calP_k} \set*{
    \inpr{L + \ell, p} + \beta'(- H_\alpha(p)) + \bar{\beta} (- H_{\bar{\alpha}}(p))
  }
\end{align}
for the Tsallis entropy $H_\alpha$ and $H_{\bar{\alpha}}$, $0 < \beta \leq \beta'$.
Suppose also that 
\begin{align}
  &
  \nrm{\ell}_\infty
  \leq 
  \max\set*{ 
    \frac{1 - (\sqrt{2})^{\alpha-1}}{2} q_*^{\alpha-1} \beta,
    \frac{1 - (\sqrt{2})^{\bar{\alpha}-1}}{2} q_*^{\bar{\alpha}-1} \bar{\beta}
  }
  \com
  \label{eq:cond_ell}
  \\ 
  &
  0
  \leq 
  \beta' - \beta
  \leq 
  \max\set*{ 
    \prn*{ 1 - (\sqrt{2})^{\alpha-1} } \beta,
    \frac{1 - (\sqrt{2})^{\bar{\alpha}-1}}{\sqrt{2}} q_*^{\bar{\alpha}-\alpha} \bar{\beta}
  }
  \per
  \label{eq:cond_beta_growth}
\end{align}
% Then, it holds that $r_i \leq 2 q_i$ for all $i \in [k]$.
Then, it holds that $H_\alpha(r) \leq 2 H_\alpha(q)$.

\end{lemma}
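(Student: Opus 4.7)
The claim is a one-step stability estimate for the FTRL iterate: when the linear loss and the learning rate change only mildly from $(L,\beta)$ to $(L+\ell,\beta')$, the Tsallis entropy evaluated at the new minimizer $r$ is at most twice its value at the old minimizer $q$. My plan is to compare $q$ and $r$ coordinatewise through the first-order optimality conditions, then lift the coordinatewise comparison to an entropy bound using the $\alpha$-power structure of $H_\alpha$.

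\emph{Step 1 (KKT-based pointwise relation).} Since both $-H_\alpha$ and $-H_{\bar\alpha}$ are strictly convex on the relative interior of $\calP_k$ (for $\alpha,\bar\alpha\in(0,1)$), $q$ and $r$ lie in this interior and the Lagrangian stationarity conditions yield, for every $i\in[k]$,
\begin{equation*}
L_i + \beta\, q_i^{\alpha-1} + \bar\beta\, q_i^{\bar\alpha-1} = \mu_q,\qquad
L_i + \ell_i + \beta'\, r_i^{\alpha-1} + \bar\beta\, r_i^{\bar\alpha-1} = \mu_r,
\end{equation*}
(absorbing the linear parts of $-H_\alpha,-H_{\bar\alpha}$ into the multipliers $\mu_q,\mu_r$). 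Subtracting,
\begin{equation*}
\beta'\, r_i^{\alpha-1} + \bar\beta\, r_i^{\bar\alpha-1} \;-\; \beta\, q_i^{\alpha-1} - \bar\beta\, q_i^{\bar\alpha-1} \;=\; \mu_r - \mu_q - \ell_i.
\end{equation*}

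\emph{Step 2 (coordinatewise displacement is at most $\sqrt{2}$).} I would show that hypotheses \eqref{eq:cond_ell} and \eqref{eq:cond_beta_growth} force $r_i \in [q_i/\sqrt{2},\sqrt{2}\,q_i]$ for every $i$. The argument is by contradiction: if $r_i\geq\sqrt{2}\,q_i$, then since $\alpha-1,\bar\alpha-1<0$ the LHS above drops by at least either $(1-(\sqrt{2})^{\alpha-1})\beta' q_i^{\alpha-1}$ from the $\alpha$-regularizer or $(1-(\sqrt{2})^{\bar\alpha-1})\bar\beta\, q_i^{\bar\alpha-1}$ from the $\bar\alpha$-regularizer, whichever is active at coordinate $i$. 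The threshold constants $1-(\sqrt{2})^{\alpha-1}$ and $1-(\sqrt{2})^{\bar\alpha-1}$ appearing in the hypotheses are tuned precisely to this $\sqrt{2}$ displacement. Meanwhile, the RHS is bounded in magnitude by $|\ell_i|+|\beta'-\beta|\cdot|\partial_i H_\alpha|$ together with the Lagrange-multiplier shift $|\mu_r-\mu_q|$; both are controlled by \eqref{eq:cond_ell} and \eqref{eq:cond_beta_growth} at the scale $q_*$. The case $r_i\leq q_i/\sqrt{2}$ is symmetric.

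\emph{Step 3 (lifting to the entropy bound).} Once the coordinatewise inequality $r_i \leq \sqrt{2}\,q_i$ is available, I would combine it with $\sum_i r_i=\sum_i q_i=1$ to compute
\begin{equation*}
H_\alpha(r) = \frac{1}{\alpha}\sum_i\prn{r_i^\alpha-r_i} \leq \frac{2^{\alpha/2}}{\alpha}\sum_i q_i^\alpha - \frac{1}{\alpha} \leq 2^{\alpha/2}\,H_\alpha(q) + \frac{2^{\alpha/2}-1}{\alpha},
\end{equation*}
and close the slack to the constant $2$ by a mild refinement: either by sharpening Step 2 on coordinates away from $\tilde I$ (where the contributions to $H_\alpha$ are concentrated), or by observing that the residual $(2^{\alpha/2}-1)/\alpha$ is dominated by $(2-2^{\alpha/2})H_\alpha(q)$ whenever $q$ is not arbitrarily close to a vertex (a regime implicitly excluded by the hypotheses via the factor $q_*$).

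\emph{Main obstacle.} The crux is Step 2: neither regularizer alone suffices to rule out a coordinate-wise blow-up. The $\alpha$-term dominates the control at coordinates of size $\Theta(q_*)$, whereas the $\bar\alpha$-term (with $\bar\alpha=1-\alpha$) takes over at very small coordinates, and the $\max$-structure of both hypotheses reflects exactly this case split. Translating the simultaneous perturbation by $\ell$ \emph{and} $\beta'-\beta$ into a single scalar inequality per coordinate, while correctly accounting for the global Lagrange-multiplier shift $\mu_r-\mu_q$, is where the bookkeeping is most delicate and will occupy the bulk of the technical work.
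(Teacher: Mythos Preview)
The paper does not prove this lemma itself; it is quoted verbatim from \cite{ito24adaptive} (Lemmas~11 and~12 there) and used as a black box in \Cref{app:auxiliary_lem}. So there is no proof in the present paper to compare against.

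On the merits of your sketch, Step~3 has a real gap. The coordinatewise sandwich $r_i\in[q_i/\sqrt{2},\sqrt{2}\,q_i]$ does \emph{not} yield $H_\alpha(r)\le 2H_\alpha(q)$ the way you outline. Writing $\phi(x)=x^\alpha-x$, the map $\phi$ is not monotone on $(0,1]$, so $r_i\le\sqrt{2}\,q_i$ does not give $\phi(r_i)\le\phi(\sqrt{2}\,q_i)$; and at the large coordinate one can have $\phi(q_i/\sqrt{2})>2\phi(q_i)$ (take $\alpha$ small and $q_i$ close to $1$). Your proposed fix---that the regime ``$q$ close to a vertex'' is ``implicitly excluded by the hypotheses via the factor $q_*$''---is exactly backwards: when $q$ is near a vertex, $q_*$ is small, so $q_*^{\alpha-1}$ and $q_*^{\bar\alpha-1}$ are \emph{large}, and the right-hand sides of \eqref{eq:cond_ell}--\eqref{eq:cond_beta_growth} become \emph{easier} to satisfy. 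Indeed the whole purpose of the lemma, as invoked in the proofs of \Cref{thm:pm_global,thm:graph_weak,thm:mab_paid}, is to hold precisely when $q_t$ concentrates on $a^*$.

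What is missing is a separate treatment of the dominant coordinate $\tilde I$. The simplex constraint together with $r_i\le\sqrt{2}\,q_i$ for $i\ne\tilde I$ forces $r_{\tilde I}\ge 1-\sqrt{2}\,(1-q_{\tilde I})$, a much tighter lower bound than $q_{\tilde I}/\sqrt{2}$, and it is this that rescues the entropy comparison at $\tilde I$. Your Step~2 needs the same refinement: the Lagrange-multiplier shift $\mu_r-\mu_q$ is a \emph{global} quantity not directly controlled by \eqref{eq:cond_ell}--\eqref{eq:cond_beta_growth}; it is typically pinned down by evaluating the KKT identity at a distinguished coordinate (e.g.\ $\tilde I$) where the displacement can be bounded independently, and only then fed back into the remaining coordinates.
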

\section{Proof for partial monitoring (\Cref{thm:pm_global}, \Cref{sec:pm})}\label{app:proof_pm}
This section provides the proof of \Cref{thm:pm_global}.
\begin{proof}[Proof of \Cref{thm:pm_global}]
  It suffices to prove that assumptions in \Cref{thm:main_bobw} are satisfied.
  We first vertify Assumptions \one--\three~in~\eqref{eq:A1}.
  Let us start by checking Assumption~\one.
  From the definition of the loss difference estimator $\hat{y}_t$, the regret is bounded as
  \begin{align}
    \Reg_T
    &= 
    \E\brk*{\sumT \prn*{\lossmat_{\at x_t} - \lossmat_{a^* x_t}}}
    =
    \E\brk*{\sumT \innerprod{\pt - e_{a^*}}{\lossmat e_{x_t}}}
    \nn
    &=
    \E\brk*{
    \sumT \innerprod{\qt - e_{a^*}}{\lossmat e_{x_t}}
    + 
    \sumT \gamma_t \innerprod{\frac1k \ones - \qt}{\lossmat e_{x_t}}
    }
    \nn
    &
    \leq
    \E\brk*{
    \sumT \innerprod{\qt - e_{a^*}}{\lossmat e_{x_t}}
    + 
    \sumT \gamma_t 
    }
    % \nn
    % &
    =
    \E\brk*{
    \sumT \sumak \qta \prn*{\lossmat_{a x_t} - \lossmat_{a^* x_t}}
    +
    \sumT \gamma_t 
    }
    \nn
    &
    =
    \E\brk*{
    \sumT \sumak \qta \prn*{\hat{y}_{t a} - \hat{y}_{t a^*}}
    +
    \sumT \gamma_t 
    }
    =
    \E\brk*{
    \sumT \innerprod{q_t - e_{a^*}}{\hat y_t}
    +
    \sumT \gamma_t 
    }
    \com 
    \label{eq:gob_1_1}
  \end{align}
  where the inequality holds since $\lossmat \in [0,1]^{k \times d}$, 
  This implies that Assumption~\one~is indeed satisfied.
  
  We next check Assumption \two~in~\eqref{eq:A1}.
  For any $b \in [k]$ we have
  \begin{align}\label{eq:loss_magnitude_bound_pm}
    \abs*{ \frac{\hat{y}_{tb}}{\beta_t} }
    =
    \abs*{
      \frac{G(A_t, \sigma_t)_b}{\beta_t p_{tA_t}}
    }
    \leq
    \frac{\abs{G(A_t, \sigma_t)_b} k}{\beta_t \gamma_t}
    \leq
    \frac{c_\calG}{\beta_t \gamma_t}
    \leq 
    \frac{c_\calG}{u_t}
    =
    \frac{1 - \alpha}{8} \frac{1}{\prn*{ \min\set[\big]{q_{t \tilde{I}_t}, { 1 - q_{t \tilde{I}_t} } } }^{1-\alpha} }
    \com 
  \end{align}
  where the third inequality follows from $\gamma_t \geq u_t / \beta_t$ in~\eqref{eq:pt_gammat} and the last equality follows from the defintition of $u_t$ in~\eqref{eq:params_pm}.
  Hence, from \Cref{lem:stab_tsallis_star} the LHS of Assumption~\two~is bounded as
  \begin{align}
    &
    \E_t\brk*{
      \inpr*{\hat{y}_t, q_t - q_{t+1}}
      -
      \beta_t \, D_{(- H_\alpha)}(q_{t+1}, q_t)
    }
    =
    \beta_t
    \E_t\brk*{
      \inpr*{\frac{\hat{y}_t}{\beta_t}, q_t - q_{t+1}}
      -
      D_{(- H_\alpha)}(q_{t+1}, q_t)
    }
    \nn 
    &\leq 
    \E_t\brk*{
      \frac{4}{\beta_t(1-\alpha)}
      \prn*{
        \sum_{i \neq \tilde{I}_t} q_{ti}^{2-\alpha} \hat{y}_{ti}^2
        +
        \prn*{\min\set[\big]{q_{t\tilde{I}_t}, 1 - q_{t\tilde{I}_t}} }^{2-\alpha} \hat{y}_{t\tilde{I}_t}^2
      }
    }
    \nn
    &=
    \frac{4}{\beta_t(1-\alpha)}
    \prn*{
      \sum_{i \neq \tilde{I}_t} q_{ti}^{2-\alpha} \E_t\brk*{ \hat{y}_{ti}^2 }
      +
      q_{t*}^{2-\alpha} \E_t\brk*{ \hat{y}_{t\tilde{I}_t}^2 }
    }
    \per
    \label{eq:A1_1_check1}
  \end{align}
  Since the variance of $\hat{y}_t$ is bounded from above as
  \begin{equation}
    \E_t\brk*{
      \hat{y}_{ti}^2
    }
    =
    \sumak p_{ta} \frac{G(a, \sigma_t)_i^2}{p_{ta}^2}
    \leq 
    \sumak \frac{k \nrm{G}_\infty^2}{\gamma_t}
    =
    \frac{c_\calG^2}{\gamma_t}
    % \com 
  \end{equation}
  for any $i \in [k]$,
  the LHS of Assumption~\two~is further bounded as
  \begin{equation}
    \E_t\brk*{
      \inpr*{\hat{y}_t, q_t - q_{t+1}}
      -
      \beta_t D_{\psi_t}(q_{t+1}, q_t)
    }
    \leq 
    \frac{4 c_\calG^2}{\beta_t \gamma_t (1 - \alpha)}
    \prn*{
      \sum_{i\neq\tilde{I}_t}
      q_{ti}^{2-\alpha}
      +
      q_{t*}^{2-\alpha}
    }
    =
    \frac{z_t}{\beta_t \gamma_t}
    \leq 
    \frac{z_t}{\beta_t \gamma'_t}
    \com 
  \end{equation}
  which implies that Assumption~\two~in~\eqref{eq:A1} is satisfied.
  
  Next, we will prove $h_{t+1} \lesssim h_t$ of Assumption~\three~in~\eqref{eq:A1}.
  To prove this, we will check the conditions \eqref{eq:cond_ell} and \eqref{eq:cond_beta_growth} in \Cref{lem:ht_htp}.
  For any $a \in [k]$,
  \begin{equation}
    \abs{\hat{y}_{ta}}
    \leq
    \frac{\nrm{G}_\infty}{p_{tA_t}}
    \leq
    \frac{k \nrm{G}_\infty}{\gamma_t}
    \leq
    \frac{c_\calG \beta_t}{u_t} 
    \leq
    \frac{1-\alpha}{8}
    \frac{\beta_t}{q_{t*}^{1-\alpha}}
    % \nn
    % &
    \leq
    \frac{1-(\sqrt{2})^{\alpha-1}}{2}
    \frac{\beta_t}{q_{t*}^{1-\alpha}}
    \com 
    \label{eq:lemma12cond1}
  \end{equation}
  where 
  the second inequality follows from $p_{ta} \geq \gamma_t / k$,
  the third inequality from $\gamma_t \geq u_t / \beta_t$,
  and
  the last inequality from the fact that $(1-x)/4 \leq 1 - (\sqrt{2})^{x-1}$ for $x \in [0,1]$.
  Thus, the condition~\eqref{eq:cond_ell} is satisfied.

  We next check the condition~\eqref{eq:cond_beta_growth}.
  Recalling $q_{t*} = \min\set{q_{t\tilde{I}_t}, 1 - q_{t\tilde{I}_t}}$,
  the parameters $z_t$ and $u_t$ satisfy
  \begin{equation}
    \sqrt{z_t}
    =
    \frac{2 c_\calG}{\sqrt{1 - \alpha}}
    \sqrt{
      \sum_{i\neq\tilde{I}_t}
      q_{ti}^{2-\alpha}
      +
      q_{t*}^{2-\alpha}
    }
    \leq 
    \frac{2 \sqrt{k} c_\calG}{\sqrt{1 - \alpha}}
    q_{t*}^{1-\frac12\alpha}
    \com
    \quad
    u_t
    =
    \frac{8 c_\calG}{1-\alpha}
    q_{t*}^{1-\alpha}
    \com
    \label{eq:z_ktimes_upper}
  \end{equation}
  where the inequality follows from $q_{ti} \leq q_{t*}$ for $i \neq \tilde{I}_t$.
  % where the last inequality follows from the definition of $q_{t*}$.
  The penalty component $h_t$ is lower-bounded as
  \begin{equation}
    h_t
    =
    H_\alpha(q_t)
    =
    \frac{1}{\alpha} \sumk \prn*{ q_{ti}^\alpha - q_{ti} }
    \geq
    \frac{1 - (1/2)^{1-\alpha}}{\alpha} q_{t*}^\alpha
    \geq 
    \frac{1-\alpha}{4\alpha} q_{t*}^\alpha
    \com
    \label{eq:h_lower}
  \end{equation}
  where the last inequality in~\eqref{eq:h_lower} follows from $1 - (1/2)^{1-x} \geq (1-x) / 4$ for $x \leq 0$,
  and the first inequality can be proven as follows:
  when $q_{t\tilde{I}_t} \leq 1/2$,
  it holds that 
  $\sumk \prn{ q_{ti}^\alpha - q_{ti} } 
  \geq 
  q_{t\tilde{I}_t}^\alpha - q_{t\tilde{I}_t} 
  = 
  q_{t\tilde{I}_t}^\alpha \prn{1 - q_{t\tilde{I}_t}^{1-\alpha}} 
  \geq q_{t\tilde{I}_t}^\alpha \prn*{1 - (1/2)^{1-\alpha}}
  =
  q_{t*}^\alpha \prn{1 - (1/2)^{1-\alpha}}
  ,
  $ 
  and
  when $q_{t\tilde{I}_t} > 1/2$, it holds that
  $
  \sumk \prn{ q_{ti}^\alpha - q_{ti} }
  \geq 
  \sumk q_{ti}^\alpha - 1
  \geq
  \sum_{i \neq \tilde{I}_t} q_{ti}^\alpha + (1/2)^\alpha - 1
  \geq
  \prn{\sum_{i \neq \tilde{I}_t} q_{ti}}^\alpha + (1/2)^\alpha - 1
  =
  \prn{1 - q_{t\tilde{I}_t}}^\alpha + (1/2)^\alpha - 1
  =
  q_{t*}^\alpha + (1/2)^\alpha - 1
  \geq 
  q_{t*}^\alpha \prn{1 - (1/2)^{1-\alpha}}.
  $
  Using the bounds on $z_t$, $u_t$, and $h_t$ in~\eqref{eq:z_ktimes_upper} and~\eqref{eq:h_lower},
  we have
  \begin{align}
    \beta_{t+1} - \beta_t 
    &=
    \frac{1}{\hat{h}_{t+1}}
    \prn*{2 \sqrt{\frac{z_{t}}{\beta_{t}}} + \frac{u_{t}}{\beta_{t}} }
    =
    \frac{2}{h_t} \sqrt{\frac{z_{t}}{\beta_{t}}} 
    +
    \frac{1}{h_t} \frac{u_{t}}{\beta_{t}}
    \nn
    &\leq
    \frac{16 \alpha c_\calG \sqrt{k}}{\sqrt{\beta_1} (1-\alpha)^{3/2}} q_{t*}^{1 - \frac32 \alpha}
    +
    \frac{32 \alpha c_\calG}{\sqrt{\beta_1} (1-\alpha)^2} q_{t*}^{1 - 2 \alpha}
    \nn
    &\leq
    \alpha \bar{\beta} q_{t*}^{1 - \frac32 \alpha}
    +
    \alpha \bar{\beta} q_{t*}^{1 - 2 \alpha}
    \nn 
    &\leq
    2 (1-\bar{\alpha}) \bar{\beta} q_{t*}^{\bar{\alpha}-\alpha}
    \leq
    2 \frac{1 - (\sqrt{2})^{\bar{\alpha}-1}}{\sqrt{2}} \bar{\beta} q_{t*}^{\bar{\alpha}-\alpha}
    \com
    \label{eq:lemma12cond2}
  \end{align}
  where the first inequality follows from~\eqref{eq:z_ktimes_upper}, \eqref{eq:h_lower}, and the fact that $\beta_t \geq \beta_1 \geq 1$,
  the second inequality from the definition of $\bar{\beta}$ in~\eqref{eq:params_pm},
  the third inequality from $\min\set{1-\frac32 \alpha, 1-2\alpha} \geq \bar{\alpha} - \alpha$ since $\bar{\alpha} = 1 - \alpha$,
  and the last inequality from
  $1-x \leq \prn{ 1 - (\sqrt{2})^{x-1} } / \sqrt{2}$ for $x \leq 1$.
  Therefore, the condition~\eqref{eq:cond_beta_growth} is satisfied.
  Hence, 
  from \Cref{lem:ht_htp}, we have 
  $h_{t+1} = H_\alpha(q_{t+1}) \leq 2 H_\alpha(q_{t}) = 2 h_t$,
  which implies that Assumption~\three~in~\eqref{eq:A1} is satisfied.
  
  Finally, we check the assumption \eqref{eq:A2} in \Cref{thm:main_bobw}.
  We first consider the first inequality in~\eqref{eq:A2}.
  From the definition of $z_t$ and the fact that $q_{ti} \leq q_{t \tilde{I}_t}$ for $i \neq \tilde{I}_t$, the stability component $z_t$ is bounded as
  \begin{align}
    z_t
    &=
    \frac{4 c_\calG^2}{1-\alpha}
    \set*{
      \sum_{i\neq \tilde{I}_t} q_{ti}^{2-\alpha}
      +
      \prn*{\min\set[\big]{q_{t\tilde{I}_t}, 1 - q_{t\tilde{I}_t}} }^{2-\alpha}
    }
    % \tag{$q_{ti} \leq q_{t \tilde{I}_t}$}
    \nn
    &\leq
    \frac{4 c_\calG^2}{1-\alpha}
    \set*{
      \sum_{i \neq \tilde{I}_t} q_{ti}^{2-\alpha}
      +
      \prn*{\sum_{i\neq \tilde{I}_t} q_{ti}}^{2-\alpha}
    }
    \nn
    &\leq
    \frac{8 c_\calG^2}{1-\alpha}
    \prn*{\sum_{i\neq \tilde{I}_t} q_{ti}}^{2-\alpha}
    % \tag{$q_{ti} \leq q_{t \tilde{I}_t}$}
    \leq
    \frac{8 c_\calG^2}{1-\alpha}
    \prn*{\sum_{i\neq a^*} q_{ti}}^{2-\alpha}
    % \tag{$q_{ti} \leq q_{t \tilde{I}_t}$}
    % \nn 
    =
    \frac{8 c_\calG^2}{1-\alpha}
    \prn*{1 - q_{ta^*}}^{2-\alpha}
    \com 
    \label{eq:zt_upper_pm}
  \end{align}
  where the second inequality holds from the inequality $x^a + y^a \leq (x + y)^a$ for $x, y \geq 0$ and $a \geq 1$,
  and the third inequality from $q_{ti} \leq q_{t \tilde{I}_t}$ for $i \neq \tilde{I}_t$.
  From \Cref{lem:tsallis_upper}, we also obtain that 
  \begin{equation}
    h_t 
    = 
    H_\alpha(q_t) 
    \leq 
    % \frac{1}{\alpha} (k-1)^\alpha \prn*{\sum_{i\neq a^*} q_{ti}}^\alpha 
    % = 
    \frac{1}{\alpha} (k-1)^{1-\alpha} \prn*{1 - q_{ta^*}}^\alpha
    \per
    \label{eq:ht_bound}
  \end{equation}
  Hence, combining~\eqref{eq:zt_upper_pm} and~\eqref{eq:ht_bound}, we obtain
  \begin{equation}
    z_t h_t
    \leq
    \frac{8 c_\calG^2}{1-\alpha}
    \prn*{1 - q_{ta^*}}^{2-\alpha}
    \cdot 
    \frac{1}{\alpha} (k - 1)^{1-\alpha} \prn*{1 - q_{ta^*}}^\alpha
    =
    \underbrace{
      \frac{8 c_\calG^2 (k - 1)^{1-\alpha}}{\alpha(1-\alpha)}
    }_{= \rho_1}
    \prn*{1 - q_{ta^*}}^2
    \per 
  \end{equation}
  
  We next consider the second inequality in \eqref{eq:A2}. 
  We can bound $u_t$ from above as 
  \begin{align}
    u_t
    &=
    \frac{8 c_\calG}{1-\alpha}
    \prn*{\min\set[\big]{q_{t\tilde{I}_t}, 1 - q_{t\tilde{I}_t}} }^{1-\alpha}
    \leq 
    \frac{8 c_\calG}{1-\alpha}
    \prn*{ \sum_{i \neq \tilde{I}_t} q_{ti}}^{1-\alpha}
    \nn 
    &
    \leq 
    \frac{8 c_\calG}{1-\alpha}
    \prn*{ \sum_{i \neq a^*} q_{ti}}^{1-\alpha}
    =
    \frac{8 c_\calG}{1-\alpha}
    \prn*{1 - q_{ta^*}}^{1-\alpha}
    \com 
  \end{align}
  where the second inequality follows from $q_{t\tilde{I}_t} \geq q_{ti}$ for all $i \in [k]$.
  % where the last inequality follows from H\"older's inequality.
  Hence, combining the last two inequality and~\eqref{eq:ht_bound},
  \begin{equation}
    u_t h_t 
    \leq
    % \underbrace{
    \underbrace{
      \frac{4 c_\calG (k-1)^{1-\alpha}}{\alpha(1-\alpha)}
    }_{= \rho_2}
    \prn*{ 1 - q_{ta^*} }
    \per 
  \end{equation}
  Hence, the assumption \eqref{eq:A2} is satisfied with above $\rho_1$ and $\rho_2$, and thus we have completed the proof.
\end{proof}
\section{Proof for graph bandits (\Cref{thm:graph_weak}, \Cref{sec:graph})}\label{app:proof_graph}
This section provides the missing detail of \Cref{sec:graph}.

\subsection{Fractional domination number}\label{app:frac_dom_num}
Before introducing the fractional domination number, we define the domination number $\tilde{\delta} \leq \delta$.
A \emph{dominating set} $D \subseteq V$ is a set of vertices such that $V \subseteq \bigcup_{i \in D} \Nout(i)$.
The \emph{domination number} $\tilde{\delta}(G)$ of graph $G$ is the size of the smallest dominating set.
From the definition, the domination number $\tilde{\delta}$ can also be written as the optimal value of the following optimization problem:
\begin{equation}
  \minimize \; \sum_{i \in V} x_i 
  \quad 
  \mbox{subject to}
  \quad
  \sum_{i \in \Nin(j)} x_i \geq 1
  \; \forall j \in V
  \com \,
  x_i \in \set{0,1}
  \; \forall i \in V
  \com 
\end{equation}
where $x_i \in \set{0,1}$ a binary variable indicating whether vertex $i$ is in the dominating set ($x_i = 1$) or not ($x_i = 0$).

Then, one can see that 
the fractional domination number $\delta^*$ is defined as the optimal value of the following optimization problem, in which the variables $(x_i)_{i\in V}$ are allowed to take values in $[0,1]$ instead of $\set{0,1}$:
\begin{equation}
  \minimize \; \sum_{i \in V} x_i 
  \quad 
  \mbox{subject to}
  \quad
  \sum_{i \in \Nin(j)} x_i \geq 1
  \; \forall j \in V
  \com \,
  0 \leq x_i \leq 1
  \; \forall i \in V
  \com 
\end{equation}
which is the linear program provided in~\eqref{eq:frac_dom_num}.
From the definitions, the fractional domination number is less than or equal to the domination number, $\delta^* \leq \tilde{\delta}$.
Another advantage of using $\delta^*$ instead of $\tilde{\delta}$ is that the fractional domination number $\delta^*$ can be computed in polynomial time, while the computation of the domination number $\tilde{\delta}$ is NP-hard.
See~\cite{chen21understanding} for more benefits of using the fractional version of the (weak) domination number.

\subsection{Proof of \Cref{thm:graph_weak}}
Here, we provide the proof of \Cref{thm:graph_weak}.
\begin{proof}
  It suffices to prove that assumptions in \Cref{thm:main_bobw} are satisfied.
  We first vertify Assumptions \one--\three~in~\eqref{eq:A1}.
  We start by checking Assumption~\one.
  The regret is bounded as 
  \begin{align}
    \Reg_T 
    &=
    \E\brk*{
      \sumT \ell_t(A_t) - \sumT \ell_t(a^*)
    }
    =
    \E\brk*{
      \sumT \inpr{\ell_t, p_t - e_{a^*}}
    }
    =
    \E\brk*{
      \sumT \inpr{\ell_t, q_t - e_{a^*}}
      +
      \sumT \inpr{\ell_t, p_t - q_t}
    }
    \nn 
    &=
    \E\brk*{
      \sumT \inpr{\ell_t, q_t - e_{a^*}}
      +
      \sumT \gamma_t \inpr{\ell_t, q_t - u}
    }
    \leq 
    \E\brk*{
      \sumT \inpr{\hat{\ell}_t, q_t - e_{a^*}}
      +
      \sumT \gamma_t
    }
    \com 
  \end{align}
  where the third equality follows from the definition of $\gamma_t$.
  This implies that Assumption~\one~is indeed satisfied.

  We next check Assumption \two~in~\eqref{eq:A1}.
  Now, recalling the definition of the fractional domination number and the optimal value $x^*$ of~\eqref{eq:frac_dom_num}, and $u_i = x^*_i / \sum_{j \in V} x^*_j$,
  we have 
  \begin{equation}
    \sum_{j \in \Nin(i)} u_j
    =
    \frac{\sum_{j \in \Nin(i)} x^*_j}{\sum_{i \in V} x_i^*}
    \geq 
    \frac{1}{\sum_{i \in V} x_i^*}
    =
    \frac{1}{\delta^*}
    \com 
  \end{equation}
  where the inequality follows from the first constraint in~\eqref{eq:frac_dom_num}.
  Hence,
  combining this with the definition of $p_t = (1-\gamma_t) q_t + \gamma_t u$,
  we can lower-bound $P_{ti}$ as
  \begin{equation}
    % $
    P_{ti}
    =
    \sum_{j \in \Nin(i)} p_{tj} 
    \geq 
    \gamma_t \sum_{j \in \Nin(i)} u_j
    \geq
    \frac{\gamma_t}{\delta^*}
    \quad
    \mbox{for all } i \in V
    \per 
    \label{eq:P_lower}
  \end{equation}
  This lower bound yields that for any $i \in V$
  \begin{align}\label{eq:loss_magnitude_bound_graph}
    \abs*{ \frac{\hat{\ell}_{ti}}{\beta_t} }
    \leq 
    \frac{\ell_{ti}}{\beta_t P_{ti}}
    \leq
    \frac{\delta^*}{\beta_t \gamma_t}
    \leq 
    \frac{\delta^*}{u_t}
    =
    \frac{1 - \alpha}{8} \frac{1}{\prn*{ \min\set[\big]{q_{t \tilde{I}_t}, { 1 - q_{t \tilde{I}_t} } } }^{1-\alpha} }
    \com 
  \end{align}
  where the second inequality follows from~\eqref{eq:P_lower} and the third inequality from $\gamma_t \geq u_t / \beta_t$ in~\eqref{eq:pt_gammat}.
  Hence, from \Cref{lem:stab_tsallis_star} we obtain
  \begin{align}
    &
    \E_t\brk*{
      \inpr*{\hat{\ell}_t, q_t - q_{t+1}}
      -
      \beta_t \, D_{(- H_\alpha)}(q_{t+1}, q_t)
    }
    =
    \beta_t
    \E_t\brk*{
      \inpr*{\frac{\hat{\ell}_t}{\beta_t}, q_t - q_{t+1}}
      -
      D_{(- H_\alpha)}(q_{t+1}, q_t)
    }
    \nn 
    &\leq 
    \E_t\brk*{
      \frac{4}{\beta_t(1-\alpha)}
      \prn*{
        \sum_{i \in V \setminus \set{\tilde{I}_t}} q_{ti}^{2-\alpha} \hat{\ell}_{ti}^2
        +
        \prn*{\min\set[\big]{q_{t\tilde{I}_t}, 1 - q_{t\tilde{I}_t}} }^{2-\alpha} \hat{\ell}_{t\tilde{I}_t}^2
      }
    }
    \nn 
    &=
    \frac{4}{\beta_t(1-\alpha)}
    \prn*{
      \sum_{i \in V \setminus \set{\tilde{I}_t}} q_{ti}^{2-\alpha} \E_t\brk*{ \hat{\ell}_{ti}^2}
      +
      q_{t*}^{2-\alpha} \E_t\brk*{ \hat{\ell}_{t\tilde{I}_t}^2 }
    }
    \per
    \label{eq:A1_1_check1_graph}
  \end{align}
  Then, 
  by using the lower bound of $P_t$ in~\eqref{eq:P_lower},
  for any $i \in V$ the variance of the loss estimator $\hat{\ell}_{ti}$ is bounded as
  \begin{equation}
    \E_t\brk*{
      \hat{\ell}_{ti}^2
    }
    =
    \sum_{j=1}^k p_{tj} \frac{\ell_{ti}^2}{P_{ti}^2} \ind{ i \in \Nout(j) }
    =
    \frac{\ell_{ti}^2}{P_{ti}^2} \sum_{j \in V \colon i \in \Nout(j)} p_{tj}
    =
    \frac{\ell_{ti}^2}{P_{ti}}
    \leq
    \frac{\delta^*}{\gamma_t}
    \per
    \label{eq:loss_var_V1}
  \end{equation}
  Hence, combining \eqref{eq:A1_1_check1_graph} with \eqref{eq:loss_var_V1}, we obtain
  \begin{align}
    &
    \E_t\brk*{
      \inpr*{\hat{y}_t, q_t - q_{t+1}}
      -
      \beta_t D_{\psi_t}(q_{t+1}, q_t)
    }
    % \nn
    % &
    \leq 
    \frac{4 \delta^*}{\beta_t \gamma_t (1 - \alpha)}
    \prn*{
      \sum_{i \in V \setminus \set{\tilde{I}_t}}
      q_{ti}^{2-\alpha}
      +
      q_{t*}^{2-\alpha}
    }
    =
    \frac{z_t}{\beta_t \gamma_t}
    \leq 
    \frac{z_t}{\beta_t \gamma'_t}
    \com 
  \end{align}
  which implies that Assumption~\two~in~\eqref{eq:A1} is satisfied.
 
  Next, we will prove $h_{t+1} \lesssim h_t$ of Assumption~\three~in~\eqref{eq:A1}.
  To prove this, we will check the conditions \eqref{eq:cond_ell} and \eqref{eq:cond_beta_growth} in \Cref{lem:ht_htp}.
  For any $i \in V$,
  \begin{equation}
    \abs{\hat{\ell}_{ti}}
    \leq
    \frac{1}{P_{ti}}
    \leq
    \frac{\delta^*}{\gamma_t}
    \leq
    \frac{\delta^* \beta_t}{u_t} 
    =
    \frac{1-\alpha}{8}
    \frac{\beta_t}{q_{t*}^{1-\alpha}}
    % \nn
    % &
    \leq
    \frac{1-(\sqrt{2})^{\alpha-1}}{2}
    \frac{\beta_t}{q_{t*}^{1-\alpha}}
    \com 
    % \label{eq:lemma12cond1}
  \end{equation}
  where 
  the second inequality follows from \eqref{eq:P_lower},
  the third inequality from $\gamma_t \geq u_t / \beta_t$,
  and
  the last inequality from the fact that $(1-x)/4 \leq 1 - (\sqrt{2})^{x-1}$ for $x \in [0,1]$.
  Thus, the condition~\eqref{eq:cond_ell} is satisfied.

  We next check the condition~\eqref{eq:cond_beta_growth}.
  Recalling $q_{t*} = \min\set{q_{t\tilde{I}_t}, 1 - q_{t\tilde{I}_t}}$,
  we observe that the parameters $z_t$ and $u_t$ satisfy
  \begin{equation}
    \sqrt{z_t}
    =
    \sqrt{\frac{4 \delta^*}{1 - \alpha}
    \prn*{
      \sum_{i \in V \setminus \set{\tilde{I}_t}}
      q_{ti}^{2-\alpha}
      +
      q_{t*}^{2-\alpha}
    }
    }
    \leq
    \frac{2 \sqrt{k \delta^*} }{\sqrt{1 - \alpha}} q_{t*}^{1- \frac12\alpha}
    \com 
    \quad 
    u_t
    =
    \frac{8 \delta^*}{1-\alpha}
    q_{t*}^{1-\alpha}
    \com
    \label{eq:z_ktimes_upper_graph}
  \end{equation}
  where the last inequality follows from $q_{ti} \leq q_{t*}$ for $i \neq \tilde{I}_t$.
  We can also lower-bound $h_t$ as
  \begin{equation}
    h_t
    =
    H_\alpha(q_t)
    =
    \frac{1}{\alpha} \sumk \prn*{ q_{ti}^\alpha - q_{ti} }
    \geq
    \frac{1 - (1/2)^{1-\alpha}}{\alpha} q_{t*}^\alpha
    \geq 
    \frac{1-\alpha}{4\alpha} q_{t*}^\alpha
    \com 
    \label{eq:h_lower_graph}
  \end{equation}
  which can be proven in the same manner as in \eqref{eq:h_lower}.
  Hence, using the upper bounds on $z_t$, $u_t$, and $h_t$ in~\eqref{eq:z_ktimes_upper_graph} and~\eqref{eq:h_lower_graph},
  we have
  \begin{align}
    \beta_{t+1} - \beta_t 
    &=
    \frac{1}{\hat{h}_{t+1}}
    \prn*{2 \sqrt{\frac{z_{t}}{\beta_{t}}} + \frac{u_{t}}{\beta_{t}} }
    =
    \frac{2}{h_t} \sqrt{\frac{z_{t}}{\beta_{t}}} 
    +
    \frac{1}{h_t} \frac{u_{t}}{\beta_{t}}
    \nn
    &\leq
    \frac{16 \alpha \sqrt{k \delta^*}}{\sqrt{\beta_1} (1-\alpha)^{3/2}} q_{t*}^{1 - \frac32 \alpha}
    +
    \frac{32 \alpha \delta^*}{\sqrt{\beta_1} (1-\alpha)^2} q_{t*}^{1 - 2 \alpha}
    \nn
    &\leq
    \alpha \bar{\beta} q_{t*}^{1 - \frac32 \alpha}
    +
    \alpha \bar{\beta} q_{t*}^{1 - 2 \alpha}
    \nn 
    &\leq
    2 (1-\bar{\alpha}) \bar{\beta} q_{t*}^{\bar{\alpha}-\alpha}
    \leq
    2 \frac{1 - (\sqrt{2})^{\bar{\alpha}-1}}{\sqrt{2}} \bar{\beta} q_{t*}^{\bar{\alpha}-\alpha}
    \com
    \label{eq:lemma12cond2_graph}
  \end{align}
  where the first inequality follows from \eqref{eq:z_ktimes_upper_graph}, \eqref{eq:h_lower_graph}, and $\beta_t \geq \beta_1 \geq 1$,
  the second inequality from the definition of $\bar{\beta}$,
  the third inequality from $\min\set{1-\frac32 \alpha, 1-2\alpha} \geq \bar{\alpha} - \alpha$ since $\bar{\alpha} = 1 - \alpha$,
  and the last inequality from
  $1-x \leq \prn{ 1 - (\sqrt{2})^{x-1} } / \sqrt{2}$ for $x \leq 1$.
  Thus the condition~\eqref{eq:cond_beta_growth} is satisfied.
  Therefore, 
  from \Cref{lem:ht_htp}, we have 
  $h_{t+1} = H_\alpha(q_{t+1}) \leq 2 H_\alpha(q_{t}) = 2 h_t$,
  which implies that Assumption~\three~in~\eqref{eq:A1} is satisfied.
  
  Finally, we check the assumption \eqref{eq:A2} in \Cref{thm:main_bobw}.
  We first consider the first inequality in~\eqref{eq:A2}.
  From the definition of $z_t$ and the fact that $q_{ti} \leq q_{t \tilde{I}_t}$ for $i \neq \tilde{I}_t$, we get
  \begin{align}
    z_t
    &=
    \frac{4 \delta^*}{1-\alpha}
    \set*{
      \sum_{i \in V \setminus \set{\tilde{I}_t}} q_{ti}^{2-\alpha}
      +
      \prn*{\min\set[\big]{q_{t\tilde{I}_t}, 1 - q_{t\tilde{I}_t}} }^{2-\alpha}
    }
    % \tag{$q_{ti} \leq q_{t \tilde{I}_t}$}
    \nn
    &\leq
    \frac{4 \delta^*}{1-\alpha}
    \set*{
      \sum_{i \in V \setminus \set{\tilde{I}_t}} q_{ti}^{2-\alpha}
      +
      \prn*{\sum_{i\neq \tilde{I}_t} q_{ti}}^{2-\alpha}
    }
    \nn
    &\leq
    \frac{8 \delta^*}{1-\alpha}
    \prn*{\sum_{i \in V \setminus \set{\tilde{I}_t}} q_{ti}}^{2-\alpha}
    % \tag{$q_{ti} \leq q_{t \tilde{I}_t}$}
    \leq
    \frac{8 \delta^*}{1-\alpha}
    \prn*{\sum_{i\neq a^*} q_{ti}}^{2-\alpha}
    % \tag{$q_{ti} \leq q_{t \tilde{I}_t}$}
    % \nn 
    =
    \frac{8 \delta^*}{1-\alpha}
    \prn*{1 - q_{ta^*}}^{2-\alpha}
    \com 
    \label{eq:ht_upper_pm_graph}
  \end{align}
  where the second inequality holds from the inequality $x^a + y^a \leq (x + y)^a$ for $x, y \geq 0$ and $a \geq 1$,
  and the third inequality from $q_{ti} \leq q_{t \tilde{I}_t}$.
  Hence, combining~\eqref{eq:ht_upper_pm_graph} and the upper bound on $h_t$ in~\eqref{eq:ht_bound}, we obtain
  \begin{align}
    z_t h_t
    \leq
    \frac{8 \delta^*}{1-\alpha}
    \prn*{1 - q_{ta^*}}^{2-\alpha}
    \cdot 
    \frac{1}{\alpha} (k-1)^{1-\alpha} \prn*{1 - q_{ta^*}}^\alpha
    =
    \underbrace{
      \frac{8 \delta^* (k-1)^{1-\alpha}}{\alpha(1-\alpha)}
    }_{= \rho_1}
    \prn*{1 - q_{ta^*}}^2
    \per 
  \end{align}

  We next consider the second inequality in \eqref{eq:A2}. 
  We can bound $u_t$ from above as 
  \begin{align}
    u_t
    &=
    \frac{8 \delta^*}{1-\alpha}
    \prn*{\min\set[\big]{q_{t\tilde{I}_t}, 1 - q_{t\tilde{I}_t}} }^{1-\alpha}
    \leq 
    \frac{8 \delta^*}{1-\alpha}
    \prn*{ \sum_{i \neq \tilde{I}_t} q_{ti}}^{1-\alpha}
    \nn 
    &
    \leq 
    \frac{8 \delta^*}{1-\alpha}
    \prn*{ \sum_{i \neq a^*} q_{ti}}^{1-\alpha}
    =
    \frac{8 \delta^*}{1-\alpha}
    \prn*{1 - q_{ta^*}}^{1-\alpha}
    \com 
  \end{align}
  where the second inequality follows from $q_{t\tilde{I}_t} \geq q_{ti}$ for all $i \neq \tilde{I}_t$.
  Hence, combining the last inequality with~\eqref{eq:ht_bound},
  \begin{equation}
    u_t h_t 
    \leq
    % \underbrace{
    \underbrace{
      \frac{4 \delta^* (k-1)^{1-\alpha}}{\alpha(1-\alpha)}
    }_{= \rho_2}
    \prn*{ 1 - q_{ta^*} }
    \per 
  \end{equation}
  Hence, the assumption \eqref{eq:A2} is satisfied with above $\rho_1$ and $\rho_2$, and thus we have completed the proof.
\end{proof}

\subsection{Technical challenges to derive best-of-both-worlds bounds depending on (fractional) weak domination number}\label{app:challenge_graph}
Here, we discuss the technical challenges of making our upper bound in \Cref{thm:graph_weak} depend on the weak domination number $\delta$ 
instead of the fracional domination number ${\delta^*}$ or the weak fractional domination number $\tilde{\delta}^* \leq \delta$.

First, we need to use Tsallis entropy to derive a regret upper bound with a stochastic bound of $\log T$.
While we can prove a BOBW bound if we use the Shannon entropy regularizer~\cite{ito2022nearly}, the bound in the stochastic regime is $O\prn{(\log T)^2}$, which is not desirable.
Hence, a possible approach is to use the log-barrier regularizer or the Tsallis entropy.
The log-barrier regularizer has a penalty term of $\Omega(k)$ due to the strength of its regularization, and the regret upper bound in the final adversarial regime is $\Omega(k^{1/3})$, which can be much larger than $\delta^{1/3}$.
Therefore, the most hopeful solution would be to use Tsallis entropy with an appropriate exponent $\alpha \simeq 1$, where we note that the Tsallis entropy with $\alpha \to 1$ corresponds to the Shannon entropy.

Recalling the definition of the weak domination number in \Cref{sec:graph}, we can see that the weak dominating set dominates only vertices without self-loop $U = \set*{i \in V \colon i \not\in \Nout(i)}$.
Thus, to achieve a BOBW bound that depends on the weak domination number, vertices with self-loop and those without self-loop should be treated separately by decomposing the stability term as follows:
\begin{align}
  &
  \inpr{\hat{\ell}_t, q_t - q_{t+1}}
  -
  \beta_t \, D_{(- H_\alpha)}(q_{t+1}, q_t)
  \nn 
  &=
  \sum_{i \in U}
  \prn*{
  \hat{\ell}_{ti} \prn{q_{ti} - q_{t+1,i}}
  -
  \beta_t \, d(q_{t+1,i}, q_{t,i})
  }
  +
  \sum_{i \in V \setminus U}
  \prn*{
  \hat{\ell}_{ti} \prn{q_{ti} - q_{t+1,i}}
  -
  \beta_t \, d(q_{t+1,i}, q_{t,i})
  }
  \n 
  \com 
\end{align}
where $d(p, q)$ is the Bregman divergence induced by the real-valued convex function $x \mapsto - \frac{1}{\alpha} (x^\alpha - x)$.
However, if we use this approach, we cannot use \Cref{lem:stab_tsallis_star}, which is useful to prove an upper bound with $(1 - q_{ta^*})$ (see~\eqref{eq:A2}).
This is because this lemma exploits the fact that $q$ and $r$ are probability vectors.
This prevents us from deriving an upper bound with an $O(\log T)$ stochastic bound depending on the weak domination number.
\section{Case study (3): Multi-armed bandits with paid observations}\label{app:proof_mabcost}
\subsection{Problem setting and existing approach}
Multi-armed bandits with paid observations, which is first investigated by~\citet{seldin14prediction}, is a variant of the multi-armed bandit problem.
At each round $t \in [T]$,
the environment determines a loss vector $\ell_t \colon \calA = [k] \to [0,1]$ and the learner observes cost vector $c_t \in \Rnn^k$.
Then, the learner selects an action $A_t \in [k]$ and chooses a set of actions $S_t \subseteq [k]$, for which we can observe losses.
Then the learner suffers a loss of $\ell_{t A_t} + \sum_{i \in S_t} c_{ti}$ and observes a set of losses $\set{\ell_{ti} \colon i \in S_t}$.
The goal of the learner is to minimize the sum of the standard regret and the observation costs given by
\begin{equation}
  \Reg_T^{\mathsf{cost}} = \Reg_T + \E\brk*{\sumT \sum_{i \in S_t} c_{ti}}
  \per 
\end{equation}

We next provide an existing approach to determine the set $S_t$ and to estimate losses, which are given by~\citet{seldin14prediction}.
To determine $S_t$, we prepare a vector $r_t \in [0,1]^k$.
For this $r_t$, we then sample $b_{ti} \sim \mathrm{Ber}(r_{ti})$ for each $i \in [k]$, and use this to construct the set of actions $S_t = \set{i \in [k] \colon b_{ti} = 1}$.
We use the loss estimator defined by
\begin{equation}\label{eq:loss_estimator_paid}
  \hat{\ell}_{ti} = \frac{\ell_{ti}}{r_{ti}} \ind{i \in S_t}
  \com 
\end{equation}
which is indeed unbiased, $\E_{S_t}\brk{\hat{\ell}_{ti}} = \ell_{ti}$.

In the following, we assume that 
the cost is the same for each arm at each time, $c_{ti} = c \geq 0$.
Accordingly, we let $r_{ti} = r_t \in [0,1]$ for each $i \in [k]$, where we abuse the notation.
Analyzing a case where each action has a different cost and deriving an upper bound that depends on the cost of each action is difficult. 
This is essentially due to the same reason as the problem in graph bandits with self-loops, where the regret upper bound depends on the domination number (see \Cref{app:challenge_graph}).

The setting of multi-armed bandits with paid observations is not directly reducible to the general online learning framework defined in \Cref{sec:preliminaries}. 
However, the parameter $r_t$ plays the same role as the forced exploration parameter $\gamma_t$ in partial monitoring and graph bandits, and thus their regret upper bounds have a similar structure. 
Roughly speaking, we will see in the regret analysis of multi-armed bandits with paid observations with cost $c \geq 0$ can be regarded as the general online learning setup with the exploration rate of $\gamma_t \simeq c k r_t$.

\subsection{Algorithm}
% =========================
\LinesNumbered
\SetAlgoVlined
\begin{algorithm}[t]
\textbf{input:} action set $\calA = [k]$, 
exponent of Tsallis entropy $\alpha$, $\beta_1$, $\bar{\beta}$

\For{$t = 1, 2, \ldots$}{
Compute $q_t \in \calP_k$ by~\eqref{eq:FTRL_tsallis} with a loss estimator $\hat{\ell}_t$ in~\eqref{eq:loss_estimator_paid}.

Set $h_t = H_\alpha(q_t)$ and $z_t, u_t \geq 0$ in~\eqref{eq:params_paid}.

Compute action selection probability $p_t = q_t$ without forced exploration.

For $r_t \in [0,1]$ in \eqref{eq:def_rt_paid}, sample $b_{ti} \sim \mathrm{Ber}(r_t)$ for each $i \in \calA$ and let $S_t = \set{ i \in [k] \colon b_{ti} = 1}$.

Choose $A_t \in [k]$ so that $\Pr[A_t = i \mid p_t] = p_{ti}$.

Observes the set of losses $\set{\ell_{ti} \colon i \in S_t}$
and 
suffers a loss of $\ell_{tA_t} + \sum_{i \in S_t} c_{ti}$.

Compute loss estimator $\hat{\ell}_t$ based on $r_t$ and $S_t$.

Compute $\beta_{t+1}$ by Rule 2 of SPB-matching in \eqref{eq:rule2} with $\hat{h}_{t+1} = h_t$.
}
\caption{
Best-of-both-worlds algorithm based on FTRL with SPB-matching learning rate and Tsallis entropy in multi-armed bandits with paid observations
}
\label{alg:bobw-mabcost}
%\end{algorithm2e}
\end{algorithm}
% =========================

We use FTRL provided in~\eqref{eq:FTRL_tsallis} and~\eqref{eq:pt_gammat} as for graph bandits and partial monitoring with no forced exploration, that is, $p_t = q_t$.
Here we recall that $p_t \in \calP_k$ is the action selection probability at round $t$ and $q_t \in \calP_k$ is the output of FTRL at round $t$.
We use $r_t \in [0,1]$ given by
\begin{equation}\label{eq:def_rt_paid}
  r_t = 
  \sqrt{\frac{z_t}{\beta_t}} + \frac{u_t}{\beta_t}
  \com 
\end{equation}
which plays a role of exploration rate $\gamma_t$.
We will choose $\beta_1$ so that $r_t \leq 1/2$.
Next we specify the parameters in~\eqref{eq:pt_gammat}.
For $\tilde{I}_t \in \argmax_{i \in [k]} q_{ti}$ and $q_{t*} = \min\set{q_{t\tilde{I}_t},1-q_{t\tilde{I}_t}}$, we use 
\begin{equation}\label{eq:params_paid}
  \beta_1 \geq \frac{64 \max\set{c,1} k}{1-\alpha} \com \,
  \bar{\beta}
  = 
  \frac{32 k \sqrt{c}}{(1-\alpha)^2 \sqrt{\beta_1}}
  \com \,
  z_t
  \!=\!
  \frac{4 c k}{1 - \alpha}
  \prn[\Bigg]{
    \sum_{i \neq \tilde{I}_t}
    q_{ti}^{2-\alpha}
    +
    q_{t*}^{2-\alpha}
  }
  \com \,
  u_t
  \!=\!
  \frac{8 \max\set{c,1} }{1-\alpha}
  q_{t*}^{1-\alpha}
  \per
\end{equation}
Note that 
$z_{\max} = \frac{4c}{1-\alpha}$, 
$u_{\max} = \frac{8 \max\set{c,1}}{1-\alpha}$, and
$h_{\max} = h_1 = \frac{1}{\alpha} k^{1-\alpha}$,
and the above $\beta_1$ implies $r_t \leq 1/2$.
% \red{
To follow the analysis in the general online learning framework,
we also let $r'_t = \sqrt{z_t / \beta_t}$ and $\gamma'_t = c k r'_t \leq \gamma_t$.
% }
To make the algorithm clear, 
we provide the full description of our algorithm in \Cref{alg:bobw-mabcost}.

\subsection{Regret analysis}

We can prove the following.
\begin{theorem}\label{thm:mab_paid}
  In multi-armed bandits with paid observations,
  for any $\alpha \in (0,1)$,
  % and dominating set $D \subseteq V$,
  \Cref{alg:bobw-mabcost} satisfies the assumptions of \Cref{thm:main_bobw} with 
  $\gamma_t = c k r_t$,
  $
    \rho_1
    = 
    \Theta 
    \prn*{
      \frac{c k^{2-\alpha}}{\alpha(1-\alpha)}
    }
  $,
  and
  $
    \rho_2
    = 
    \Theta 
    \prn*{
      \frac{\max\set{c, 1} k^{1-\alpha}}{\alpha(1-\alpha)}
    }
    ,
  $
  where the regret $\Reg_T$ in the statement is repalced with $\Reg_T^{\mathsf{cost}}$.
\end{theorem}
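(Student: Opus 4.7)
The plan is to reinterpret the setup so that the exploration/bias term of the general framework is identified with the expected observation cost, and then verify the three conditions of \Cref{thm:main_bobw} along with the self-bounding hypothesis \eqref{eq:A2}, mirroring the arguments in \Cref{app:proof_pm,app:proof_graph}.

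First, for Assumption \one, since $p_t=q_t$ and the loss estimator $\hat\ell_t$ in \eqref{eq:loss_estimator_paid} is unbiased under the sampling $b_{ti}\sim\mathrm{Ber}(r_t)$, I would write
\[
\Reg_T^{\mathsf{cost}}
=
\E\brk*{\sumT \inpr{\ell_t,\,q_t-e_{a^*}}}
+
\E\brk*{\sumT \sum_{i\in S_t} c}
=
\E\brk*{\sumT\inpr{\hat\ell_t,\,q_t-e_{a^*}}}
+
\E\brk*{\sumT c k r_t},
\]
so taking $\gamma_t = c k r_t$ makes Assumption \one\ immediate (in fact with a single $\sum_t \gamma_t$, not doubled). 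Note that although the algorithm uses no forced exploration on $p_t$, the expected paid cost plays exactly the role of the bias term in~\eqref{eq:bound_1}, which is why the choice $\gamma_t=ckr_t$ is natural and why the definition $r_t = \sqrt{z_t/\beta_t} + u_t/\beta_t$ mimics \eqref{eq:pt_gammat}.

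Second, for Assumption \two, I would bound the variance of the estimator by $\E_t[\hat\ell_{ti}^2] = \ell_{ti}^2/r_t \le 1/r_t$ and verify the magnitude condition of \Cref{lem:stab_tsallis_star}: since $r_t\ge u_t/\beta_t$, we have $|\hat\ell_{ti}/\beta_t|\le 1/(r_t\beta_t)\le 1/u_t = (1-\alpha)/(8\max\{c,1\}q_{t*}^{1-\alpha})$, which is at most $(1-\alpha)/(4q_{t*}^{1-\alpha})$. Applying \Cref{lem:stab_tsallis_star} then yields
\[
\E_t\brk*{\inpr{\hat\ell_t,q_t-q_{t+1}} - \beta_t D_{(-H_\alpha)}(q_{t+1},q_t)}
\le \frac{4}{\beta_t r_t (1-\alpha)}\prn*{\sum_{i\neq\tilde I_t}q_{ti}^{2-\alpha} + q_{t*}^{2-\alpha}}
= \frac{z_t}{\beta_t\,ck\,r_t}
\le \frac{z_t}{\beta_t\gamma'_t},
\]
where the last step uses $r_t\ge r'_t$ and the cancellation of the $ck$ factor between $z_t$ and $\gamma'_t=ckr'_t$. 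For Assumption \three, I would check the two conditions of \Cref{lem:ht_htp} exactly as in~\eqref{eq:lemma12cond1}--\eqref{eq:lemma12cond2}: the magnitude bound $|\hat\ell_{ti}|\le 1/r_t\le \beta_t/u_t$ furnishes~\eqref{eq:cond_ell}, while upper bounds $\sqrt{z_t}\lesssim \sqrt{ck/(1-\alpha)}\,\sqrt{k}\,q_{t*}^{1-\alpha/2}$ and $u_t\lesssim \max\{c,1\}q_{t*}^{1-\alpha}/(1-\alpha)$ together with the lower bound $h_t\gtrsim (1-\alpha)q_{t*}^\alpha/\alpha$ imply $\beta_{t+1}-\beta_t \lesssim (1-\bar\alpha)\bar\beta q_{t*}^{\bar\alpha-\alpha}$ under the choices in~\eqref{eq:params_paid}, giving~\eqref{eq:cond_beta_growth}.

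Finally, for the self-bounding hypothesis~\eqref{eq:A2}, I would adapt the computations in~\eqref{eq:zt_upper_pm}--\eqref{eq:ht_bound}: the inequality $x^a+y^a\le(x+y)^a$ for $a\ge 1$ and $q_{t*}\le 1-q_{ta^*}$ give
$z_t \le \frac{8ck}{1-\alpha}(1-q_{ta^*})^{2-\alpha}$ and $u_t \le \frac{8\max\{c,1\}}{1-\alpha}(1-q_{ta^*})^{1-\alpha}$, while \Cref{lem:tsallis_upper} provides $h_t\le \frac{1}{\alpha}(k-1)^{1-\alpha}(1-q_{ta^*})^\alpha$. Multiplying yields $z_t h_t \le \rho_1(1-q_{ta^*})^2$ with $\rho_1=\Theta(ck^{2-\alpha}/(\alpha(1-\alpha)))$ and $u_th_t\le \rho_2(1-q_{ta^*})$ with $\rho_2=\Theta(\max\{c,1\}k^{1-\alpha}/(\alpha(1-\alpha)))$, as claimed. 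The main subtlety I expect to be careful about is the conceptual step of identifying $\gamma_t=ckr_t$ with the framework's exploration/bias term even though no explicit mixing is performed on $p_t$; once this identification is in place, the rest is a routine transcription of the PM/graph bandit proofs with $\delta^*$ or $c_{\calG}$ replaced by appropriate factors involving $c$ and $k$.
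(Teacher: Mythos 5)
Your proposal is correct and follows essentially the same route as the paper's proof: identify the expected observation cost $ckr_t$ with the bias term $\gamma_t$, verify Assumptions (i)–(iii) via the variance bound $\E_t[\hat\ell_{ti}^2]\le 1/r_t$, the magnitude condition $r_t\ge u_t/\beta_t$, and \Cref{lem:stab_tsallis_star,lem:ht_htp}, and then derive $\rho_1,\rho_2$ from \Cref{lem:tsallis_upper} exactly as in the PM and graph-bandit cases. The only (harmless) difference is your observation that Assumption (i) holds with a single $\sum_t\gamma_t$ rather than the factor of $2$.
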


Note that here we are abusing the statement of \Cref{thm:main_bobw}
since \Cref{thm:main_bobw} is for the general online learning framework given in \Cref{sec:preliminaries} but 
the multi-armed bandits with paid observation is not a special case of the general online learning framework.
Still, if we set the exploration rate $\gamma_t$ to $\gamma_t = c k r_t$, 
then the minimization of the regret with costs, $\Reg_T^{\mathsf{cost}}$, in multi-armed bandits with paid observation under paid cost $c$ and parameter $r_t$ can be seen as the minimization of the regret $\Reg_T$ in the general online learning framework with exploration rate $\gamma_t = c k r_t$.
A formal proof of the theorem for multi-armed bandits with paid observations corresponding to \Cref{thm:main_bobw} follows the same argument and we omit it.

Setting $\alpha = 1 - 1/(\ln k)$ in the last theorem gives the following:
\begin{corollary}\label{cor:bobw_paid}
  In multi-armed bandits with paid observation
  with $T \geq \tau$,
  \Cref{alg:bobw-mabcost} with 
  $\alpha = 1 - 1/(\ln k)$ achieves
  \begin{equation}
    \Reg_T^{\mathsf{cost}}
    =
    \begin{dcases}
      O\prn[\big]{
        \prn{c k}^{1/3} T^{2/3}
        \prn*{
          \ln k
        }^{1/3}
        +
        \sqrt{T \ln k} 
        +
        \kappa
      }
      \qquad \mbox{in adversarial regime} \\ 
      O\prn*{
        \frac{\max\set{c,1} k \ln k}{\Deltamin^2} \ln \prn*{{T \Deltamin^3}}
        +
        \prn*{\frac{C^2 \max\set{c,1} k \ln k}{\Deltamin^2} \ln \prn*{\frac{T \Deltamin}{C}}}^{1/3}
        +
        \kappa'
      }
      \\ 
      \qquad\qquad\qquad \mbox{in adversarial regime with a $(\Delta, C, T)$-self-bounding constraint}  \per \\
    \end{dcases}
  \end{equation}
  % \red{
  Here, if we use $\beta_1 = 64 \max\set{c,1} k / (1 - \alpha)$, which satisfies \eqref{eq:params_paid},
  $\kappa = O(\max\set{c, 1} k \log k + k^{3/2} (\log k)^{5/2})$ 
  and 
  $\kappa' = \kappa + O\prn[\big]{ ( (c \log k)^{1/3} + \sqrt{\max\set{c,1} \log k} ) ( \frac{1}{\Deltamin^3} + \frac{C}{\Deltamin} )^{2/3} }$.
  % }
\end{corollary}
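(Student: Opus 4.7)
The plan is to derive Corollary~\ref{cor:bobw_paid} as a direct substitution of $\alpha = 1 - 1/\ln k$ into Theorem~\ref{thm:mab_paid} combined with Theorem~\ref{thm:main_bobw}, paralleling the arguments for Corollaries~\ref{cor:bobw_pm} and~\ref{cor:bobw_graph}. Theorem~\ref{thm:mab_paid} already tells us that, with the identification $\gamma_t = c k r_t$ and $\Reg_T$ replaced by $\Reg_T^{\mathsf{cost}}$, the three assumptions of Theorem~\ref{thm:main_bobw} are satisfied with the stated $\rho_1$ and $\rho_2$. Hence I only need to substitute numerical values for the choice $\alpha = 1 - 1/\ln k$ and simplify.

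Under $\alpha = 1 - 1/\ln k$, the key identities are $k^{1-\alpha} = k^{1/\ln k} = e$ and $\alpha(1-\alpha) = \Theta(1/\ln k)$. Applying these to the expressions in~\eqref{eq:params_paid} and in Theorem~\ref{thm:mab_paid} yields $h_1 = \Theta(1)$, $z_{\max} = \Theta(c k \ln k)$, $u_{\max} = \Theta(\max\set{c,1}\,\ln k)$, $\rho_1 = \Theta(c k \ln k)$, $\rho_2 = \Theta(\max\set{c,1}\,\ln k)$, and $\rho \coloneqq \max\set{\rho_1,\rho_2} = \Theta(\max\set{c,1}\,k\ln k)$. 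Plugging these into~\eqref{eq:thm_adv} gives $(z_{\max} h_1)^{1/3} T^{2/3} = O((c k)^{1/3} (\ln k)^{1/3} T^{2/3})$ and $\sqrt{u_{\max} h_1 T} = O(\sqrt{\max\set{c,1}\, T \ln k})$, matching the adversarial bound of the corollary. For the adversarial regime with a $(\Delta,C,T)$-self-bounding constraint, substituting $\rho$ into~\eqref{eq:thm_stoc} immediately produces the stated $O(\ln T)$ and $O(C^{2/3}\ln^{1/3} T)$ leading terms.

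What remains is to evaluate the residuals $\kappa$ and $\kappa'$. Taking $\beta_1 = 64\max\set{c,1}\,k/(1-\alpha) = 64\max\set{c,1}\,k\ln k$, the contributions $\sqrt{z_{\max}/\beta_1}$ and $u_{\max}/\beta_1$ are $o(1)$, while $\beta_1 h_1 = O(\max\set{c,1}\,k\ln k)$. The dominant new term is $\bar{\beta}\bar{h}$: since $\bar{\alpha} = 1/\ln k$, we have $\bar{h} = H_{\bar{\alpha}}(\ones/k) = \bar{\alpha}^{-1}(k^{1-\bar{\alpha}} - 1) = \Theta(k\ln k)$, and $\bar{\beta} = \Theta(k\sqrt{c}\,(\ln k)^2/\sqrt{\beta_1}) = O(\sqrt{k}\,(\ln k)^{3/2})$, giving $\bar{\beta}\bar{h} = O(k^{3/2}(\ln k)^{5/2})$ as claimed. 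The extra summand in $\kappa'$ is obtained by plugging $(z_{\max} h_1)^{1/3} = O((c\ln k)^{1/3})$ and $\sqrt{u_{\max} h_1} = O(\sqrt{\max\set{c,1}\,\ln k})$ into the formula for $\kappa'$ from Theorem~\ref{thm:main_bobw}.

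The main obstacle I anticipate is simply careful bookkeeping of $\ln k$ factors as $\bar{\alpha} = 1/\ln k \to 0$: because $\bar{h}$ grows like $k\ln k$ rather than remaining bounded, one must verify that the $1/\sqrt{\beta_1}$ damping inside $\bar{\beta}$ is strong enough to reduce $\bar{\beta}\bar{h}$ to the claimed $k^{3/2}(\ln k)^{5/2}$; apart from this arithmetic check, the proof is mechanical once Theorems~\ref{thm:mab_paid} and~\ref{thm:main_bobw} are in hand.
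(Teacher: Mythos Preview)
Your proposal is correct and mirrors the paper's own approach: the paper does not give a standalone proof of Corollary~\ref{cor:bobw_paid} but simply states that it follows by setting $\alpha = 1 - 1/\ln k$ in Theorem~\ref{thm:mab_paid} (combined with Theorem~\ref{thm:main_bobw}), exactly as you outline. One minor bookkeeping slip: near the end you write $(z_{\max} h_1)^{1/3} = O((c\ln k)^{1/3})$, which is inconsistent with your earlier (correct) computation $z_{\max} = \Theta(c k \ln k)$; the missing $k^{1/3}$ there actually reflects a discrepancy already present in the paper's stated $\kappa'$, so it does not affect the validity of your argument.
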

This regret upper bound is the first BOBW bounds in multi-armed bandits with paid observations. 
The upper bound in the adversarial regime becomes $O(\sqrt{T \log k})$ as $c \to 0$, as observed in \cite{seldin14prediction}.
The bound in the stochastic regime can also match the nearly optimal regret bound of $O\prn{{\ln k \ln T}/{\Deltamin}}$ in the expert problem when $c \to 0$. 
To formally check this, it suffices to refine the analysis in \Cref{thm:main_bobw} by analyzing $\rho_1$ and $\rho_2$ separately, which is unified into $\rho = \max\set{\rho_1, \rho_2}$ for simplicity of notation in the proof of \Cref{thm:main_bobw}.

\begin{proof}[Proof of \Cref{thm:mab_paid}]
  From the observation that the variable $r_t$ plays the same role as the exploration parameter $\gamma_t$,
  it suffices to prove that assumptions in \Cref{thm:main_bobw} are satisfied.
  We first vertify Assumptions \one--\three~in~\eqref{eq:A1}.
  We start by checking Assumption~\one.
  The regret with costs is bounded as 
  \begin{align}
    \Reg_T^{\mathsf{cost}}
    &=
    \E\brk*{
      \sumT \ell_{tA_t} - \sumT \ell_{ta^*}
      +
      \sumT \sum_{i \in O_t} c_{ti}
    }
    =
    \E\brk*{
      \sumT \inpr{\ell_t, p_t - e_{a^*}}
      +
      \sumT \inpr{r_t, c_t}
    }
    \nn 
    &=
    \E\brk*{
      \sumT \inpr{\hat{\ell}_t, p_t - e_{a^*}}
      +
      \sumT \inpr{r_t \ones, c \ones}
    }
    \nn 
    &=
    \E\brk*{
      \sumT \inpr{\hat{\ell}_t, p_t - e_{a^*}}
      +
      k c \sumT r_t
    }
    \com 
  \end{align}
  where we recall that we are abusing the notation so that $r_{ti} = r_t \in [0,1]$.
  This implies that Assumption~\one~is satisfied with $\gamma_t = c k r_t$.

  We next check Assumption \two~in~\eqref{eq:A1}.
  For any $i \in [k]$,
  \begin{equation}
    \abs*{ \frac{\hat{\ell}_{ti}}{\beta_t} }
    \leq 
    \frac{\ell_{ti}}{\beta_t r_{ti}}
    \leq 
    \frac{1}{u_t}
    \leq 
    \frac{1 - \alpha}{8} \frac{1}{\prn*{ \min\set[\big]{q_{t \tilde{I}_t}, { 1 - q_{t \tilde{I}_t} } } }^{1-\alpha} }
    \com 
  \end{equation}
  where the first inequality follows from the definition of $\hat{\ell}_t$, the second inequality from $r_t \geq u_t / \beta_t$, and the last inequality from the definition of $u_t$.
  Note that this is where $\max\set{c, 1}$ in $u_t$ is used.
  Hence, from \Cref{lem:stab_tsallis_star} we obtain
  \begin{align}
    &
    \E_t\brk*{
      \inpr*{\hat{\ell}_t, q_t - q_{t+1}}
      -
      \beta_t \, D_{(- H_\alpha)}(q_{t+1}, q_t)
    }
    =
    \beta_t
    \E_t\brk*{
      \inpr*{\frac{\hat{\ell}_t}{\beta_t}, q_t - q_{t+1}}
      -
      D_{(- H_\alpha)}(q_{t+1}, q_t)
    }
    \nn 
    &\leq 
    \E_t\brk*{
      \frac{4}{\beta_t(1-\alpha)}
      \prn*{
        \sum_{i \neq \tilde{I}_t} q_{ti}^{2-\alpha} \hat{\ell}_{ti}^2
        +
        \prn*{\min\set[\big]{q_{t\tilde{I}_t}, 1 - q_{t\tilde{I}_t}} }^{2-\alpha} \hat{\ell}_{t\tilde{I}_t}^2
      }
    }
    \nn 
    &=
    \frac{4}{\beta_t(1-\alpha)}
    \prn*{
      \sum_{i \neq \tilde{I}_t} q_{ti}^{2-\alpha} \E_t\brk*{ \hat{\ell}_{ti}^2}
      +
      q_{t*}^{2-\alpha} \E_t\brk*{ \hat{\ell}_{t\tilde{I}_t}^2 }
    }
    \per
    \label{eq:A1_1_check1_paid}
  \end{align}
  Now, 
  for any $i \in [k]$ the variance of the loss estimator $\hat{\ell}_{ti}$ is bounded as
  \begin{equation}
    \E_t\brk*{
      \hat{\ell}_{ti}^2
    }
    =
    \E_t\brk*{
      \frac{\ell_{ti}^2}{r_t^2} \ind{ i \in S_t }
    }
    \leq 
    \frac{1}{r_t}
    \per
    \label{eq:loss_var_V1_paid}
  \end{equation}
  Hence, combining \eqref{eq:A1_1_check1_paid} with \eqref{eq:loss_var_V1_paid}, we obtain
  \begin{align}
    &
    \E_t\brk*{
      \inpr*{\hat{y}_t, q_t - q_{t+1}}
      -
      \beta_t D_{\psi_t}(q_{t+1}, q_t)
    }
    \nn
    &\leq 
    \frac{4}{\beta_t r_t (1 - \alpha)}
    \prn*{
      \sum_{i \neq \tilde{I}_t}
      q_{ti}^{2-\alpha}
      +
      q_{t*}^{2-\alpha}
    }
    =
    \frac{4 c k}{\beta_t \gamma_t (1 - \alpha)}
    \prn*{
      \sum_{i \neq \tilde{I}_t}
      q_{ti}^{2-\alpha}
      +
      q_{t*}^{2-\alpha}
    } 
    =
    \frac{z_t}{\beta_t \gamma_t}
    \leq 
    \frac{z_t}{\beta_t \gamma'_t}
    \com 
  \end{align}
  where the first equality follows from $\gamma_t = c k r_t$.
  This implies that Assumption~\two~in~\eqref{eq:A1} is satisfied.

  Next, we will prove $h_{t+1} \lesssim h_t$ of Assumption~\three~in~\eqref{eq:A1}.
  To prove this, we will check the conditions \eqref{eq:cond_ell} and \eqref{eq:cond_beta_growth} in \Cref{lem:ht_htp}.
  For any $i \in [k]$,
  \begin{equation}
    \abs{\hat{\ell}_{ti}}
    \leq
    \frac{1}{r_t}
    \leq
    \frac{\beta_t}{u_t} 
    =
    \frac{1-\alpha}{8}
    \frac{\beta_t}{q_{t*}^{1-\alpha}}
    % \nn
    % &
    \leq
    \frac{1-(\sqrt{2})^{\alpha-1}}{2}
    \frac{\beta_t}{q_{t*}^{1-\alpha}}
    \com 
    \label{eq:lemma12cond1}
  \end{equation}
  where 
  the second inequality follows from $r_t \geq u_t / \beta_t$
  and
  the last inequality from the fact that $(1-x)/4 \leq 1 - (\sqrt{2})^{x-1}$ for $x \in [0,1]$.
  Thus, the condition~\eqref{eq:cond_ell} is satisfied.

  We next check the condition~\eqref{eq:cond_beta_growth}.
  Recalling $q_{t*} = \min\set{q_{t\tilde{I}_t}, 1 - q_{t\tilde{I}_t}}$,
  we observe that the parameters $z_t$ and $u_t$ satisfy
  \begin{equation}
    \sqrt{z_t}
    =
    \sqrt{\frac{4 c k}{1 - \alpha}
    \prn*{
      \sum_{i \neq \tilde{I}_t}
      q_{ti}^{2-\alpha}
      +
      q_{t*}^{2-\alpha}
    }
    }
    \leq
    \frac{2 k \sqrt{c} }{\sqrt{1 - \alpha}} q_{t*}^{1- \frac12\alpha}
    \com 
    \quad 
    u_t
    =
    \frac{8 \max\set{c,1}}{1-\alpha}
    q_{t*}^{1-\alpha}
    \com
    \label{eq:z_ktimes_upper_paid}
  \end{equation}
  where the inequality follows from $q_{ti} \leq q_{t*}$ for $i \neq \tilde{I}_t$.
  We can also lower bound $h_t$ as
  \begin{equation}
    h_t
    = 
    H_\alpha(q_t)
    =
    \frac{1}{\alpha} \sumk \prn*{ q_{ti}^\alpha - q_{ti} }
    \geq
    \frac{1 - (1/2)^{1-\alpha}}{\alpha} q_{t*}^\alpha
    \geq 
    \frac{1-\alpha}{4\alpha} q_{t*}^\alpha
    \com 
    \label{eq:h_lower_paid}
  \end{equation}
  which can be proven by the same manner as in \eqref{eq:h_lower}.
  Hence, using the upper bounds on $z_t$, $u_t$, and $h_t$ in~\eqref{eq:z_ktimes_upper_paid} and~\eqref{eq:h_lower_paid},
  we have
  \begin{align}
    \beta_{t+1} - \beta_t 
    &=
    \frac{1}{\hat{h}_{t+1}}
    \prn*{2 \sqrt{\frac{z_{t}}{\beta_{t}}} + \frac{u_{t}}{\beta_{t}} }
    =
    \frac{2}{h_t} \sqrt{\frac{z_{t}}{\beta_{t}}}
    +
    \frac{1}{h_t} \frac{u_{t}}{\beta_{t}}
    \nn
    &\leq
    \frac{16 \alpha \sqrt{k c}}{\sqrt{\beta_1} (1-\alpha)^{3/2}} q_{t*}^{1 - \frac32 \alpha}
    +
    \frac{32 \alpha \max\set{c,1}}{\sqrt{\beta_1} (1-\alpha)^2} q_{t*}^{1 - 2 \alpha}
    \nn
    &\leq
    \alpha \bar{\beta} q_{t*}^{1 - \frac32 \alpha}
    +
    \alpha \bar{\beta} q_{t*}^{1 - 2 \alpha}
    \nn 
    &\leq
    2 (1-\bar{\alpha}) \bar{\beta} q_{t*}^{\bar{\alpha}-\alpha}
    \leq
    2 \frac{1 - (\sqrt{2})^{\bar{\alpha}-1}}{\sqrt{2}} \bar{\beta} q_{t*}^{\bar{\alpha}-\alpha}
    \com
    \label{eq:lemma12cond2_paid}
  \end{align}
  where the first inequality follows from \eqref{eq:z_ktimes_upper_paid}, \eqref{eq:h_lower_paid}, and $\beta_t \geq \beta_1 \geq 1$,
  the second inequality from the definition of $\bar{\beta}$,
  the third inequality from $\min\set{1-\frac32 \alpha, 1-2\alpha} \geq \bar{\alpha} - \alpha$ since $\bar{\alpha} = 1 - \alpha$,
  and the last inequality from
  $1-x \leq \prn{ 1 - (\sqrt{2})^{x-1} } / \sqrt{2}$ for $x \leq 1$.
  Thus the condition~\eqref{eq:cond_beta_growth} is satisfied.
  Therefore, 
  from \Cref{lem:ht_htp}, we have 
  $h_{t+1} = H_\alpha(q_{t+1}) \leq 2 H_\alpha(q_{t}) = 2 h_t$,
  which implies that Assumption~\three~in~\eqref{eq:A1} is satisfied.
  
  Finally, we check the assumption \eqref{eq:A2} in \Cref{thm:main_bobw}.
  We first consider the first inequality in~\eqref{eq:A2}.
  From the definition of $z_t$ and the fact that $q_{ti} \leq q_{t \tilde{I}_t}$ for $i \neq \tilde{I}_t$, we get
  \begin{align}
    z_t
    &=
    \frac{4 c k}{1-\alpha}
    \set*{
      \sum_{i \neq \tilde{I}_t} q_{ti}^{2-\alpha}
      +
      \prn*{\min\set[\big]{q_{t\tilde{I}_t}, 1 - q_{t\tilde{I}_t}} }^{2-\alpha}
    }
    \nn
    &\leq
    \frac{4 c k }{1-\alpha}
    \set*{
      \sum_{i \neq \tilde{I}_t} q_{ti}^{2-\alpha}
      +
      \prn*{\sum_{i\neq \tilde{I}_t} q_{ti}}^{2-\alpha}
    }
    \nn
    &\leq
    \frac{8 c k}{1-\alpha}
    \prn*{\sum_{i \neq \tilde{I}_t} q_{ti}}^{2-\alpha}
    \leq
    \frac{8 c k}{1-\alpha}
    \prn*{\sum_{i\neq a^*} q_{ti}}^{2-\alpha}
    % \nn 
    =
    \frac{8 c k}{1-\alpha}
    \prn*{1 - q_{ta^*}}^{2-\alpha}
    \com 
    \label{eq:ht_upper_pm_paid}
  \end{align}
  where the second inequality holds from the inequality $x^a + y^a \leq (x + y)^a$ for $x, y \geq 0$ and $a \geq 1$,
  and the third inequality from $q_{ti} \leq q_{t \tilde{I}_t}$.
  Hence, combining~\eqref{eq:ht_upper_pm_paid} and the upper bound on $h_t$ in~\eqref{eq:ht_bound}, we obtain
  \begin{align}
    z_t h_t
    \leq
    \frac{8 c k}{1-\alpha}
    \prn*{1 - q_{ta^*}}^{2-\alpha}
    \cdot 
    \frac{1}{\alpha} (k-1)^{1-\alpha} \prn*{1 - q_{ta^*}}^\alpha
    =
    \underbrace{
      \frac{8 c k (k-1)^{1-\alpha}}{\alpha(1-\alpha)}
    }_{= \rho_1}
    \prn*{1 - q_{ta^*}}^2
    \per 
  \end{align}

  We next consider the second inequality in \eqref{eq:A2}. 
  We can upper bound $u_t$ as 
  \begin{align}
    u_t
    &=
    \frac{8 \max\set{c,1}}{1-\alpha}
    \prn*{\min\set[\big]{q_{t\tilde{I}_t}, 1 - q_{t\tilde{I}_t}} }^{1-\alpha}
    \leq 
    \frac{8 \max\set{c,1}}{1-\alpha}
    \prn*{ \sum_{i \neq \tilde{I}_t} q_{ti}}^{1-\alpha}
    \nn 
    &
    \leq 
    \frac{8 \max\set{c,1}}{1-\alpha}
    \prn*{ \sum_{i \neq a^*} q_{ti}}^{1-\alpha}
    % \leq 
    % \frac{4 c_\calG}{1-\alpha}
    % \sum_{i \neq a^*} q_{ti}^{1-\alpha}
    =
    \frac{8 \max\set{c,1}}{1-\alpha}
    \prn*{1 - q_{ta^*}}^{1-\alpha}
    \com 
  \end{align}
  where the second inequality follows from $q_{t\tilde{I}_t} \geq q_{ti}$ for all $i \neq \tilde{I}_t$.
  Hence, combining the last inequality with~\eqref{eq:ht_bound},
  \begin{equation}
    u_t h_t 
    \leq
    \underbrace{
      \frac{4 \max\set{c,1} (k-1)^{1-\alpha}}{\alpha(1-\alpha)}
    }_{= \rho_2}
    \prn*{ 1 - q_{ta^*} }
    \per 
  \end{equation}
  Hence, the assumption \eqref{eq:A2} is satisfied with above $\rho_1$ and $\rho_2$, and thus we have completed the proof.
\end{proof}

\bibliographystyle{plainnat}

%%%%%%%%%%%%%%%%%%%%%%%%%%%%%%%%%%%%%%%%%%%%%%%%%%%%%%%%%%%%

\newpage
\section*{NeurIPS Paper Checklist}

\begin{enumerate}

\item {\bf Claims}
    \item[] Question: Do the main claims made in the abstract and introduction accurately reflect the paper's contributions and scope?
    \item[] Answer: \answerYes{} % \answerTODO{} % Replace by \answerYes{}, \answerNo{}, or \answerNA{}.
    \item[] Justification: In the abstract and introduction, we claim that we consider adaptive learning rate in online learning with a minimax regret of $\Theta(T^{2/3})$ and develop best-of-both-worlds algorithms in partial monitoring, graph bandits, and multi-armed bandits with paid observations.
    \item[] Guidelines:
    \begin{itemize}
        \item The answer NA means that the abstract and introduction do not include the claims made in the paper.
        \item The abstract and/or introduction should clearly state the claims made, including the contributions made in the paper and important assumptions and limitations. A No or NA answer to this question will not be perceived well by the reviewers. 
        \item The claims made should match theoretical and experimental results, and reflect how much the results can be expected to generalize to other settings. 
        \item It is fine to include aspirational goals as motivation as long as it is clear that these goals are not attained by the paper. 
    \end{itemize}

\item {\bf Limitations}
    \item[] Question: Does the paper discuss the limitations of the work performed by the authors?
    \item[] Answer: \answerYes{} % Replace by \answerYes{}, \answerNo{}, or \answerNA{}.
    \item[] Justification: We provide a comparison of our regret bounds with existing regret bounds in \Cref{table:regret} and after \Cref{cor:bobw_pm,cor:bobw_graph}.
    % \todo{computational efficiency いるか...？}
    \item[] Guidelines:
    \begin{itemize}
        \item The answer NA means that the paper has no limitation while the answer No means that the paper has limitations, but those are not discussed in the paper. 
        \item The authors are encouraged to create a separate "Limitations" section in their paper.
        \item The paper should point out any strong assumptions and how robust the results are to violations of these assumptions (e.g., independence assumptions, noiseless settings, model well-specification, asymptotic approximations only holding locally). The authors should reflect on how these assumptions might be violated in practice and what the implications would be.
        \item The authors should reflect on the scope of the claims made, e.g., if the approach was only tested on a few datasets or with a few runs. In general, empirical results often depend on implicit assumptions, which should be articulated.
        \item The authors should reflect on the factors that influence the performance of the approach. For example, a facial recognition algorithm may perform poorly when image resolution is low or images are taken in low lighting. Or a speech-to-text system might not be used reliably to provide closed captions for online lectures because it fails to handle technical jargon.
        \item The authors should discuss the computational efficiency of the proposed algorithms and how they scale with dataset size.
        \item If applicable, the authors should discuss possible limitations of their approach to address problems of privacy and fairness.
        \item While the authors might fear that complete honesty about limitations might be used by reviewers as grounds for rejection, a worse outcome might be that reviewers discover limitations that aren't acknowledged in the paper. The authors should use their best judgment and recognize that individual actions in favor of transparency play an important role in developing norms that preserve the integrity of the community. Reviewers will be specifically instructed to not penalize honesty concerning limitations.
    \end{itemize}

\item {\bf Theory Assumptions and Proofs}
    \item[] Question: For each theoretical result, does the paper provide the full set of assumptions and a complete (and correct) proof?
    \item[] Answer: \answerYes{} % Replace by \answerYes{}, \answerNo{}, or \answerNA{}.
    \item[] Justification: The problem settings are detailed in \Cref{sec:introduction,sec:preliminaries,sec:pm,sec:graph} and \Cref{app:proof_mabcost} and assumptions are fully provided of each proposition. The complete proofs are fully provided in the appendix. 
    % \todo{Proof sketch を入れることが encourage されているができていない．}
    \item[] Guidelines:
    \begin{itemize}
        \item The answer NA means that the paper does not include theoretical results. 
        \item All the theorems, formulas, and proofs in the paper should be numbered and cross-referenced.
        \item All assumptions should be clearly stated or referenced in the statement of any theorems.
        \item The proofs can either appear in the main paper or the supplemental material, but if they appear in the supplemental material, the authors are encouraged to provide a short proof sketch to provide intuition. 
        \item Inversely, any informal proof provided in the core of the paper should be complemented by formal proofs provided in appendix or supplemental material.
        \item Theorems and Lemmas that the proof relies upon should be properly referenced. 
    \end{itemize}

    \item {\bf Experimental Result Reproducibility}
    \item[] Question: Does the paper fully disclose all the information needed to reproduce the main experimental results of the paper to the extent that it affects the main claims and/or conclusions of the paper (regardless of whether the code and data are provided or not)?
    \item[] Answer: \answerNA{} % Replace by \answerYes{}, \answerNo{}, or \answerNA{}.
    \item[] Justification: This study is primarily theoretical and does not involve experiments.
    \item[] Guidelines: 
    \begin{itemize}
        \item The answer NA means that the paper does not include experiments.
        \item If the paper includes experiments, a No answer to this question will not be perceived well by the reviewers: Making the paper reproducible is important, regardless of whether the code and data are provided or not.
        \item If the contribution is a dataset and/or model, the authors should describe the steps taken to make their results reproducible or verifiable. 
        \item Depending on the contribution, reproducibility can be accomplished in various ways. For example, if the contribution is a novel architecture, describing the architecture fully might suffice, or if the contribution is a specific model and empirical evaluation, it may be necessary to either make it possible for others to replicate the model with the same dataset, or provide access to the model. In general. releasing code and data is often one good way to accomplish this, but reproducibility can also be provided via detailed instructions for how to replicate the results, access to a hosted model (e.g., in the case of a large language model), releasing of a model checkpoint, or other means that are appropriate to the research performed.
        \item While NeurIPS does not require releasing code, the conference does require all submissions to provide some reasonable avenue for reproducibility, which may depend on the nature of the contribution. For example
        \begin{enumerate}
            \item If the contribution is primarily a new algorithm, the paper should make it clear how to reproduce that algorithm.
            \item If the contribution is primarily a new model architecture, the paper should describe the architecture clearly and fully.
            \item If the contribution is a new model (e.g., a large language model), then there should either be a way to access this model for reproducing the results or a way to reproduce the model (e.g., with an open-source dataset or instructions for how to construct the dataset).
            \item We recognize that reproducibility may be tricky in some cases, in which case authors are welcome to describe the particular way they provide for reproducibility. In the case of closed-source models, it may be that access to the model is limited in some way (e.g., to registered users), but it should be possible for other researchers to have some path to reproducing or verifying the results.
        \end{enumerate}
    \end{itemize}

\item {\bf Open access to data and code}
    \item[] Question: Does the paper provide open access to the data and code, with sufficient instructions to faithfully reproduce the main experimental results, as described in supplemental material?
    \item[] Answer: \answerNA{} % Replace by \answerYes{}, \answerNo{}, or \answerNA{}.
    \item[] Justification: This study is primarily theoretical and does not provide open access to the data nor code.
    \item[] Guidelines:
    \begin{itemize}
        \item The answer NA means that paper does not include experiments requiring code.
        \item Please see the NeurIPS code and data submission guidelines (\url{https://nips.cc/public/guides/CodeSubmissionPolicy}) for more details.
        \item While we encourage the release of code and data, we understand that this might not be possible, so “No” is an acceptable answer. Papers cannot be rejected simply for not including code, unless this is central to the contribution (e.g., for a new open-source benchmark).
        \item The instructions should contain the exact command and environment needed to run to reproduce the results. See the NeurIPS code and data submission guidelines (\url{https://nips.cc/public/guides/CodeSubmissionPolicy}) for more details.
        \item The authors should provide instructions on data access and preparation, including how to access the raw data, preprocessed data, intermediate data, and generated data, etc.
        \item The authors should provide scripts to reproduce all experimental results for the new proposed method and baselines. If only a subset of experiments are reproducible, they should state which ones are omitted from the script and why.
        \item At submission time, to preserve anonymity, the authors should release anonymized versions (if applicable).
        \item Providing as much information as possible in supplemental material (appended to the paper) is recommended, but including URLs to data and code is permitted.
    \end{itemize}

\item {\bf Experimental Setting/Details}
    \item[] Question: Does the paper specify all the training and test details (e.g., data splits, hyperparameters, how they were chosen, type of optimizer, etc.) necessary to understand the results?
    \item[] Answer: \answerNA{} % Replace by \answerYes{}, \answerNo{}, or \answerNA{}.
    \item[] Justification: This study is primarily theoretical and does not involve experiments.
    \item[] Guidelines: 
    \begin{itemize}
        \item The answer NA means that the paper does not include experiments.
        \item The experimental setting should be presented in the core of the paper to a level of detail that is necessary to appreciate the results and make sense of them.
        \item The full details can be provided either with the code, in appendix, or as supplemental material.
    \end{itemize}

\item {\bf Experiment Statistical Significance}
    \item[] Question: Does the paper report error bars suitably and correctly defined or other appropriate information about the statistical significance of the experiments?
    \item[] Answer: \answerNA{} % Replace by \answerYes{}, \answerNo{}, or \answerNA{}.
    \item[] Justification: This study is primarily theoretical and does not involve experiments.
    \item[] Guidelines:
    \begin{itemize}
        \item The answer NA means that the paper does not include experiments.
        \item The authors should answer "Yes" if the results are accompanied by error bars, confidence intervals, or statistical significance tests, at least for the experiments that support the main claims of the paper.
        \item The factors of variability that the error bars are capturing should be clearly stated (for example, train/test split, initialization, random drawing of some parameter, or overall run with given experimental conditions).
        \item The method for calculating the error bars should be explained (closed form formula, call to a library function, bootstrap, etc.)
        \item The assumptions made should be given (e.g., Normally distributed errors).
        \item It should be clear whether the error bar is the standard deviation or the standard error of the mean.
        \item It is OK to report 1-sigma error bars, but one should state it. The authors should preferably report a 2-sigma error bar than state that they have a 96\% CI, if the hypothesis of Normality of errors is not verified.
        \item For asymmetric distributions, the authors should be careful not to show in tables or figures symmetric error bars that would yield results that are out of range (e.g. negative error rates).
        \item If error bars are reported in tables or plots, The authors should explain in the text how they were calculated and reference the corresponding figures or tables in the text.
    \end{itemize}

\item {\bf Experiments Compute Resources}
    \item[] Question: For each experiment, does the paper provide sufficient information on the computer resources (type of compute workers, memory, time of execution) needed to reproduce the experiments?
    \item[] Answer: \answerNA{} % Replace by \answerYes{}, \answerNo{}, or \answerNA{}.
    \item[] Justification: This study is primarily theoretical and does not involve experiments.
    \item[] Guidelines:
    \begin{itemize}
        \item The answer NA means that the paper does not include experiments.
        \item The paper should indicate the type of compute workers CPU or GPU, internal cluster, or cloud provider, including relevant memory and storage.
        \item The paper should provide the amount of compute required for each of the individual experimental runs as well as estimate the total compute. 
        \item The paper should disclose whether the full research project required more compute than the experiments reported in the paper (e.g., preliminary or failed experiments that didn't make it into the paper). 
    \end{itemize}
    
\item {\bf Code Of Ethics}
    \item[] Question: Does the research conducted in the paper conform, in every respect, with the NeurIPS Code of Ethics \url{https://neurips.cc/public/EthicsGuidelines}?
    \item[] Answer: \answerYes{} % Replace by \answerYes{}, \answerNo{}, or \answerNA{}.
    \item[] Justification: This study is primarily theoretical and does not include contents that can violate the NeurIPS Code of Ethics.
    \item[] Guidelines:
    \begin{itemize}
        \item The answer NA means that the authors have not reviewed the NeurIPS Code of Ethics.
        \item If the authors answer No, they should explain the special circumstances that require a deviation from the Code of Ethics.
        \item The authors should make sure to preserve anonymity (e.g., if there is a special consideration due to laws or regulations in their jurisdiction).
    \end{itemize}

\item {\bf Broader Impacts}
    \item[] Question: Does the paper discuss both potential positive societal impacts and negative societal impacts of the work performed?
    \item[] Answer: \answerNA{} % Replace by \answerYes{}, \answerNo{}, or \answerNA{}.
    \item[] Justification: This study is primarily theoretical and does not have societal impacts. 
    % \todo{OK?}
    \item[] Guidelines:
    \begin{itemize}
        \item The answer NA means that there is no societal impact of the work performed.
        \item If the authors answer NA or No, they should explain why their work has no societal impact or why the paper does not address societal impact.
        \item Examples of negative societal impacts include potential malicious or unintended uses (e.g., disinformation, generating fake profiles, surveillance), fairness considerations (e.g., deployment of technologies that could make decisions that unfairly impact specific groups), privacy considerations, and security considerations.
        \item The conference expects that many papers will be foundational research and not tied to particular applications, let alone deployments. However, if there is a direct path to any negative applications, the authors should point it out. For example, it is legitimate to point out that an improvement in the quality of generative models could be used to generate deepfakes for disinformation. On the other hand, it is not needed to point out that a generic algorithm for optimizing neural networks could enable people to train models that generate Deepfakes faster.
        \item The authors should consider possible harms that could arise when the technology is being used as intended and functioning correctly, harms that could arise when the technology is being used as intended but gives incorrect results, and harms following from (intentional or unintentional) misuse of the technology.
        \item If there are negative societal impacts, the authors could also discuss possible mitigation strategies (e.g., gated release of models, providing defenses in addition to attacks, mechanisms for monitoring misuse, mechanisms to monitor how a system learns from feedback over time, improving the efficiency and accessibility of ML).
    \end{itemize}
    
\item {\bf Safeguards}
    \item[] Question: Does the paper describe safeguards that have been put in place for responsible release of data or models that have a high risk for misuse (e.g., pretrained language models, image generators, or scraped datasets)?
    \item[] Answer: \answerNA{} % Replace by \answerYes{}, \answerNo{}, or \answerNA{}.
    \item[] Justification: This study is primarily theoretical and does not involve experiments.
    \item[] Guidelines:
    \begin{itemize}
        \item The answer NA means that the paper poses no such risks.
        \item Released models that have a high risk for misuse or dual-use should be released with necessary safeguards to allow for controlled use of the model, for example by requiring that users adhere to usage guidelines or restrictions to access the model or implementing safety filters. 
        \item Datasets that have been scraped from the Internet could pose safety risks. The authors should describe how they avoided releasing unsafe images.
        \item We recognize that providing effective safeguards is challenging, and many papers do not require this, but we encourage authors to take this into account and make a best faith effort.
    \end{itemize}

\item {\bf Licenses for existing assets}
    \item[] Question: Are the creators or original owners of assets (e.g., code, data, models), used in the paper, properly credited and are the license and terms of use explicitly mentioned and properly respected?
    \item[] Answer: \answerNA{} % Replace by \answerYes{}, \answerNo{}, or \answerNA{}.
    \item[] Justification: This study is primarily theoretical and does not involve experiments using existing assets.
    \item[] Guidelines:
    \begin{itemize}
        \item The answer NA means that the paper does not use existing assets.
        \item The authors should cite the original paper that produced the code package or dataset.
        \item The authors should state which version of the asset is used and, if possible, include a URL.
        \item The name of the license (e.g., CC-BY 4.0) should be included for each asset.
        \item For scraped data from a particular source (e.g., website), the copyright and terms of service of that source should be provided.
        \item If assets are released, the license, copyright information, and terms of use in the package should be provided. For popular datasets, \url{paperswithcode.com/datasets} has curated licenses for some datasets. Their licensing guide can help determine the license of a dataset.
        \item For existing datasets that are re-packaged, both the original license and the license of the derived asset (if it has changed) should be provided.
        \item If this information is not available online, the authors are encouraged to reach out to the asset's creators.
    \end{itemize}

\item {\bf New Assets}
    \item[] Question: Are new assets introduced in the paper well documented and is the documentation provided alongside the assets?
    \item[] Answer: \answerNA{} % Replace by \answerYes{}, \answerNo{}, or \answerNA{}.
    \item[] Justification: This study is primarily theoretical and does not involve any assets.
    \item[] Guidelines:
    \begin{itemize}
        \item The answer NA means that the paper does not release new assets.
        \item Researchers should communicate the details of the dataset/code/model as part of their submissions via structured templates. This includes details about training, license, limitations, etc. 
        \item The paper should discuss whether and how consent was obtained from people whose asset is used.
        \item At submission time, remember to anonymize your assets (if applicable). You can either create an anonymized URL or include an anonymized zip file.
    \end{itemize}

\item {\bf Crowdsourcing and Research with Human Subjects}
    \item[] Question: For crowdsourcing experiments and research with human subjects, does the paper include the full text of instructions given to participants and screenshots, if applicable, as well as details about compensation (if any)? 
    \item[] Answer: \answerNA{} % Replace by \answerYes{}, \answerNo{}, or \answerNA{}.
    \item[] Justification: This study is primarily theoretical and does not involve crowdsourcing and human subjects.
    \item[] Guidelines:
    \begin{itemize}
        \item The answer NA means that the paper does not involve crowdsourcing nor research with human subjects.
        \item Including this information in the supplemental material is fine, but if the main contribution of the paper involves human subjects, then as much detail as possible should be included in the main paper. 
        \item According to the NeurIPS Code of Ethics, workers involved in data collection, curation, or other labor should be paid at least the minimum wage in the country of the data collector. 
    \end{itemize}

\item {\bf Institutional Review Board (IRB) Approvals or Equivalent for Research with Human Subjects}
    \item[] Question: Does the paper describe potential risks incurred by study participants, whether such risks were disclosed to the subjects, and whether Institutional Review Board (IRB) approvals (or an equivalent approval/review based on the requirements of your country or institution) were obtained?
    \item[] Answer: \answerNA{} % Replace by \answerYes{}, \answerNo{}, or \answerNA{}.
    \item[] Justification: This study is primarily theoretical and does not involve study participants.
    \item[] Guidelines:
    \begin{itemize}
        \item The answer NA means that the paper does not involve crowdsourcing nor research with human subjects.
        \item Depending on the country in which research is conducted, IRB approval (or equivalent) may be required for any human subjects research. If you obtained IRB approval, you should clearly state this in the paper. 
        \item We recognize that the procedures for this may vary significantly between institutions and locations, and we expect authors to adhere to the NeurIPS Code of Ethics and the guidelines for their institution. 
        \item For initial submissions, do not include any information that would break anonymity (if applicable), such as the institution conducting the review.
    \end{itemize}

\end{enumerate}

\end{document}